\documentclass{article}

\usepackage{microtype}
\usepackage{comment}

\usepackage{paralist,enumitem}
\usepackage[table]{xcolor}
\usepackage{graphicx}
\usepackage{subcaption}
\usepackage{booktabs} 
\usepackage{amsmath}
\usepackage{amsfonts}
\usepackage{nccmath}
\usepackage{amsthm}
\newtheorem{lemma}{Lemma}
\usepackage{textcomp}
\usepackage{upgreek}
\usepackage{diagbox} 
\usepackage{makecell}
\usepackage{wrapfig}

\usepackage[normalem]{ulem}

\usepackage[table]{xcolor}
\usepackage{booktabs, multirow} 
\usepackage{siunitx}
\usepackage{tikz}
\usetikzlibrary{arrows.meta,positioning,calc}

\usepackage{float}
\usepackage{makecell} 

\usepackage{listings}
\usepackage{xcolor}

\usepackage{bbm}
\usepackage{mathtools}
\usepackage{multirow}

\newtheorem{definition}{Definition}

\newtheorem{proposition}{Proposition}
\newcommand{\mysubsection}[1]{\medskip\noindent\textbf{#1}}

\usepackage{amssymb} 
\usepackage{amsmath} 
\usepackage[T1]{fontenc}
\newcommand{\z}{\textbf{z}}
\newcommand{\x}{\textbf{x}}
\newcommand{\y}{\textbf{y}}

\newcommand{\yes}{\textit{Yes}}
\newcommand{\no}{\textit{No}}

\usepackage{hyperref}
\usepackage{xspace}

\usepackage{algorithm}

\usepackage{algpseudocode}

\DeclareMathOperator*{\argmax}{argmax} 

\usepackage{iclr2026_conference,times}
\iclrfinalcopy

\usepackage[utf8]{inputenc} 
\usepackage[T1]{fontenc}    
\usepackage{hyperref}       
\usepackage{url}            
\usepackage{booktabs}       
\usepackage{amsfonts}       
\usepackage{nicefrac}       
\usepackage{microtype}      
\usepackage{xcolor}         

\title{\centering\textbf{Formal} Mechanistic Interpretability: \\ Automated Circuit Discovery with Provable Guarantees}

\author{%
  Itamar Hadad$^{1}$, Guy Katz$^{1}$, Shahaf Bassan$^{1}$ \\
  School of Computer Science and Engineering,
  The Hebrew University of Jerusalem$^{1}$\\
\texttt{itamar.hadad@mail.huji.ac.il, g.katz@mail.huji.ac.il},\\{\texttt{shahaf.bassan@mail.huji.ac.il}} \\
}

\begin{document}

\maketitle

\begin{abstract}
\emph{Automated circuit discovery} is a central tool in mechanistic interpretability for identifying the internal components of neural networks responsible for specific behaviors. While prior methods have made significant progress, they typically depend on heuristics or approximations and do not offer provable guarantees over continuous input domains for the resulting circuits. In this work, we leverage recent advances in neural network verification to propose a suite of automated algorithms that yield circuits with \emph{provable guarantees}. We focus on three types of guarantees: \begin{inparaenum}[(i)] \item \emph{input domain robustness}, ensuring the circuit agrees with the model across a continuous input region; \item  \emph{robust patching}, certifying circuit alignment under continuous patching perturbations; and (3) \emph{minimality}, formalizing and capturing a wide array of various notions of succinctness\end{inparaenum}. Interestingly, we uncover a diverse set of novel theoretical connections among these three families of guarantees, with critical implications for the convergence of our algorithms. Finally, we conduct experiments with state-of-the-art verifiers on various vision models, showing that our algorithms yield circuits with substantially stronger robustness guarantees than standard circuit discovery methods --- establishing a principled foundation for provable circuit discovery.


\end{abstract}

\section{Introduction}

The rapid rise of neural networks, driven by transformative architectures such as Transformers, has reshaped both theory and applications. 
Alongside this revolution, \emph{interpretability} has become a central research direction \citep{zhang2021survey, rauker2023toward}; and more recently, efforts have focused on \emph{mechanistic interpretability} (MI), which aims to reverse-engineer neural networks into human-understandable components and functional modules \citep{olah2020zoom, olah2022mechanistic, zhao2024explainability}. MI offers fine-grained interpretability that serves various purposes, including transparency, trustworthiness, safety, and other applications \citep{bereska2024mechanistic, zhou2024role}.


A central open challenge in MI is \emph{circuit discovery} \citep{olah2020zoom}
, which seeks to identify subgraphs within neural networks, called \emph{circuits}, that drive specific model behaviors. Recent works propose varied approaches \citep{wang2023interpretability,conmy2023towards,rajaram2024automatic}
, differing by domain (text vs. vision), patching methods (zero, mean, sampling), and the balance between manual and automated steps. However, despite substantial progress, most current circuit discovery algorithms remain heuristic or approximate, without rigorous guarantees of circuit faithfulness, particularly under \emph{continuous} perturbation domains~\citep{adolfi2024computational,miller2024transformer,meloux2025everything}. This limitation is concerning: even small perturbations can break circuit faithfulness, and since circuit discovery is tied to \emph{safety} considerations \citep{bereska2024mechanistic}, such guarantees are essential.




\textbf{Our Contributions.} To address these concerns, we introduce a novel algorithmic framework that builds on recent and exciting advances in the emerging field of \emph{neural network verification}~\citep{wang2021beta, zhou2024scalable,brix2024fifth, kotha2023provably, ferraricomplete}, enabling the derivation of circuits with provable guarantees across continuous domains of interest. 

\subsection{Theoretical Contributions}
\begin{itemize}
    \item We formalize a set of novel provable guarantees for circuit discovery that hold strictly over \emph{entire continuous domains}. These include: \begin{inparaenum}[(i)]
\item \emph{input-domain robustness}, ensuring circuits remain faithful across continuous input regions;
\item \emph{patching-domain robustness}, addressing criticisms of sampling-based ablation; and
\item a broad family of \emph{minimality guarantees}, extending earlier notions to include \emph{quasi}-, \emph{local}-, \emph{subset}-, and \emph{cardinal}-minimality.
\end{inparaenum}
    \item We present novel theoretical proofs that reveal strong connections between these three families of guarantees. At the core is the \emph{circuit monotonicity} property, which underpins minimality guarantees for optimization algorithms and clarifies the conditions under which they hold. We also establish a crucial \emph{duality} between circuits and small ``blocking'' subgraphs, enabling the efficient discovery of circuits with much stronger minimality guarantees.
\end{itemize}

\subsection{Empirical Contributions}
\begin{itemize}
    \item We propose a framework for encoding both input- and patching-robustness guarantees in neural networks and their circuits, using a technical \emph{siamese encoding} of the network with its associated circuit or patching-domain, which enables certifying the desired properties.
    \item We introduce a set of novel automated algorithms that preserve the invariants of the robustness guarantees and prove that each converges to circuits meeting our various minimality criteria. These algorithms enable a trade-off between computational cost and the degree of minimality achieved in the resulting circuits.
    \item We conduct extensive experiments with $\alpha$–$\beta$-CROWN, the state-of-the-art in neural network verification, to derive circuits with input, patching, and minimality guarantees. These are evaluated on standard neural vision models commonly used in the neural network verification literature. Compared to sampling-based approaches, our framework certifies robustness, whereas even infinitesimal perturbations break the faithfulness of sampling-based circuits.

\end{itemize}


        
        
    

Overall, we believe these contributions mark a significant step forward in establishing both theoretical and empirical foundations for circuit discovery with provable guarantees, paving the way for a wide range of future research directions.


\section{Preliminaries}

\subsection{Notation}

Let $f_G:\mathbb{R}^n \to \mathbb{R}^d$ denote a neural network, with $G := \langle V,E\rangle$ representing its computation graph. The precise structure of $G$ --- that is, what each node and edge correspond to (e.g., neurons, attention heads, positional embeddings, convolution filters) --- is determined both by the network’s architecture and by the level of granularity chosen by the user. A \emph{circuit} $C$ is defined as a subgraph $C \subseteq G$, consisting of components hypothesized to drive the model’s behavior on a task. Each such circuit naturally induces a function $f_C:\mathbb{R}^n \to \mathbb{R}^d$, obtained by restricting $f_G$ to the components in $C$.

In circuit discovery, the complement $\overline{C} := G \setminus C$ is often fixed to constant activations, a practice known as \emph{patching}, with variants such as zero-patching or mean-patching. 
Let $f_G$ have $L\in\mathbb{N}$ layers with activation spaces $V_i$ for $i\in[L]$, and let $\mathcal{X}\subseteq\mathbb{R}^n$ be an input domain. For $\mathbf{x}\in\mathcal{X}$, denote the activations at layer $i$ as $h_i(\mathbf{x})\in V_i$, and the reachable activation space as $\mathcal{H}_G(\mathcal{X})=\{(h_1(\mathbf{x}),\dots,h_L(\mathbf{x})):\mathbf{x}\in\mathcal{X}\}\subseteq V_1\times\cdots\times V_L$.
We write $\mathcal{H}_{\overline{C}}(\mathcal{X})$ for the \emph{partial} reachable activation space over $\overline{C}$.
For a partial activation assignment $\alpha \in \mathcal{H}_{\overline{C}}(\mathcal{X})$, we write $f_C(\x \mid \overline{C}=\alpha)$ to denote inference through $f_C(\x)$, constructed from the components of $C$, while fixing the activations of $\overline{C}$ to the values in $\alpha$.

\subsection{Neural Network Verification}

Consider a generic neural network $f_G$ with arbitrary element-wise nonlinear activations. Many tools exist to formally verify properties of such networks, with adversarial robustness being the most studied~\citep{brix2024fifth}. Formally, the neural network verification problem can be stated as follows:


\noindent\fbox{%
	\parbox{0.97\columnwidth}{%
		\mysubsection{Neural Network Verification  (Problem Statement)}:
		
		\textbf{Input}: A neural network model $f_G$, for which $\y:=f_G(\x)$, with an input specification $\psi_\text{in}(\x)$, and an \emph{unsafe} output specification $\psi_\text{out}(\mathbf{y})$.
		
		\textbf{Output}: 
		\no{}, if there exists some $\x\in\mathbb{R}^n$ such that $\psi_\text{in}(\x)$ and $\psi_\text{out}(\y)$  both hold, and \yes{} otherwise.
	}%
}   

A variety of off-the-shelf neural network verifiers have been developed~\citep{brix2024fifth}. When the input constraints $\psi_{in}(\x)$, output constraints $\psi_{out}(\y)$, and model $f_G$ are piecewise-linear (e.g., ReLU activations), the verification problem can be solved exactly~\citep{katz2017reluplex}. In practice, it is often relaxed for efficiency, and the output is enclosed within bounds that account for approximation errors.





\section{Provable Guarantees for Circuit Discovery}
\label{input_patching_guarantees_sec}

\subsection{Input domain guarantees}
\label{subsec:input_domain_guarantees}

In this subsection, we introduce the first set of guarantees that our algorithms are designed to satisfy --- specifically, \emph{provable robustness} against input perturbations.
A central challenge when evaluating a circuit’s faithfulness is that, even if it matches the model at one point or a finite set of points, small input perturbations can quickly break this agreement.
To overcome this limitation, our first definition considers circuits that are not only faithful to the model on a discrete set of points but are also \emph{provably robust} across an entire infinite \emph{continuous set} of inputs.


\begin{definition}[Input‐robust circuit]
\label{circuit_robustness_Def}
Given a neural network $f_G$ and a union of continuous domains $\mathcal{Z} \subseteq \mathbb{R}^n$ (e.g., a union of $\ell_p$-balls of radius $\epsilon_p$ around a set of discrete points $\{\x_j\}_{j=1}^k$), we say that a subgraph $C \subseteq G$ is an \emph{input-robust circuit} with respect to $\langle f_G, \mathcal{Z} \rangle$, a fixed patching vector $\alpha$ applied to the complement set $\overline{C} := G \setminus C$, and a tolerance level $\delta\in\mathbb{R}^+$, if and only if:
\begin{equation}
        \begin{aligned}
        \forall \z \in \mathcal{Z}:=\bigcup_{j=1}^k\mathcal{B}^p_{\epsilon_p}(\x_j). \quad \lVert f_C(\z \ | \ \overline{C}=\alpha) - f_G(\z)) \rVert_p \leq \delta, \\
        s.t. \ \mathcal{B}^p_{\epsilon_p}(\x_j):=\{\z_j\in\mathbb{R}^n \ | \left\lVert \z_j -\x_j \right\lVert_p\leq \epsilon_p\} \quad\quad
        \end{aligned}
\end{equation}




\end{definition}

\begin{wrapfigure}{t}{0.45\textwidth}
   \vspace{-0.5cm}  
   \begin{center}
        \includegraphics[width=0.45\textwidth]{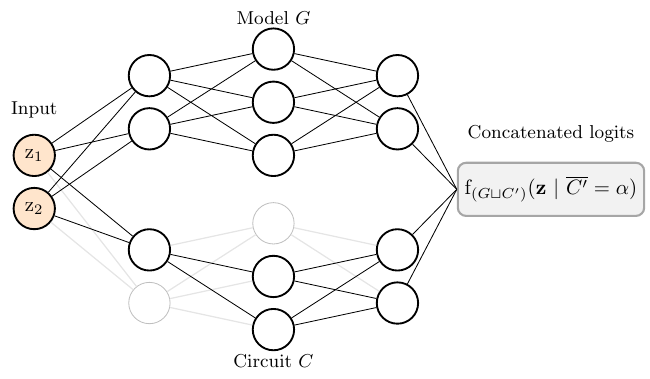}
       \caption{Illustration of the Siamese encoding for certifying the guarantee in Def.~\ref{circuit_robustness_Def}.}
        \label{fig:SiameseNetFigure}
   \end{center}
    \vspace{-0.6cm}  
\end{wrapfigure}

\textbf{Certifying circuit input robustness via verification.} Neural network verification properties are typically encoded over a \emph{single} model $f_G$, while the circuit input robustness property (Def.~\ref{circuit_robustness_Def}) requires evaluating both the model graph $G$ and a circuit $C\subseteq G$. To address this, we introduce a novel method to certify the property in Def.~\ref{circuit_robustness_Def} using a \emph{siamese encoding} of the network $f_G$. 
Specifically, we duplicate the circuit $C' := C$ and ``stack'' it with $G$ to form a combined model $G \sqcup C'$ with a \emph{shared} input layer. This induces a function $f_{(G \sqcup C')}: \mathbb{R}^n \to \mathbb{R}^{2d}$. The activations of the non-circuit components in the duplicate, $\overline{C'} := G \setminus C'$, are fixed to a constant $\alpha$, so for any $\z \in \mathbb{R}^n$ inference is $f_{(G \sqcup C')}(\z \ | \ \overline{C'}=\alpha)$, enabling direct certification of Def.~\ref{circuit_robustness_Def} over the \emph{combined} model.
The input constraint $\psi_{in}(\x)$ bounds $\x$ within $\mathcal{Z}$, while the output constraint $\psi_{out}(\y)$ bounds the distance measure between the logits of $C'$ and $G$. Further details of this encoding appear in Appendix~\ref{app:input_robusness_exp}.

\subsection{Patching domain guarantees}
\label{subsec:patching_domaing_guarantees}

A central challenge in circuit discovery lies in deciding how to assign values to the complementary activations of a circuit --- a process known as \emph{patching}. The goal of patching is to replace these values to isolate the circuit’s contribution. Prior work has examined several approaches, including zero-patching, which has been criticized as arbitrary since such values may be out-of-distribution if unseen during training~\citep{conmy2023towards, wang2023interpretability}. Other strategies include mean-value patching~\citep{wang2023interpretability} and sampling from discrete input distributions~\citep{conmy2023towards}. Yet, these methods still rely on evaluating complementary activations over a \emph{discrete} set of sampled inputs, which may fail to generalize in \emph{continuous} domains: even small perturbations in the patching scheme can undermine a circuit’s faithfulness. By analogy to \emph{input}-robustness, we introduce \emph{patching}-robustness: the requirement that a circuit preserve its faithfulness across an entire provable range of feasible perturbations to the complementary activations over a continuous input domain.

\begin{definition}[Patching‐robust circuit]
\label{patching_robustness_property}
Given a neural network $f_G$, 
a continuous input domain $\mathcal{Z}\subseteq \mathbb{R}^n$,
and a reference set of inputs $\{\x_j\}_{j=1}^k \subseteq \mathcal{X}$, 
we say that $C\subseteq G$ is a \emph{patching-robust circuit} with respect to $\langle f_G, \mathcal{Z} \rangle$ and a tolerance level $\delta\in\mathbb{R}^+$, iff for every $\x_j$ in $\{\x_j\}_{j=1}^k$:
\begin{equation}
    \forall \alpha \in \mathcal{H}_{\overline{C}}(\mathcal{Z}). 
    \quad \lVert f_C(\x_j \mid \overline{C}=\alpha) - f_G(\x_j) \rVert_p \leq \delta
\end{equation}


\end{definition}





\begin{wrapfigure}{t}{0.45\textwidth}
   \vspace{-0.5cm}  
   \begin{center}
        \includegraphics[width=0.45\textwidth]{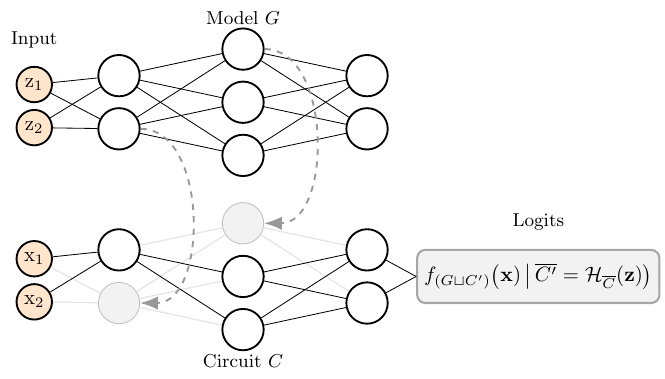}
       \caption{Illustration of the Siamese encoding for certifying the guarantee in Def.~\ref{patching_robustness_property}. Gray neurons denote non-circuit units whose activations are patched with those of the full model.}
        \label{fig:PatchingSiameseNetFigure}
    \end{center}
    \vspace{-0.5cm}  
\end{wrapfigure}

\textbf{Certifying circuit patching robustness via verification.} 
Analogous to input robustness, we introduce here a novel method to certify the property in Def~\ref{patching_robustness_property}, using a siamese encoding of $f_G$. Concretely, we duplicate the circuit $C' := C$ and ``stack'' it with $G$, but now $G$ and $C'$ have disjoint input domains, yielding $f_{(G \sqcup C')}: \mathbb{R}^{2n} \to \mathbb{R}^d$. We connect $C'$ and $G$ by fixing the \emph{activations} of $\overline{C'}$ to those attained by $\mathcal{H}_{\overline{C}}(\z)$ 
when evaluating $f_G(\z)$. Thus, inference for $(\x,\z) \in \mathbb{R}^{2n}$ is given by $f_{(G \sqcup C')}(\x \ | \ \overline{C'}=\mathcal{H}_{\overline{C}}(\z))$. We then set input constraints to bound $\z$ within $\mathcal{Z}$, and output constraints to limit the distance $\lVert \cdot \rVert_p$
between the logits of $C'$ and $G$. Further details appear in Appendix~\ref{app:patching_robusness_exp}.
We remark that input robustness and patching robustness can also be certified simultaneously within a single verification query by extending the siamese encoding (see Appendix~\ref{app:exploring_minimality_exp}).

\section{From Circuits to Minimal Circuits}
\label{sec:circuits_to_min_circuits}

A common convention in the literature is that \emph{smaller} circuits (i.e., lower circuit size) are generally considered more interpretable than larger ones~\citep{muellermib,adolfi2024computational,chowdhary2025kmshcunmaskingminimallysufficient,wang2023interpretability,bhaskar2024finding,shi2024hypothesis,conmy2023towards}. This makes \emph{minimality} an important additional guarantee~\citep{adolfi2024computational,chowdhary2025kmshcunmaskingminimallysufficient,shi2024hypothesis,muellermib}. While many works have pursued minimal circuits, recent studies highlight that ``minimality'' itself can take different forms~\citep{adolfi2024computational}, ranging from the weak notion of \emph{quasi}-minimality to the strong notion of \emph{cardinal}-minimality. In this work, we extend this spectrum to four main forms and provide rigorous proofs linking them to different optimization algorithms.

\subsection{Minimality guarantees}

\begin{wrapfigure}{r}{0.45\textwidth}
\centering
\includegraphics[width=0.45\textwidth]{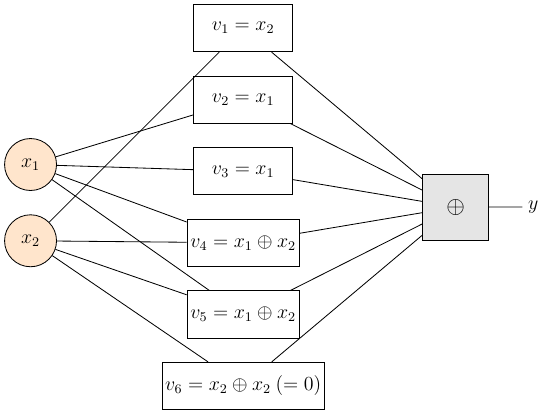}
\caption{A toy Boolean circuit.}
\label{fig:toy_minimality_wrap}
\end{wrapfigure}

In this subsection, we introduce four central notions of minimality. Since minimality must be defined relative to what qualifies as a ``valid’’ or faithful circuit, we begin by specifying a \emph{general faithfulness predicate}, $\Phi(C,G)$. Given a circuit $C \subseteq G$ within the computation graph $G$ of a model $f_G$, $\Phi(C,G)$ returns \emph{True} if $C$ is faithful under some condition of interest, and \emph{False} otherwise. Instances of $\Phi$ may include standard sampling-based measures used in circuit discovery, such as requiring the mean-squared error or KL-divergence between $C$ and $G$ to remain below a threshold $\tau$~\citep{conmy2023towards}. Alternatively, $\Phi$ can reflect our \emph{provable} measures that hold across continuous domains (Section~\ref{input_patching_guarantees_sec}), e.g., defining $\Phi(C,G)$ to require input and/or patching robustness.
Consider, for example, a toy Boolean circuit presented in Fig.~\ref{fig:toy_minimality_wrap} as a running example. For simplicity, we assume that each node in the circuit corresponds to a component of the model. The network takes inputs $(x_1, x_2) \in \{0,1\}^2$, 
 whose outputs are aggregated by XOR, yielding $f_G(x_1,x_2)=v_1 \oplus v_2 \oplus \cdots \oplus v_6 = x_2$, 
 since
\[
\begin{aligned}
f_G(x_1,x_2) 
&:= x_2 \oplus x_1 \oplus x_1 
   \oplus (x_1\!\oplus\!x_2) 
   \oplus (x_1\!\oplus\!x_2) 
   \oplus (x_2\!\oplus\!x_2) \\
&= x_2 
   \oplus \underbrace{(x_1\!\oplus\!x_1)}_{=0}
   \oplus \underbrace{[(x_1\!\oplus\!x_2)\oplus(x_1\!\oplus\!x_2)]}_{=0}
   \oplus \underbrace{(x_2\!\oplus\!x_2)}_{=0}
   = x_2.
\end{aligned}
\]

 For simplicity, we set the faithfulness predicate to consider the very strong condition of strict equality between the circuit and the model for \emph{every} boolean input. In other words, for all $x_1$ and $x_2$ in $\{0,1\}^2$, it holds that $\Phi(C,G) := f_C(x_1,x_2) = f_G(x_1,x_2) =x_2$. A more detailed description of this example appears in Appendix~\ref{app:minimality_example}.
 

We begin with the weakest form of minimality and proceed step by step until we reach the strongest. The first notion, \emph{quasi}-minimality, introduced by~\citep{adolfi2024computational}, defines $C$ as a subset that includes a ``breaking point'': \emph{some} component that, when removed, breaks the circuit's faithfulness:






\begin{definition}[Quasi‐Minimal Circuit]
\label{quasi_minimal_circuit}
Given $G$, a \emph{quasi-minimal} circuit $C\subseteq G$ concerning $\Phi$, is a circuit for which $\Phi(C,G)$ holds (``$C$ is faithful''), and there exists some element (i.e., node/edge) $i$ in $C$ for which $\Phi(C\setminus \{i\},G)$ is false.
\end{definition}

In our running example \(C=\{v_1,\dots,v_5\}\) is quasi-minimal: it satisfies \(v_1 \oplus \cdots \oplus v_5 = f_G\), but removing \emph{some} essential node (e.g., \(v_1\)) breaks this equality. However, we observe that the \emph{quasi}-minimality notion of~\citep{adolfi2024computational} can be strengthened: instead of requiring a single breaking point, one can demand that \emph{every} component serves as one. Following conventions in the optimization literature, we refer to this stronger notion as \emph{local} minimality.

\begin{definition}[Locally‐Minimal Circuit]
\label{local_minimal_circuit}
Given $G$, a \emph{locally-minimal} circuit $C\subseteq G$ concerning $\Phi$, is a circuit for which $\Phi(C,G)$ holds, and for any element $i$ in $C$, $\Phi(C\setminus \{i\},G)$ is false.
\end{definition}




In our example, \(C=\{v_1,v_2,v_3\}\) is locally minimal, as the predicate holds for $\{v_1,v_2,v_3\}$, but does not hold for $\{v_1,v_2\}$, $\{v_1,v_2\}$, and $\{v_2,v_3\}$. However, while local-minimality is stronger than quasi-minimality, it still has a limitation. Although removing any \emph{single} component from the circuit $C$ breaks it, removing \emph{multiple} components may still leave $C$ valid, giving a misleading sense of minimality. For instance, in our locally minimal circuit $C=\{v_1,v_2,v_3\}$, observe that while removing any single component (i.e., $v_1$, $v_2$, or $v_3$) breaks faithfulness, removing the \emph{pair} $\{v_2,v_3\}$ (i.e., keeping only $v_1$) nonetheless results in a faithful circuit. To address this, we define a stronger notion: \emph{subset-minimality}, which requires every \emph{subset} of components to be a breaking point.


\begin{definition}[Subset‐Minimal Circuit]
\label{subset_minimal}
Given $G$, a \emph{subset-minimal} circuit $C\subseteq G$ concerning $\Phi$, is a circuit for which $\Phi(C,G)$ holds true, and for any subgraph $S\subsetneq C$, $\Phi(C\setminus S,G)$ is false.
\end{definition}




In the running example, $C= {\{v_2,v_4\}}$ is a subset-minimal circuit, as \(v_2\oplus v_4 = x_1\oplus(x_1\oplus x_2)=x_2\),  since every strict subset ($\{v_2\}, \{v_4\}$) fails to compute \(f_G\). We note that even this significantly stronger notion of subset-minimality does not necessarily yield subsets of the absolute lowest cardinality. To address this limitation, the final notion introduces the strongest form: a \emph{cardinally-minimal} circuit, which corresponds to the global optimum of minimality.

\begin{definition}[Cardinally‐Minimal Circuit]
\label{cardinally_minimal_circuit}
Given $G$, a \emph{cardinally-minimal} circuit $C\subseteq G$ concerning $\Phi$ is a circuit for which $\Phi(C,G)$ is true, and has the lowest cardinality $|C|$ (i.e., there is no circuit $C'\subseteq G$ for which $\Phi(C',G)$ is true and $|C'|<|C|$).
\end{definition}
%
Here, $\{v_1\}$ is cardinally minimal, since $v_1 = x_2$  is functionally equivalent to $f_G$ and no smaller faithful circuit exists.



\subsection{Algorithms for local and quasi minimal circuits}

In this subsection, we present optimization algorithms for discovering circuits with provable guarantees. Building on prior circuit discovery frameworks, we show how modifying or validating optimization objectives changes the resulting guarantees. We also establish theoretical links between objectives based on continuous robustness guarantees and different notions of minimality. We begin with a standard greedy algorithm (Algorithm~\ref{alg:general_metric}):

\begin{algorithm}[H]
\caption{Greedy Circuit Discovery Iterative Search}
\label{alg:general_metric}
\begin{algorithmic}[1] \State 
\textbf{Input} Model $f_G$, circuit faithfulness predicate $\Phi$
\State $C \gets  G \text{ under some given element ordering (e.g., reverse topological sort)}$  
\ForAll{\(i \in C\)}
  \If{$\Phi(C\setminus \{i\}, G)$}
     \State \(C \gets C \setminus \{i\}\) 
  \EndIf
\EndFor
\State \Return \(C\)
\end{algorithmic}
\end{algorithm}



Algorithm~\ref{alg:general_metric} describes a standard greedy procedure that starts from the full model graph $G$ and iteratively removes components as long as the faithfulness predicate holds. While this structure underlies many circuit discovery methods, we also consider an exhaustive variant, given as Algorithm~\ref{alg:exhaustive_metric_iterative} in Appendix~\ref{app:quasi_and_illustration}, which repeatedly iterates over the remaining components and attempts further removals, stopping only when every \emph{single} component is critical. This guarantees the following:



\begin{proposition}
Given any model $f_G$ and faithfulness predicate $\Phi$, Algorithm~\ref{alg:general_metric} visits a \emph{quasi-minimal} circuit and Algorithm~\ref{alg:exhaustive_metric_iterative} converges to a \emph{locally-minimal} circuit $C$, both with respect to $\Phi$.
\end{proposition}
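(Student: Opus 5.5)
The plan is to argue directly from the definitions, treating $\Phi$ as an arbitrary black-box predicate. Nothing beyond finiteness of $G$ is needed; in particular we do \emph{not} assume circuit monotonicity. We assume, as is implicit in both algorithms, that $\Phi(G,G)$ holds, so the starting circuit is faithful. Otherwise nothing can be guaranteed.

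\textbf{Step 1: the invariant.} For both algorithms, induct on the number of executed removals to show that the current $C$ always satisfies $\Phi(C,G)$. The base case is $C=G$. Each update $C\gets C\setminus\{i\}$ is performed only after checking $\Phi(C\setminus\{i\},G)$, so faithfulness is preserved. Hence every circuit visited or returned is faithful.

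\textbf{Step 2: quasi-minimality for Algorithm~\ref{alg:general_metric}.} Here ``visits'' is read as: some circuit held at some point of the run is quasi-minimal. Consider the first iteration in which the test $\Phi(C\setminus\{i\},G)$ fails, with $C$ the current circuit. By Step~1, $\Phi(C,G)$ holds, and the element $i\in C$ is a breaking point, so $C$ is quasi-minimal by Def.~\ref{quasi_minimal_circuit}.

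If no test ever fails, every element is removed and the algorithm ends at $C=\emptyset$ with $\Phi(\emptyset,G)$ true. The previously held circuit $\{i\}$ is then faithful but not quasi-minimal, since removing $i$ keeps it faithful. This degenerate case is the main obstacle: quasi-minimality is vacuously false for the empty circuit. I would handle it by adopting the natural convention that, in this case, the empty circuit is the (trivially minimal) output. Alternatively, one can assume $\Phi(\emptyset,G)$ is false, in which case the last removal test must fail and the argument above applies.

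\textbf{Step 3: local minimality for Algorithm~\ref{alg:exhaustive_metric_iterative}.} \emph{Termination:} each full pass over the remaining components either removes at least one element or removes none. In the latter case the algorithm halts, so there are at most $|G|+1$ passes. \emph{Correctness:} at halting, the final pass attempted every $i\in C$ on this same unchanged $C$ and found $\Phi(C\setminus\{i\},G)$ false for each. Together with $\Phi(C,G)$ from Step~1, this is exactly Def.~\ref{local_minimal_circuit}. The empty circuit is again handled by the convention above, under which it is vacuously locally minimal.
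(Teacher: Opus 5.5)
Your proposal is correct and follows essentially the same route as the paper: the faithfulness invariant starting from $\Phi(G,G)$, the first failed removal test as the witness of quasi-minimality (under the assumption $\neg\Phi(\emptyset,G)$, which the paper also adopts), and the unchanged final pass for local minimality. One correction: your ``convention'' for the empty circuit does not rescue quasi-minimality, because that definition requires an element of $C$, so $\neg\Phi(\emptyset,G)$ is the right fix there; for local minimality no convention is needed at all, since $\emptyset$ satisfies the definition vacuously.
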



We note that each evaluation of $\Phi(C\setminus \{i\}, G)$ may be costly, depending on the predicate $\Phi$ (e.g., certifying input or patching robustness). To mitigate this, one may use a lighter notion of minimality --- the \emph{quasi}-minimal circuits of~\citep{adolfi2024computational} --- which require only a logarithmic, rather than linear, number of invocations.
For completeness, we formally present and extend their binary-search procedure, provided as Algorithm~\ref{alg:qmsc} in Appendix~\ref{app:quasi_and_illustration}. Algorithm~\ref{alg:qmsc} follows a procedure similar to Algorithm~\ref{alg:general_metric}, but employs a binary rather than iterative search. As a result, it yields a weaker notion of minimal circuits, while requiring fewer queries. We can therefore establish the following proposition:

\begin{proposition}
While Algorithm~\ref{alg:qmsc} converges to a quasi-minimal circuit using $\mathcal{O}(\log|G|)$ evaluations of $\Phi(C,G)$~\citep{adolfi2024computational}, Algorithm~\ref{alg:general_metric} visits a quasi-minimal circuit with $\mathcal{O}(|G|)$ evaluations, and Algorithm~\ref{alg:exhaustive_metric_iterative} converges to a locally-minimal circuit using $\mathcal{O}(|G|^2)$ evaluations.
\end{proposition}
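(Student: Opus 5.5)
The three query counts are obtained separately by counting how often each algorithm calls $\Phi$. The two upper bounds for Algorithm~\ref{alg:general_metric} and Algorithm~\ref{alg:exhaustive_metric_iterative} follow from a direct count. The $\mathcal{O}(\log|G|)$ bound for Algorithm~\ref{alg:qmsc} is inherited from \citep{adolfi2024computational}, with a short check that our extension keeps the binary-search invariant. Throughout, assume $\Phi(G,G)$ holds. This is immediate for all our predicates, since $f_G$ agrees with itself and the patched complement is empty.

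\textbf{Algorithm~\ref{alg:qmsc}.} Fix an ordering $e_1,\dots,e_m$ of the $m=|G|$ elements and let $C_t=\{e_{t+1},\dots,e_m\}$ denote the suffix obtained after deleting the first $t$ elements. We know $\Phi(C_0,G)$ is true. If $\Phi(C_m,G)=\Phi(\emptyset,G)$ is true, then one additional query settles the degenerate case. Otherwise, binary search maintains the invariant $\Phi(C_{lo},G)=\text{True}$ and $\Phi(C_{hi},G)=\text{False}$ with $lo<hi$. Each step halves $hi-lo$ using one query, so after $\lceil\log_2 m\rceil$ queries we reach $hi=lo+1$. Then $C_{lo}$ is faithful and removing the single element $e_{lo+1}$ breaks it, which is exactly Definition~\ref{quasi_minimal_circuit}. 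This step only needs to be checked against our extended pseudocode; the argument is that of \citep{adolfi2024computational}.

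\textbf{Algorithm~\ref{alg:general_metric}.} The loop runs once over the $|G|$ elements in the fixed order and evaluates $\Phi$ exactly once per element, giving $\mathcal{O}(|G|)$ calls. For quasi-minimality, note that the current $C$ is faithful at every step. At the first iteration where $\Phi(C\setminus\{i\},G)$ fails, the current $C$ is faithful and $i$ is a breaking point, so a quasi-minimal circuit is visited. Here "visits" is used in the sense of the preceding proposition.

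\textbf{Algorithm~\ref{alg:exhaustive_metric_iterative}.} Each outer pass costs at most $|C|\le|G|$ evaluations. Every pass except the last removes at least one element, because the algorithm stops only after a full pass with no removal. Hence there are at most $|G|+1$ passes, for a total of $\mathcal{O}(|G|^2)$ calls. The final pass certifies that $\Phi(C\setminus\{i\},G)$ is false for every $i\in C$, and since $C$ is faithful throughout, the output is locally minimal.

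The main obstacle is the edge case of the binary search in Algorithm~\ref{alg:qmsc}: when no false endpoint exists, i.e.\ $\Phi(\emptyset,G)$ holds. In that case I would handle it by convention, either declaring $\emptyset$ trivially minimal or assuming nontriviality of $\Phi$, as is done in \citep{adolfi2024computational}.
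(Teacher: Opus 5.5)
Your proposal is correct and follows essentially the same route as the paper: a direct per-element count for Algorithm~\ref{alg:general_metric}, a pass-by-pass count for Algorithm~\ref{alg:exhaustive_metric_iterative} (at most $|G|+1$ passes, each with at most $|G|$ queries), and the halving argument of \citep{adolfi2024computational} for Algorithm~\ref{alg:qmsc}, where you spell out the $lo/hi$ invariant that the paper only cites. The edge case you flag is not an obstacle, because Algorithm~\ref{alg:qmsc} takes $\Phi(G,G)\land\neg\Phi(\emptyset,G)$ as an explicit precondition; the paper also uses $\neg\Phi(\emptyset,G)$ to show that Algorithm~\ref{alg:general_metric} must hit a failing test, and hence visits a quasi-minimal circuit, a step you use only implicitly.
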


Finally, we note that  Algorithms~\ref{alg:general_metric}, \ref{alg:exhaustive_metric_iterative} and~\ref{alg:qmsc} converge only to relatively ``weak'' forms of minimality. Even the stronger local-minimality guarantee of Algorithm~~\ref{alg:exhaustive_metric_iterative} can fall short: while every single-element removal $C \setminus \{i\}$ breaks faithfulness, removing two elements simultaneously, $C \setminus \{i,j\}$, may still yield a faithful circuit. This undermines $C$’s ``minimality'' and shows that neither algorithm ensures the stronger notion of \emph{subset}-minimality.


\begin{proposition}There exist infinitely many configurations of $f_G$, and $\Phi$, for which Algorithms~\ref{alg:general_metric}, ~\ref{alg:exhaustive_metric_iterative} and ~\ref{alg:qmsc} \emph{do not} converge to a subset-minimal circuit $C$ concerning $\Phi$.
\end{proposition}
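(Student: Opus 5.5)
The plan is to build one explicit counterexample family on which all three algorithms end at a locally-minimal circuit that is not subset-minimal. I will then show that this family can be produced in infinitely many distinct versions.

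The natural starting point is the running XOR example of Fig.~\ref{fig:toy_minimality_wrap}, with $\Phi(C,G)$ true iff $f_C\equiv x_2$ on $\{0,1\}^2$. A removed component is patched to the XOR identity $0$, which is the zero-patching convention implicit in the example. Already here $C=\{v_1,v_2,v_3\}$ is locally minimal: each single removal breaks faithfulness. It is not subset-minimal, because $\{v_1\}$ is faithful.

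I would fix the element ordering so that Algorithm~\ref{alg:general_metric} tries $v_4,v_5,v_6$ first and then $v_1,v_2,v_3$. Reading the example's XOR expression, I take $v_4=v_5=x_1\oplus x_2$ and $v_6=x_2\oplus x_2$. The trace then runs as follows:
\begin{itemize}
\item Removing $v_6$ is always safe, since $v_6\equiv 0$.
\item Removing $v_4$ and $v_5$ together preserves $f$, since their contributions cancel. If they are removed one at a time, the first removal breaks faithfulness. So $v_4$ and $v_5$ must be placed early in the ordering, or the ordering adjusted, so that the trace ends at $\{v_1,v_2,v_3\}$.
\end{itemize}
If tracking cancellation pairs becomes fiddly, I will switch to a cleaner abstract construction. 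This is legitimate because $\Phi$ is an arbitrary predicate, and the proposition quantifies over $(f_G,\Phi)$.

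The abstract construction takes $G$ with elements $\{a,b,c\}\cup R$, where $R$ is a set of $m\ge 0$ padding elements. Define $\Phi(C,G)$ true iff $C\setminus R\in\{\{a,b,c\},\{a\}\}$. The checks are:
\begin{itemize}
\item From $G$, removing any element of $R$ keeps $\Phi$ true.
\item Once $R$ is gone, removing any one of $a,b,c$ makes $\Phi$ false, since $\{b,c\}$, $\{a,c\}$ and $\{a,b\}$ are all unfaithful.
\end{itemize}
Hence the exhaustive Algorithm~\ref{alg:exhaustive_metric_iterative} stops at $\{a,b,c\}$, and so does Algorithm~\ref{alg:general_metric} under any ordering. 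That circuit is not subset-minimal, because removing $S=\{b,c\}$ leaves $\{a\}$, which is faithful.

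To make $\Phi$ a genuine faithfulness predicate, I realize it with a network whose output is a sum of component contributions. Let the contributions of $b$ and $c$ cancel each other, while $a$ alone reproduces the target, as $v_1,v_2,v_3$ do in the XOR example. The padding components contribute $0$. Varying $m$ gives infinitely many configurations.

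For Algorithm~\ref{alg:qmsc} (binary search), I would argue in the same instance, following the procedure of~\citep{adolfi2024computational}. It removes contiguous blocks of the ordering while $\Phi$ stays true and returns a circuit with a breaking point. I order the elements as $R,b,c,a$.
\begin{itemize}
\item Prefix removals of $R$ succeed.
\item Any prefix that cuts into $\{b,c\}$ without removing both fails.
\item Removing the prefix $R\cup\{b,c\}$ would succeed. To avoid landing on the subset-minimal $\{a\}$, I choose the ordering $R,b,a,c$ instead. Every prefix strictly containing $R$ then leaves $\{a,c\}$ or $\{c\}$ (unfaithful) or $\emptyset$, which I make unfaithful.
\end{itemize}
The binary search therefore returns $R$-free $\{b,a,c\}$, which is not subset-minimal. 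The main obstacle is pinning down the exact semantics of Algorithm~\ref{alg:qmsc}, which lives in the appendix. This matters in two places: whether it removes prefixes or suffixes, and whether the monotonicity it assumes is violated here. It can be violated, because $\{a\}$ is faithful while $\{a,c\}$ is not. If needed, I would adapt the ordering to its convention so that every tested cut point lands on a non-faithful set except those inside $R$.
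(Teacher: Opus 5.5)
Your final abstract construction is correct and is essentially the paper's argument: a component that alone reproduces the target plus a cancelling pair, so that every single removal breaks $\Phi$ while removing the pair preserves it, with the binary-search ordering placing the target component between the two cancelling ones (your $b,a,c$ is exactly the paper's $(v_1,v_3,v_2)$ for the ReLU network $f_G(x)=v_1(x)-v_2(x)+v_3(x)$). The only differences are that you obtain infinitely many configurations via removable zero-contribution padding rather than the paper's extra cancelling pairs $(p_i,q_i)$, and that your abandoned XOR detour could never end at $\{v_1,v_2,v_3\}$, since removing $v_4$ or $v_5$ alone always breaks faithfulness (the greedy there halts at $\{v_1,\dots,v_5\}$, which is itself a counterexample).
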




\subsection{The circuit monotonicity property and its impact on minimality}\label{circuit_monot}
\label{sec:circuit_monotonicity}

To address the issue of algorithms converging to ``bad'' local minima, we identify a key property of the faithfulness predicate $\Phi$ with crucial implications for stronger minimal subsets --- \emph{monotonicity}:


\begin{definition}
    We say that a circuit faithfulness predicate $\Phi$ is \emph{monotonic} iff for any $C\subseteq C'\subseteq G$ it holds that if $\Phi(C,G)$ is true, then $\Phi(C',G)$ is true.
\end{definition}

Intuitively, monotonicity means that once $\Phi(C,G)$ holds for a circuit $C$ (i.e., it is ``faithful’’), it will keep holding as elements are added. In other words, enlarging the circuit never breaks faithfulness. This property is essential for Algorithm~\ref{alg:general_metric}, as it underpins the stronger minimality guarantee:
\begin{proposition}
\label{monotonic_subsetmin_prop}
    If $\Phi$ is monotonic, then for any model $f_G$, Algorithm~\ref{alg:general_metric} converges to a subset-minimal circuit $C$ concerning $\Phi$.
\end{proposition}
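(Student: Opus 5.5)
We argue by contradiction, using the invariant maintained by Algorithm~\ref{alg:general_metric} together with monotonicity. Let $C^\ast$ denote the output, and let $C_t$ be the current circuit just before the algorithm examines the element $i$. Two facts drive the argument. First, $\Phi(C^\ast,G)$ holds. This needs the natural assumption that $\Phi(G,G)$ is true: the loop only ever replaces $C$ by $C\setminus\{i\}$ when $\Phi(C\setminus\{i\},G)$ holds, so by induction $\Phi(C,G)$ holds after every iteration. Second, the current circuit only shrinks, so $C^\ast\subseteq C_t$ for every $t$. In particular, if an element $i$ is still in $C^\ast$, it was examined at some step $t$ with $i\in C_t$, and the test $\Phi(C_t\setminus\{i\},G)$ returned false.

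It suffices to show that every strict subset $C'\subsetneq C^\ast$ has $\Phi(C',G)$ false. This is equivalent to the condition in Definition~\ref{subset_minimal} that $\Phi(C^\ast\setminus S,G)$ is false for all nonempty $S$ (the case $S=\emptyset$ should be read as excluded, since otherwise the definition would be vacuous). Suppose for contradiction that some $C'\subsetneq C^\ast$ satisfies $\Phi(C',G)$. Choose any $i\in C^\ast\setminus C'$, and let $t$ be the step at which $i$ was examined. Then
\[
C'\subseteq C^\ast\setminus\{i\}\subseteq C_t\setminus\{i\}.
\]
By monotonicity, $\Phi(C',G)$ implies $\Phi(C_t\setminus\{i\},G)$. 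But then the algorithm would have removed $i$ at step $t$, contradicting $i\in C^\ast$.

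The main obstacle is the bookkeeping for the loop. The iteration ``for all $i\in C$'' runs over the initial ordering of $G$ even while $C$ is modified. We interpret it as visiting each element of $G$ exactly once, so every surviving element was tested against a superset $C_t\setminus\{i\}$ of $C^\ast\setminus\{i\}$. The point to stress is that the failed test happened at an earlier, larger circuit. Monotonicity is exactly what transfers that failure down to $C^\ast\setminus\{i\}$ and to all of its subsets. Without monotonicity the argument breaks, which matches the earlier proposition showing that Algorithm~\ref{alg:general_metric} need not reach subset-minimality in general.
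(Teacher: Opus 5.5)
Your proof is correct and follows essentially the same route as the paper. You establish $\Phi(C^\ast,G)$ via the loop invariant starting from $\Phi(G,G)$, then, assuming a faithful $C'\subsetneq C^\ast$, you pick $i\in C^\ast\setminus C'$ and use $C'\subseteq C^\ast\setminus\{i\}\subseteq C_t\setminus\{i\}$ with monotonicity to contradict the failed test at the step where $i$ was examined; your clarifications about excluding $S=\emptyset$ and about the loop visiting each element of $G$ once are sensible points the paper leaves implicit.
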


\textbf{The condition of monotonicity.} Interestingly, we establish a novel connection between the guarantees on the input and patching domains outlined in Section~\ref{input_patching_guarantees_sec} and the monotonic behavior of $\Phi$:


\begin{proposition}
\label{monotonicity_robustness_link}
    Let $\Phi(C,G)$ denote validating whether $C$ is input-robust concerning $\langle f_G, \mathcal{Z}\rangle $ (Def.~\ref{circuit_robustness_Def}), and simultaneously patching-robust concerning $\langle f_G, \mathcal{Z'}\rangle$ (Def.~\ref{patching_robustness_property}). Then if $\mathcal{Z}\subseteq \mathcal{Z'}$ and $\mathcal{H}_G(\mathcal{Z'})$ is closed under concatenation, $\Phi$ is monotonic.
\end{proposition}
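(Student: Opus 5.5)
The plan is to fix $C\subseteq C'\subseteq G$ with $\Phi(C,G)$ true and check both conjuncts of $\Phi(C',G)$ separately. The single observation behind both is that evaluating $C'$ with its complement patched, $f_{C'}(\cdot\mid\overline{C'}=\beta)$, coincides with evaluating the smaller circuit $C$ with its complement $\overline{C}=\overline{C'}\cup(C'\setminus C)$ patched by a \emph{mixed} assignment. On $\overline{C'}$ this assignment agrees with $\beta$, and on $C'\setminus C$ it uses whatever values those components actually compute. So everything reduces to showing that such mixed assignments lie in the patching set over which the robustness of $C$ has already been certified.

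\textbf{Patching robustness.} Fix a reference point $\x_j$ and $\beta\in\mathcal{H}_{\overline{C'}}(\mathcal{Z}')$, say $\beta$ is the restriction of $\mathcal{H}_G(\z)$ to $\overline{C'}$ for some $\z\in\mathcal{Z}'$. When $f_{C'}(\x_j\mid\overline{C'}=\beta)$ is run, the components in $C'\setminus C$ take some values $\gamma$ that depend on $\x_j$ and $\beta$. I will read ``$\mathcal{H}_G(\mathcal{Z}')$ is closed under concatenation'' as saying that gluing the $\overline{C'}$-part of one reachable activation pattern with the $(C'\setminus C)$-part of another again gives an element of the reachable space projected to $\overline{C}$. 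Under this reading, $\alpha:=(\beta,\gamma)\in\mathcal{H}_{\overline{C}}(\mathcal{Z}')$, and $f_{C'}(\x_j\mid\overline{C'}=\beta)=f_C(\x_j\mid\overline{C}=\alpha)$. Patching robustness of $C$ then gives $\lVert f_{C'}(\x_j\mid\overline{C'}=\beta)-f_G(\x_j)\rVert_p\le\delta$.

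\textbf{Input robustness.} For $\z\in\mathcal{Z}$, the same identity writes $f_{C'}(\z\mid\overline{C'}=\alpha_0)$ as $f_C(\z\mid\overline{C}=(\alpha_0,\gamma_\z))$. This is not literally covered by input robustness of $C$, which uses one fixed patch $\alpha$. I would first try to rewrite $f_{C'}(\z\mid\overline{C'}=\alpha_0)-f_G(\z)$ so that it is controlled by patching robustness with respect to $\mathcal{Z}'$, taking the reference points to range over $\mathcal{Z}$. This is exactly where the hypothesis $\mathcal{Z}\subseteq\mathcal{Z}'$ enters: it ensures that the fixed patch $\alpha_0$ (assumed reachable from $\mathcal{Z}$) and the values $\gamma_\z$ induced by inputs in $\mathcal{Z}$ concatenate into an element of $\mathcal{H}_{\overline{C}}(\mathcal{Z}')$. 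If the paper's $\alpha$ is an arbitrary constant, I would add the mild assumption $\alpha\in\mathcal{H}_{\overline{C}}(\mathcal{Z})$.

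\textbf{Main obstacle.} The hardest step is making ``closed under concatenation'' precise enough that the mixed assignment provably stays inside $\mathcal{H}_{\overline{C}}(\mathcal{Z}')$. A second difficulty is reconciling the input-robustness clause, which is stated only at a fixed patch, with the pointwise quantification over reference points in the patching clause. I would state the concatenation closure componentwise (a product-like structure across components) and check that it yields exactly the membership needed in both arguments.
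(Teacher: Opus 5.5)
There is a genuine gap in how you read $\Phi$. You split it into the literal Def.~1 clause (one fixed patch, all $\mathbf{z}\in\mathcal{Z}$) and the literal Def.~2 clause (all patches in $\mathcal{H}_{\overline{C}}(\mathcal{Z}')$, but only at the reference points $\mathbf{x}_j$). Under that reading the input clause is not monotone. This holds even with closure, with $\mathcal{Z}\subseteq\mathcal{Z}'$, and with your extra assumption that the fixed patch is reachable.

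Here is a counterexample. Let $\mathcal{Z}=\mathcal{Z}'=[0,1]^3$, the $\ell_\infty$-ball of radius $\tfrac12$ around $\mathbf{x}_1=(\tfrac12,\tfrac12,\tfrac12)$. Take components $a=z_1$, $b=z_2$, $d=z_3$, so $\mathcal{H}_G(\mathcal{Z}')=[0,1]^3$ is a product and closed under concatenation. Add an output node, kept in every circuit, computing $y=a+\delta\,h(b,d)$ with $h(b,d)=\mathrm{ReLU}(b-d)-\mathrm{ReLU}(b+d-1)\in[-1,1]$. Use zero-patching, which is the activation of $\mathbf{0}\in\mathcal{Z}$. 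Then $C=\{a\}$ passes both clauses, with errors $\delta|h(z_2,z_3)|$ and $\delta|h(b,d)|$ (using $h(\tfrac12,\tfrac12)=0$). But $C'=\{a,b\}$ gives $f_{C'}(\mathbf{z}\mid d=0)=z_1+\delta z_2$, whose error at $z_2=z_3=1$ is $2\delta$.

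So your step that controls $f_{C'}(\mathbf{z}\mid\overline{C'}=\alpha_0)$ by patching robustness ``with the reference points ranging over $\mathcal{Z}$'' is not a derivation. It is an extra hypothesis that Def.~2 does not supply, and without it the claim is false.

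The paper avoids this by reading ``input-robust and simultaneously patching-robust'' as one joint condition: $\lVert f_C(\mathbf{z}\mid\overline{C}=\alpha)-f_G(\mathbf{z})\rVert_p\le\delta$ for all $\mathbf{z}\in\mathcal{Z}$ and all $\alpha\in\mathcal{H}_{\overline{C}}(\mathcal{Z}')$. This is what the tripled siamese encoding certifies, and it has no separate fixed patch and no restriction to reference points. Under that reading, your mixed-assignment identity is the whole proof, and it is the paper's proof. Reduce to $C'=C\sqcup\{c'\}$ and show that the set $\mathbb{S}'$ of outputs attainable by $C'$ is contained in the set $\mathbb{S}$ attainable by $C$. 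This holds because the $\overline{C'}$-part of an activation from $\mathbf{z}'\in\mathcal{Z}'$, glued with the $c'$-value coming from the input $\mathbf{z}\in\mathcal{Z}\subseteq\mathcal{Z}'$, lies in $\mathcal{H}_{\overline{C}}(\mathcal{Z}')$ by closure.

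Your patching paragraph leaves one step of this unjustified. Closure only glues pieces that are themselves reachable from $\mathcal{Z}'$, so you must show $\gamma\in\mathcal{H}_{C'\setminus C}(\mathcal{Z}')$. That is exactly where $\mathcal{Z}\subseteq\mathcal{Z}'$ is needed, yet you assert the membership without using that hypothesis. The paper gets it by taking $\gamma$ to be the activation $\mathcal{H}_{C'\setminus C}(\mathbf{z})$ induced by the input $\mathbf{z}\in\mathcal{Z}$. Your componentwise reading of closure could justify this more carefully, for instance by induction over layers, but that argument still has to be written.
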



Intuitively, Proposition~\ref{monotonicity_robustness_link} shows that if the patching domain $\mathcal{Z}'$ subsumes the input domain $\mathcal{Z}$, and the activation space $\mathcal{H}_G(\mathcal{Z'})$ is closed under concatenation, i.e., concatenating any two partial activations remains within $\mathcal{H}_G(\mathcal{Z'})$ --- then the faithfulness predicate $\Phi$ is monotonic. This introduces a \emph{new class of evaluation conditions} that are monotonic by construction, yielding stronger minimal circuits.


\begin{proposition}
\label{subet_minimal_iterative_appendix}
If the condition $\Phi(C,G)$ is set to validating whether $C$ is input-robust concerning $\langle f_G, \mathcal{Z}\rangle $ (Def.~\ref{circuit_robustness_Def}), and also patching-robust with respect to $\langle f_G, \mathcal{Z'}\rangle$ (Def.~\ref{patching_robustness_property}), then if $\mathcal{Z}\subseteq \mathcal{Z'}$ and $\mathcal{H}_G(\mathcal{Z'})$ is closed under concatenation, Algorithm~\ref{alg:general_metric} converges to a subset-minimal circuit.
\end{proposition}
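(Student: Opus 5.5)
This result is a direct composition of the two preceding propositions, so the plan is to chain them rather than argue from scratch. First, I invoke Proposition~\ref{monotonicity_robustness_link}. Its hypotheses are exactly those assumed here: $\Phi(C,G)$ is the conjunction of input-robustness w.r.t.\ $\langle f_G,\mathcal{Z}\rangle$ and patching-robustness w.r.t.\ $\langle f_G,\mathcal{Z}'\rangle$, with $\mathcal{Z}\subseteq\mathcal{Z}'$ and $\mathcal{H}_G(\mathcal{Z}')$ closed under concatenation. It gives that $\Phi$ is monotonic. Second, I apply Proposition~\ref{monotonic_subsetmin_prop} to this monotonic $\Phi$. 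It yields that Algorithm~\ref{alg:general_metric}, on any $f_G$, terminates with a subset-minimal circuit with respect to $\Phi$.

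For self-containedness, I would also sketch why Proposition~\ref{monotonic_subsetmin_prop} holds. The first point is an invariant: the current $C$ always satisfies $\Phi$. Initially $C=G$, and $\Phi(G,G)$ holds trivially because $\overline{G}=\emptyset$, so $f_G$ is compared with itself. After that, removals are only accepted when $\Phi$ still holds. Next, suppose the returned $C$ had a proper faithful subset $C\setminus S$ with $S\neq\emptyset$. Pick $i\in S$, and let $C_i\supseteq C$ be the circuit at the moment $i$ was tested. Since $i$ was rejected, $\Phi(C_i\setminus\{i\},G)$ was false. But $C\setminus S\subseteq C_i\setminus\{i\}$, so monotonicity makes $\Phi(C_i\setminus\{i\},G)$ true, a contradiction.

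The main obstacle lies inside Proposition~\ref{monotonicity_robustness_link}, which I take as given. If I had to justify it, the key case is adding an element to $C$ to get $C'$. Any patch of $\overline{C'}$ drawn from $\mathcal{H}_{\overline{C'}}(\mathcal{Z}')$ must be combined with the activations the newly added component actually computes. The plan is to show this combination equals the output of $C$ under a concatenated patch that lies in $\mathcal{H}_{\overline{C}}(\mathcal{Z}')$; closure under concatenation guarantees this. Patching-robustness of $C$ then transfers to $C'$. Input-robustness of $C'$ follows the same way, using $\mathcal{Z}\subseteq\mathcal{Z}'$ to treat the input-induced activations as admissible patches.
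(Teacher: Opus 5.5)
Your proposal is correct and matches the paper's proof: the paper also invokes Proposition~\ref{monotonicity_robustness_link} to get monotonicity of $\Phi$, then applies Proposition~\ref{monotonic_subsetmin_prop} to conclude that Algorithm~\ref{alg:general_metric} returns a subset-minimal circuit. Your supplementary sketches follow the paper's arguments for those two propositions as well, namely the loop-invariant-plus-contradiction argument and the concatenation step showing that $C'$'s outputs are already achievable by $C$ under an admissible patch.
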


\subsection{From Subset-Minimal Circuits to Cardinally-Minimal Circuits} \label{sec:contrastive_mhs}


Although the monotonicity of $\Phi$ provides a stronger guarantee of subset-minimality, it still does not ensure convergence to the globally minimal circuit (i.e., a \emph{cardinally}-minimal circuit). A naive approach to obtain such circuits is to enumerate all \(C \subseteq G\), verify \(\Phi(C,G)\), and choose the one with the lowest cardinality \(|C|\), but this quickly becomes intractable even for modestly sized graphs.


%
\textbf{Exploiting circuit blocking-set duality for efficient approximation of cardinally-minimal circuits.} In this subsection, we leverage the idea that neural networks often contain small ``circuit blocking-sets’’ --- subgraphs $C' \subseteq G$ whose altered activations break the faithfulness of any circuit $C$ that excludes them. We prove a \emph{duality} between circuits (under monotone faithfulness predicates) and these blocking-sets, enabling a new algorithmic construction that approximates --- and sometimes exactly recovers --- cardinally minimal circuits far more efficiently than naive search. Formally, for $f_G$ and $\Phi$, a \emph{circuit blocking-set} is any $C' \subseteq G$ such that $\Phi(C \setminus C', G)$ fails for all $C \subseteq G$. 
This yields a duality grounded in a \emph{minimum-hitting-set (MHS)} relation between circuits and blocking-sets:

\begin{proposition}
\label{mhs_main_theorem}
    Given some model $f_G$, and a monotonic predicate $\Phi$, the MHS of all circuit blocking-sets concerning $\Phi$ is a cardinally minimal circuit $C$ for which $\Phi(C, G)$ is true. Moreover, the MHS of all circuits $C\subseteq G$ for which $\Phi(C, G)$ is true, is a cardinally minimal blocking-set w.r.t $\Phi$.
\end{proposition}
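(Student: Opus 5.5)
The plan is to prove the classical hitting-set duality for monotone families, with ``faithful circuits'' and ``blocking-sets'' as the two dual families. Write $\mathcal{F}:=\{C\subseteq G:\Phi(C,G)\}$ for the faithful circuits and $\mathcal{B}$ for the circuit blocking-sets. The first step is to simplify the definition of $\mathcal{B}$ using monotonicity. By definition, $C'\in\mathcal{B}$ iff $\Phi(C\setminus C',G)$ is false for every $C\subseteq G$. Since $C\setminus C'\subseteq G\setminus C'$ for every $C$, monotonicity shows this is equivalent to the single condition that $\Phi(G\setminus C',G)$ is false. So $C'$ is a blocking-set iff its complement is unfaithful.

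The core step is a characterization lemma:
\[
C\in\mathcal{F}\iff C\cap B\neq\emptyset \text{ for all } B\in\mathcal{B}.
\]
For ($\Rightarrow$), suppose $C$ misses some $B\in\mathcal{B}$. Then $C=C\setminus B$, so $\Phi(C,G)$ is false by the definition of a blocking-set, a contradiction. For ($\Leftarrow$), suppose $\Phi(C,G)$ is false. Take $B:=G\setminus C$; then $G\setminus B=C$ is unfaithful, so $B\in\mathcal{B}$ by the reformulation above, and $C\cap B=\emptyset$. Hence $\mathcal{F}$ is exactly the set of hitting sets of $\mathcal{B}$. A minimum hitting set of $\mathcal{B}$ is therefore a faithful circuit of least cardinality, i.e.\ a cardinally-minimal circuit in the sense of Def.~\ref{cardinally_minimal_circuit}. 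This proves the first claim.

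For the second claim I will prove the symmetric characterization:
\[
B\in\mathcal{B}\iff B\cap C\neq\emptyset \text{ for all } C\in\mathcal{F}.
\]
For ($\Rightarrow$), if $B$ misses some faithful $C$, then $C\subseteq G\setminus B$. Monotonicity gives $\Phi(G\setminus B,G)$, so $B$ is not blocking. For ($\Leftarrow$), if $B$ hits every faithful circuit, then $G\setminus B$, which trivially misses $B$, cannot be faithful, so $B\in\mathcal{B}$. Thus the blocking-sets are exactly the hitting sets of $\mathcal{F}$, and an MHS of $\mathcal{F}$ is a cardinally-minimal blocking-set.

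The main obstacle is not the argument itself but the degenerate cases and the exact reading of the blocking-set definition. I will state that the definition is taken literally, so that by monotonicity it reduces to $\neg\Phi(G\setminus C',G)$. I will also assume $\Phi(G,G)$ holds, so that $\mathcal{F}\neq\emptyset$ and the MHS of $\mathcal{B}$ exists. Finally, I will note that if $\Phi(\emptyset,G)$ holds, then $\mathcal{B}=\emptyset$; in that case the empty set is both the MHS and the cardinally-minimal circuit, consistently with the statement. With these conventions fixed, both directions follow from monotonicity in a line each.
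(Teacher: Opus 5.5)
Your proposal is correct and follows essentially the same route as the paper: you first use monotonicity to reduce the blocking-set condition to $\neg\Phi(G\setminus C',G)$ (the paper's auxiliary lemma). For each claim you then show, via the complement argument, that every hitting set lies in the target family, and, via monotonicity, that every member of the target family is a hitting set; this is exactly the paper's argument, packaged as two ``iff'' characterizations. Your handling of the degenerate cases (needing $\Phi(G,G)$ for an MHS of the blocking-sets to exist) and your monotonicity-free use of the literal definition ($C=C\setminus B$) in one direction are minor refinements, not a different approach.
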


The definition of MHS, a classic NP-Complete problem
is given in Appendix~\ref{app:proof_of_prop_mhs_duality}. This duality is powerful because, despite NP-completeness, MHS can often be solved efficiently in practice with modern solvers such as MILP or MaxSAT~\citep{ignatiev2019rc2}. 
Hence, similar dualities have already been central to formal reasoning and provable explainability 
methods~\citep{bacchus2015using, ignatiev2019relating, bassan2023towards, liffiton2016fast}.
With this duality theorem in hand, we can design an algorithm that often computes (or approximates) cardinally minimal circuits:


\begin{algorithm}[H]
\caption{Cardinally Minimal Circuit Approximation using MHS duality}
\label{alg:contrastive_mhs_overall}
\begin{algorithmic}[1]
\State \textbf{Input} model $f_G$, faithfulness predicate $\Phi$, $t_{\text{max}}\in[|G|]$
\State $\text{BlockingSets} \gets \emptyset$ 
\For{$t \gets 1 \ \textbf{to}\ t_{\max}$}
  \State $\mathcal C_t \gets \{\, S \subseteq G, \forall U\subseteq \text{BlockingSets} : |S|=t, U\not\subseteq S \}$

  
  \ForAll{\(S \in \mathcal C_t\)} \Comment{\emph{parallelization}}
    \If{$\neg  \Phi(G\setminus S,G)$}
      \State $\text{BlockingSets} \gets \text{BlockingSets} \cup S$
    \EndIf
  \EndFor
  \State $C \gets \text{MHS}(\text{BlockingSets})$
  \If{$\Phi(C,G)$} \Return $C$ \EndIf
\EndFor
\end{algorithmic}
\end{algorithm}

Algorithm~\ref{alg:contrastive_mhs_overall} leverages Proposition~\ref{mhs_main_theorem} by iterating over blocking-sets \emph{in parallel} and computing each set’s MHS to obtain a circuit $C$. This establishes a lower bound on the cardinally minimal circuit and, through successive refinements, converges to the minimal one. While the number of blocking-set subsets may be excessive in the worst case, in practice their size is often 
tractable (see Section~\ref{sec:experiments})
yielding a low $t_{\text{max}}$ and enabling more efficient --- or closely approximate --- computation of cardinally minimal circuits. This is formalized in the following claim:



\begin{proposition}
\label{mhs_algorithm_convergence_prop}
    Given a model $f_G$, and a monotonic predicate $\Phi$, Algorithm~\ref{alg:contrastive_mhs_overall} computes a subset $C$ whose size is a \emph{lower bound} to the cardinally minimal circuit for which $\Phi(C,G)$ is true. For a large enough $t_{\text{max}}$ value, the algorithm converges \emph{exactly} to the cardinally minimal circuit. 
\end{proposition}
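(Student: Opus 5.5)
The proof rests on Proposition~\ref{mhs_main_theorem} together with a standard hitting-set monotonicity argument. Let $\mathcal{B}^\ast$ denote the family of all circuit blocking-sets with respect to $\Phi$, and let $\mathcal{B}_t\subseteq\mathcal{B}^\ast$ be the family $\text{BlockingSets}$ accumulated by Algorithm~\ref{alg:contrastive_mhs_overall} after iteration $t$. I first check soundness of the collected sets. By monotonicity, $\neg\Phi(G\setminus S,G)$ implies $\neg\Phi(C\setminus S,G)$ for every $C\subseteq G$, since $C\setminus S\subseteq G\setminus S$. Hence every $S$ added by the algorithm is a genuine blocking-set, and $\mathcal{B}_t\subseteq\mathcal{B}^\ast$.

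\textbf{Lower bound.} Let $C^\ast$ be a cardinally minimal circuit, so $\Phi(C^\ast,G)$ holds. I claim $C^\ast$ hits every blocking-set. If $C^\ast\cap S=\emptyset$ for some $S\in\mathcal{B}^\ast$, then $C^\ast\setminus S=C^\ast$, and the blocking property gives $\neg\Phi(C^\ast,G)$, a contradiction. So $C^\ast$ is a hitting set of $\mathcal{B}^\ast$, and therefore also of $\mathcal{B}_t$. It follows that the minimum hitting set $C_t=\text{MHS}(\mathcal{B}_t)$ satisfies $|C_t|\le|C^\ast|$. This gives the lower-bound claim at every iteration, including the final returned set.

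If the algorithm returns at iteration $t$, then $\Phi(C_t,G)$ holds, so $|C_t|\ge|C^\ast|$ by cardinal minimality. Combined with the lower bound, $C_t$ is exactly cardinally minimal. This is the early-exit case.

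\textbf{Convergence for large $t_{\max}$.} I take $t_{\max}=|G|$ and argue that by the last iteration all relevant blocking-sets have been collected. The pruning condition in $\mathcal{C}_t$ deserves care: it skips only supersets of sets already found. By monotonicity, such supersets are blocking-sets, and any set hitting a subset also hits the superset. So these supersets are redundant for the hitting-set problem, and MHS over $\mathcal{B}_{|G|}$ equals MHS over $\mathcal{B}^\ast$. More precisely, every $S\subseteq G$ with $\neg\Phi(G\setminus S,G)$ is either added or contains an added set.

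Next I must show that $\mathcal{B}^\ast$ is exactly $\{S:\neg\Phi(G\setminus S,G)\}$. One direction was shown above. For the other, if $S$ is a blocking-set, take $C=G$ in the definition to get $\neg\Phi(G\setminus S,G)$. Proposition~\ref{mhs_main_theorem} then says $\text{MHS}(\mathcal{B}^\ast)$ is a cardinally minimal faithful circuit, so the check $\Phi(C,G)$ succeeds and the algorithm returns it, at the latest at $t=|G|$.

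\textbf{Main obstacle.} The delicate step is the pruning and redundancy argument, together with the edge case in which $\Phi(\emptyset,G)$ may be true. In that case there are no blocking-sets, the MHS is $\emptyset$, and the first check returns it immediately, which is consistent with the claim. I would handle the redundancy argument by showing that adding supersets of members never changes the set of hitting sets.
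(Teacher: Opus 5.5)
Your proof is correct and follows the same route as the paper. The lower bound rests on the fact that every faithful circuit must hit every collected blocking-set: you argue this directly via $C^\ast$, while the paper argues by contradiction via monotonicity. Convergence at $t_{\max}=|G|$ then follows from Proposition~\ref{mhs_main_theorem}, and your explicit argument that the supersets pruned from $\mathcal{C}_t$ are redundant for the hitting-set problem fills in a step the paper only asserts (``the algorithm iterates over all possible blocking-sets'').
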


\section{Experimental Evaluation}
\label{sec:experiments}

\textbf{Experimental setup.} We evaluate our method on standard benchmarks from the neural network verification literature~\citep{brix2024fifth,brix2023first}: \begin{inparaenum}[(i)] \item MNIST, \item CIFAR-10, \item GTSRB, and \item TaxiNet (a real-world dataset used in verification-based input-explainability studies~\citep{wu2023verix,bassan2025explaining}) \end{inparaenum}. For neural network verification, we use the state-of-the-art $\alpha,\beta$-CROWN verifier~\citep{bak2021second,muller2022third,brix2024fifth}; for MHS we use RC2~\citep{ignatiev2019rc2}. For consistency, we evaluate our results on model architectures from prior verification studies, particularly the neural network verification competition (VNN-COMP)~\citep{brix2024fifth}; full architectural details appear in Appendix~\ref{app:exps_setup}. To balance circuit discovery difficulty and human interpretability, we chose the circuit granularity level to be \emph{neurons} for MNIST and \emph{convolutional filters} for CIFAR-10, GTSRB, and TaxiNet. Both provable and sampling-based variants use the standard \emph{logit-difference} metric~\citep{conmy2023towards}. Further details are in Appendices~\ref{app:input_robusness_exp}, ~\ref{app:patching_robusness_exp} and ~\ref{app:exploring_minimality_exp}.

\subsection{Circuit Discovery with Input-Robustness Guarantees}

We begin by evaluating the \emph{continuous input robustness} guarantees of our method. We compare two variants of Algorithm~\ref{alg:general_metric}: (i) a standard sampling-based approach, where faithfulness is assessed by applying the logit-difference predicate with tolerance $\delta$ on sampled inputs, and (ii) a \emph{provable} approach, which, via the siamese encoding of Section~\ref{subsec:input_domain_guarantees}, certifies that the logit difference always remains below~$\delta$ throughout the continuous input domain. 
For both methods, we report circuit size and robustness over the continuous input neighborhood across 100 batches (one circuit per batch). We set the input neighborhood using $\epsilon_p$ values aligned with VNN-COMP~\citep{brix2024fifth} (with variations in Appendix~\ref{app:input_robusness_exp}) and adopt zero-patching in both settings. Results (Table~\ref{tab:input_robustnesss}) show that the provable method is slower, due to solving certification queries, yet achieves {100\%} robustness with \emph{comparable circuit sizes}, whereas the sampling-based baseline attains substantially lower robustness. An illustrative example appears in Figure~\ref{fig:resnet2b_filters_example}.

\begin{table}[h!]
\centering
\footnotesize
\caption{Circuit results from Algorithm~\ref{alg:general_metric}, where $\Phi$ is defined either by bounding logit differences under input sampling or by \emph{verifying} the bound using the siamese encoding.}
\label{tab:input_robustnesss}
\resizebox{\textwidth}{!}{%
\begin{tabular}{l
| S[table-format=1.2] S[table-format=2.2] l
| S[table-format=5.2] S[table-format=2.2] S[table-format=1.3] l |}
\toprule
\multirow{2}{*}{Dataset}
& \multicolumn{3}{c|}{\textbf{Sampling-based Circuit Discovery}} & \multicolumn{3}{c|}{\textbf{Provably Input-Robust Circuit Discovery}}
 \\
\cmidrule(lr){2-4}\cmidrule(lr){5-7}
& {Time (s)} & {Size ($|C|$)} & {Robustness (\%)}
& {Time (s)} & {Size ($|C|$)} & {Robustness (\%)} \\
\midrule
CIFAR-10 & {0.23 $\pm 0.52$} & {16.47 $\pm 9.08$} & \textbf{46.5 $\pm 5.0$} 
         & {2970.85 $\pm 874.23$} & {19.18 $\pm 10.16$} & \textbf{100.0 $\pm 0.0$} \\
MNIST    & {0.31 $\pm 0.89$} & {12.56 $\pm 2.30$} & \textbf{19.2 $\pm 4.0$}
         & {611.93 $\pm 97.14$} & {15.84 $\pm 2.33$} & \textbf{100.0 $\pm 0.0$} \\
GTSRB    & {0.11 $\pm 0.33$} & {28.91 $\pm 4.69$} & \textbf{27.6 $\pm 4.0$}
         & {991.08 $\pm 162.91$} & {29.59 $\pm 4.45$} & \textbf{100.0 $\pm 0.0$} \\
TaxiNet  & {0.01 $\pm 0.00$} & {5.77 $\pm 0.80$} & \textbf{9.5 $\pm 3.0$}
         & {180.00 $\pm 40.39$} & {6.82 $\pm 0.46$} & \textbf{100.0 $\pm 0.0$} \\
\bottomrule
\end{tabular}
}
\end{table}



\begin{figure}[h!]
\label{fig:robust_circuit}
  \centering
    \includegraphics[width=0.75\textwidth]{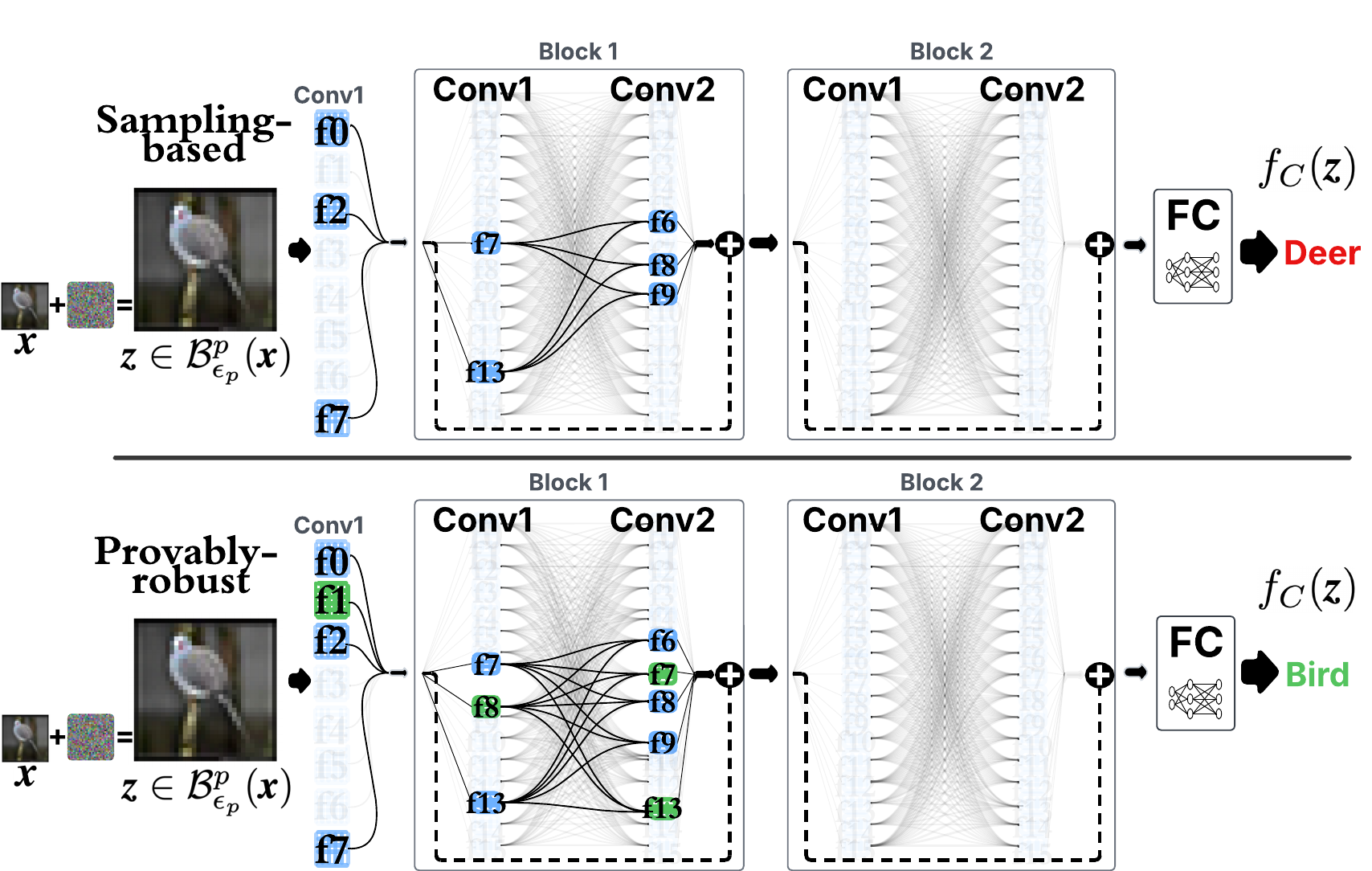}

\caption{Examples of ResNet circuits at the filter level on CIFAR-10.
Filters are numbered within each layer, with non-circuit filters in gray and residual connections shown as dashed lines. We compare circuits from the sampling-based discovery and the provably robust variant, highlighting components unique to the provable circuit in green.}
  
  \label{fig:resnet2b_filters_example}
\end{figure}


\subsection{Circuit Discovery with Patching-Robustness Guarantees}


\begin{table}[h!]
\centering
\footnotesize
\caption{Circuit results from Algorithm~\ref{alg:general_metric}, where $\Phi$ is defined either by bounding logit differences under zero patching, mean patching, or by \emph{verifying} the bound using the siamese encoding.}
\label{tab:patching_robustness}
\vspace{0.2cm}
\resizebox{\textwidth}{!}{%
\begin{tabular}{l
| S[table-format=1.2] S[table-format=2.2] l
| S[table-format=1.2] S[table-format=2.2] l
| S[table-format=5.2] S[table-format=2.2] S[table-format=3.1] l}
\toprule
\multirow{2}{*}{Dataset}
& \multicolumn{3}{c|}{\textbf{Zero Patching}}
& \multicolumn{3}{c|}{\textbf{Mean Patching}}
& \multicolumn{3}{c}{\textbf{Provably Patching-Robust Patching}} \\
\cmidrule(lr){2-4} \cmidrule(lr){5-7} \cmidrule(lr){8-10}
& {Time (s)} & {Size ($|C|$)} & {Rob. (\%)}
& {Time (s)} & {Size ($|C|$)} & {Rob. (\%)}
& {Time (s)} & {Size ($|C|$)} & {Rob. (\%)} \\
\midrule
CIFAR-10 
& {0.1 $\pm$ 0.3} & {65.1 $\pm 3.0$} & \textbf{46.4 $\pm 6.0$}
& {0.0 $\pm 0.0$} & {64.1 $\pm 3.6$} & \textbf{33.3 $\pm 5.7$}
& {5408.5 $\pm 1091.0$} & {65.6 $\pm 1.6$} &  \textbf{100.0} \\ [0.25em]
MNIST
& {0.1 $\pm 0.3$} & {20.0 $\pm 1.5$} & \textbf{58.0 $\pm 5.3$}
& {0.0 $\pm 0.0$} & {19.2 $\pm 1.8$} & \textbf{55.7 $\pm 5.3$}
& {714.9 $\pm 207.1$} & {17.0 $\pm 2.3$} & \textbf{100.0} \\ [0.25em]
GTSRB
& {0.3 $\pm 0.9$} & {32.6 $\pm 4.2$} & \textbf{38.0 $\pm 4.4$}
& {0.0 $\pm 0.0$} & {33.4 $\pm 4.2$} & \textbf{40.5 $\pm 4.5$}
& {2907.2 $\pm 721.7$} & {34.3 $\pm 4.1$} & \textbf{100.0} \\ [0.25em]
TaxiNet
& {0.0 $\pm 0.1$} & {5.8 $\pm 0.8$} & \textbf{57.1 $\pm 5.0$}
& {0.0 $\pm 0.1$} & {5.4 $\pm 0.7$} & \textbf{63.3 $\pm 4.9$}
& {175.7 $\pm 52.7$} & {5.4 $\pm 0.6$} & \textbf{100.0} \\
\bottomrule
\end{tabular}
}
\end{table}



To assess patching robustness, we study three variants of Algorithm~\ref{alg:general_metric} enforcing a bounded logit difference under different patching schemes: (i) zero-patching, (ii) mean-patching, and (iii) a certified variant that, using a siamese encoding (Section~\ref{subsec:patching_domaing_guarantees}), verifies the bound uniformly over a continuous patching domain. 
Circuits found with zero or mean patching are then evaluated under the same continuous-domain criterion as the certified setting. Results appear in Table~\ref{tab:patching_robustness}. Circuits found under standard patching (zero/mean) are sensitive to changes in the patching domain and yield low robustness, whereas the verified method certifies this property and achieves 100\% robustness. Despite the higher computational cost (due to the reliance on verification), the provable method delivers much stronger robustness at comparable circuit sizes.

\subsection{Exploring Different Minimality Guarantees of Circuit Discovery}
\label{subsec:minimality_guarantees}

We experiment with the minimality guarantees from Sec.~\ref{sec:circuits_to_min_circuits} and their connection to the robustness and patching guarantees of Sec.~\ref{input_patching_guarantees_sec}. For the $\Phi$ predicate, we certify \emph{both} input- and patching robustness using a double-siamese encoding (Sec.~\ref{subsec:patching_domaing_guarantees}), with environments $\mathcal{Z}\subseteq\mathcal{Z}'$, and run Alg.~\ref{alg:general_metric},~\ref{alg:contrastive_mhs_overall},~\ref{alg:qmsc}. Alg.~\ref{alg:contrastive_mhs_overall} is run with \(t_{\max}=3\), restricting the contrastive blocking-set enumeration to sets of size at most three. Our experiments show that MHS size consistently lower-bounds circuit size, with no circuit falling \emph{below} its MHS. In some runs, Alg.~\ref{alg:general_metric} circuits meet the bound exactly, and some MHS circuits are certified as faithful (i.e., satisfying both input and patching robustness), as shown in Fig.~\ref{fig:scatter}.




\begin{figure}[h!]
  \centering
  \begin{subfigure}{0.46\linewidth}
    \centering
    \includegraphics[width=\linewidth]{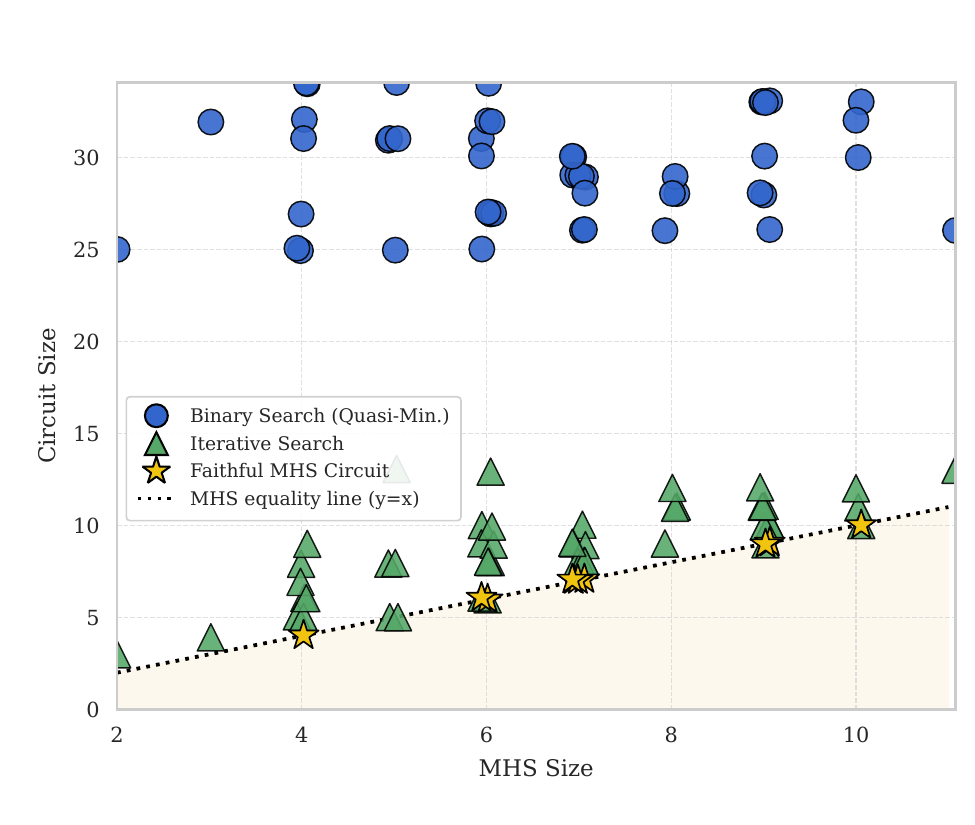}
    \caption{Circuit size against MHS size}
    \label{fig:scatter}
  \end{subfigure}\hfill
  \begin{subfigure}{0.46\linewidth}
    \centering
\includegraphics[width=\linewidth]{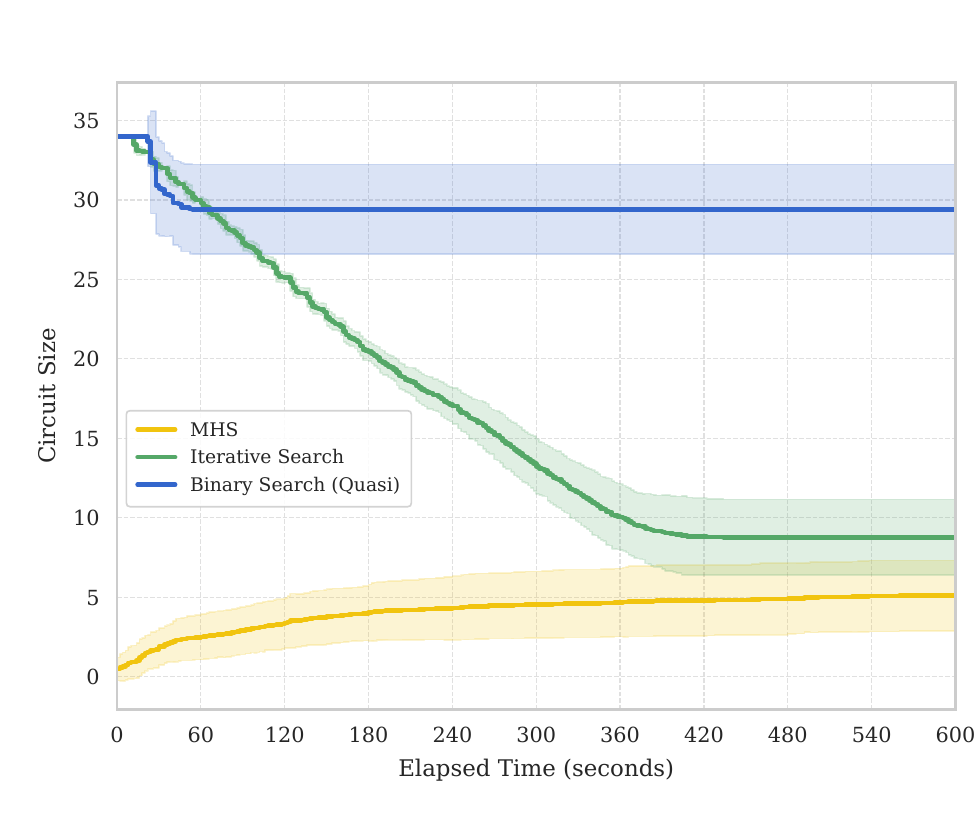}
    \caption{Circuit size over time}
    \label{fig:time}
  \end{subfigure}
  \caption{(a) Circuit size vs. MHS of blocking sets size, with the dashed equality line $y{=}x$ as the lower bound.
(b) Convergence of circuit size over the first 10 minutes; shaded region shows deviation.}
 \label{fig:minimality_and_time}
\end{figure}

We also note the \emph{efficiency–circuit size trade-off} in Figure~\ref{fig:time}: the quasi-minimal (Alg.~\ref{alg:qmsc}) procedure terminates fastest but plateaus at larger sizes with weaker minimality; the iterative search (Alg.~\ref{alg:general_metric}) achieves smaller sizes (stronger minimality) at higher runtime; and the MHS (Alg.~\ref{alg:contrastive_mhs_overall}) loop is slowest yet progressively approaches a cardinally minimal (optimal) circuit size.

\section{Related Work}

\textbf{Circuit discovery.} Our work joins recent efforts on circuit discovery in MI~\citep{olah2020zoom, elhage2021mathematical, dunefsky2024transcoders}, particularly those advancing \emph{automated} algorithms~\citep{conmy2023towards, hsuefficient}. Other relevant avenues include metrics for circuit faithfulness~\citep{marks2024sparse, hannahave}, differing patching strategies~\citep{jafari2025relp, syed2023attribution, haklay2025position,miller2024transformer, zhang2024towards}, minimality criteria~\citep{chowdhary2025kmshcunmaskingminimallysufficient, wang2023interpretability}, and applications~\citep{yao2024knowledge, sharkey2025open}.


\textbf{Theoretical investigations of MI.} Several recent directions have examined other theoretical aspects of MI, often linked to circuit discovery. These include framing MI within a broader causal abstraction framework \citep{geiger2025causal,geiger2024finding}, connecting it to distributed alignment search (DAS)~\citep{wu2023interpretability, sunhyperdas}; analyzing learned circuit logic through abstract interpretation from program analysis~\citep{palumbo2025validating}, proof theory~\citep{miller2024transformer,wutowards}, statistical identification~\citep{meloux2025everything}, and complexity theory~\citep{adolfi2024computational}.


\textbf{Neural network verification and formal explainability.} Our certification of robustness guarantees (input and patching) builds on the rapid progress of neural network verification~\citep{brix2024fifth, wang2021beta, zhou2024scalable,chiusdp,muller2021scaling, singh2019abstract}. These advances have also been applied to certifying provable guarantees for input-based \emph{explainability} notions~\citep{wu2023verix, bassan2025explaining, izza2024distance, audemard2022trading, la2021guaranteed} (often termed formal explainable AI~\citep{marques2022delivering}). Our work is the first to employ neural network verification based strategies for circuit discovery in mechanistic interpretability.


\section{Limitations and Future Work}


A limitation of our framework, shared by all methods offering robustness guarantees over continuous domains, is its reliance on neural network verification queries. While current verification techniques remain limited for state-of-the-art models, they are advancing rapidly in scalability~\citep{brix2024fifth, wang2021beta, zhou2024scalable}. Our framework provides a novel integration of such tools to \emph{mechanistic interpretability}, enabling circuit discovery with provable guarantees. Hence, as verification methods continue to scale, so will the reach of our approach, as our extensive experiments are grounded in $\alpha$-$\beta$-CROWN, the current leading verifier, and evaluated on standard benchmarks from the annual NN verification competition. Moreover, our novel \emph{theoretical} results, covering guarantees over input domains, patching domains, and minimality, lay strong groundwork for future research on \emph{provable} circuit discovery, including probabilistic and statistical forms of guarantees.

\section{Conclusion}

We introduce a framework for discovering circuits with provable guarantees, covering both \begin{inparaenum}[(i)] \item continuous \emph{input}-domain robustness, \item continuous \emph{patching}-domain robustness, and \item multiple forms of \emph{minimality}\end{inparaenum}. Central to our approach is the notion of \emph{circuit monotonicity}, which reveals deep theoretical connections between input, patching, and minimality guarantees, and underpins the convergence of circuit discovery algorithms. Our experiments, which leverage recent advancements in neural network verification, confirm that our framework delivers substantially stronger guarantees than standard sampling-based approaches commonly used in circuit discovery. By bridging circuit discovery with neural network verification, this work takes a novel step toward designing \emph{safer, more reliable circuits}, and lays new theoretical and algorithmic foundations for future research in provable circuit discovery.

\section*{Acknowledgments}

This work was partially funded by the European Union
(ERC, VeriDeL, 101112713). Views and opinions expressed
are however those of the author(s) only and do not necessarily reflect those of the European Union or the European
Research Council Executive Agency. Neither the European
Union nor the granting authority can be held responsible for
them. This research was additionally supported by a grant
from the Israeli Science Foundation (grant number 558/24).

\bibliography{iclr2026_conference}
\bibliographystyle{iclr2026_conference}

\appendix
\onecolumn

\setcounter{definition}{0}
\setcounter{proposition}{0}
\setcounter{theorem}{0}
\setcounter{lemma}{0}
\begin{center}\begin{huge} Appendix\end{huge}\end{center}





\noindent{The appendix collects proofs, model specifications, and supplementary experimental results that support the main paper.}








\newlist{MyIndentedList}{itemize}{4}
\setlist[MyIndentedList,1]{%
	label={},
	noitemsep,
	leftmargin=0pt,
}

\begin{MyIndentedList}

\item \textbf{Appendix~\ref{app:background}} contains additional background on neural network verification, circuit discovery, and patching.

\item \textbf{Appendix~\ref{app:proofs}} contains the complete proofs of all propositions.

\item \textbf{Appendix~\ref{app:quasi_and_illustration}} provides the pseudocode for the greedy quasi-minimality algorithm, together with a more detailed explanation of the toy example illustrating the minimality notion.

\item \textbf{Appendix~\ref{app:exps_setup}} provides specifications of the datasets and architectures used.

\item \textbf{Appendix~\ref{app:input_robusness_exp}} provides additional details on the input-robustness experiment’s methodology, verification setup, and evaluation.

\item \textbf{Appendix~\ref{app:patching_robusness_exp}} provides additional details on the patching-robustness experiment’s methodology, verification setup, and evaluation.

\item \textbf{Appendix~\ref{app:exploring_minimality_exp}} provides details on the minimality evaluation experiment’s methodology, verification setup, and evaluation.

\item \textbf{Appendix~\ref{app:llm-disclosure}} provides the LLM usage discolsure.

\end{MyIndentedList}


\section{Additional Background on Neural Network Verification, Circuit Discovery, and Formal XAI}
\label{app:background}
\paragraph{Neural Network Verification.} Neural network verification provides \emph{formal guarantees} about the behavior of a neural network $f_G$ over a continuous input region. Classic SMT/MILP-based approaches~\citep{katz2017reluplex, wu2024marabou, katz2019marabou, tjengevaluating, ehlers2017formal} encode ReLU networks and specifications as logical or mixed-integer constraints and offer exact guarantees, but scale only to small–medium models. Abstract-interpretation methods~\citep{singh2019abstract,gehr2018ai2, ferraricomplete, muller2022prima} propagate over-approximations layer by layer, giving fast but incomplete robustness certificates. A major advance came from linear-relaxation–based bound propagation, notably CROWN~\citep{zhang2018efficient} and follow-ups~\citep{wang2021beta, chiusdp, zhou2024scalable, shi2025neural}, which compute tight dual-based linear bounds and serve either as scalable incomplete verifiers or as strong relaxations inside exact search. Modern branch-and-bound (BaB) frameworks leverage these relaxations to achieve \emph{complete} verification at scale, with $\alpha$-$\beta$-CROWN and related variants now dominating VNN-COMP~\citep{brix2024fifth} and handling million-parameter models. Recent progress includes tighter relaxations (e.g., SDP hybrids~\citep{chiusdp}), cutting planes~\citep{zhou2024scalable}, and support for non-ReLU nonlinearities~\citep{shi2025neural}. Verification today routinely certifies robustness for moderately large CNNs and ResNets, though major challenges remain for transformers, complex architectures, and richer temporal or relational specifications.

\textbf{Circuit discovery and patching.} An important step in circuit discovery is \emph{patching}, which seeks to isolate the computational role of a hypothesized circuit by intervening on the activations outside it. In a typical setup, the model is run on a base input, and activations at selected non-circuit nodes are replaced --- either with fixed baseline values (e.g., zero or mean activation) or with activations taken from a counterfactual input. If the model’s output remains unchanged, the circuit is understood to be sufficient for the behavior; if it changes, this reveals a dependency on the patched components. Numerous patching protocols have been proposed, including activation replacement, path patching, and attention/head interventions~\citep{jafari2025relp, syed2023attribution, haklay2025position, miller2024transformer, zhang2024towards, nanda2024progressmeasure}, all aiming to identify model components whose behavior is necessary or sufficient for a target computation. To the best of our knowledge, our method is the first to provably certify the stability of circuits under families of such patching interventions.

\textbf{Formal XAI.} Our work is also related to the line of research known as \emph{formal explainable AI} (formal XAI)~\citep{marques2022delivering}, which studies input-based explanations with formal guarantees~\citep{yu2023eliminating, darwiche2022computation, darwiche2023logic, shih2018symbolic, azzolinbeyond, audemard2021computational}. Since generating explanations with formal guarantees is often computationally intractable~\citep{adolfi2024computational, BaMoPeSu20, arenas2021foundations, bassan2026unifying, marzoukbassanshap, marzouk2025computational, marzouk2024tractability, amir2024hard, bassanlocal, blanc2021provably, blanc2022query}, much of this literature has focused on simpler model classes~\citep{marques2023no}, including decision trees~\citep{bouniausing, arenas2022computing, bounia2024enhancing}, linear models~\citep{marques2020explaining}, additive models~\citep{bassan2025additive, bassan2026provably}, monotonic models~\citep{marques2021explanations, harzli2022cardinality}, and tree ensembles~\citep{audemard2023computing, audemard2022preferred, ignatiev2022using, bassanmakes, boumazouza2021asteryx}. Closer to our setting, several works have extended these guarantees to neural networks~\citep{Laagmirhpanikwma21, bassan2023formally, wu2023verix, bassan2025explaining, boumazouza2026fame, labbaf2025space, de2025faster, fel2023don, soria2025formal}.

A central concept in formal explainability is that of a \emph{sufficient reason}~\citep{darwiche2020reasons, bassan2025explain, bassan2025self, BaMoPeSu20}, also referred to as an \emph{abductive explanation}~\citep{ignatiev2019abduction, izza2023computing}. These notions are closely related to stability-based feature attribution frameworks~\citep{xue2023stability, jin2025probabilistic, anani2025pixel}. Informally, a sufficient reason resembles a circuit in that fixing it preserves the model’s prediction. Various minimality notions have been studied in this context, including subset-minimal explanations~\citep{ignatiev2019abduction}, as well as cardinally-minimal explanations that are computed via a similar minimal hitting set (MHS) duality~\citep{ignatiev2019abduction, bassan2023towards} that was studied here. However, our work addresses a fundamentally different task: discovering \emph{circuits} rather than input subsets. Moving from subsets of features to subcomponents of the model introduces several new challenges, including dual encodings of the model for certification, the need to handle patching operations, and the loss of monotonicity under certain patching schemes, which complicates convergence to minimal circuits. To the best of our knowledge, this is the first work to leverage ideas from formal explainability within mechanistic interpretability, and in particular for the task of circuit discovery.

Lastly, another partially relevant line of work has incorporated neural network verification to activation-pattern specifications (NAPs)~\citep{geng2023towards}, which encode active/inactive neuron states and induce input regions beyond local perturbation balls. Recent work~\citep{geng2024learning} further aims to learn minimal NAPs by removing redundant neuron states while preserving correctness. These also differ from our circuits which compute actual subgraphs of neural networks.


\section{Proofs of Main Results}\label{app:proofs}



This appendix presents the proofs of the main propositions stated in the main paper.

\subsection{Proof of Proposition~\ref{basic_local_minima_appendix}}\label{app:proof_prop_local_min}

\begin{proposition}
\label{basic_local_minima_appendix}
Given any model $f_G$ and faithfulness predicate $\Phi$, Algorithm~\ref{alg:general_metric} visits a \emph{quasi-minimal} circuit and Algorithm~\ref{alg:exhaustive_metric_iterative} converges to a \emph{locally-minimal} circuit $C$, both with respect to $\Phi$.
\end{proposition}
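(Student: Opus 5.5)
We argue directly from the definitions of quasi- and local minimality (Def.~\ref{quasi_minimal_circuit}, Def.~\ref{local_minimal_circuit}) and the control flow of the two algorithms. We implicitly assume, as the paper does, that $\Phi(G,G)$ holds: the full model is faithful to itself, so the initial $C=G$ is valid. The invariant for both algorithms is that $\Phi(C,G)$ holds after every iteration. A removal $C\gets C\setminus\{i\}$ is performed only when $\Phi(C\setminus\{i\},G)$ has just been verified, so by induction on the number of iterations every circuit the algorithm holds is faithful.

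For Algorithm~\ref{alg:general_metric}, ``visits'' means that at some point during the run the current $C$ is quasi-minimal. Consider the first index $i$ at which the test $\Phi(C\setminus\{i\},G)$ fails, with $C$ the current circuit at that moment. At that moment $\Phi(C,G)$ holds by the invariant. Moreover $i\in C$, because every element visited so far was removed, so $i$ has not been removed. Since $\Phi(C\setminus\{i\},G)$ is false, $i$ is a breaking point and $C$ is quasi-minimal.

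The degenerate case is that the test never fails, so every element is removed and the algorithm returns $\emptyset$ with $\Phi(\emptyset,G)$ true. I would handle it by convention: either the empty circuit is treated as trivially (vacuously) minimal, or one assumes, as is standard, that $\Phi(\emptyset,G)$ is false. The latter is the cleanest choice, and I would state it explicitly, since then at least one test must fail (the last one, at the latest). I expect this edge case, together with pinning down the meaning of ``visits'', to be the only real subtlety.

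For Algorithm~\ref{alg:exhaustive_metric_iterative}, I argue termination and then the stopping condition.
\begin{itemize}
\item \emph{Termination.} Each full pass over the remaining components either removes at least one element, strictly decreasing $|C|$, or removes none, in which case the algorithm stops. Since $|C|\le|G|$ is finite, the algorithm terminates after at most $|G|+1$ passes.
\item \emph{Local minimality.} At termination the last pass made no removal, so for every $i\in C$ the test $\Phi(C\setminus\{i\},G)$ was evaluated on this same final $C$ (which was unchanged throughout the pass) and returned false. Together with the invariant that $\Phi(C,G)$ holds, this is exactly Def.~\ref{local_minimal_circuit}.
\end{itemize}
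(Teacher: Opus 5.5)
Your proof is correct and follows the paper's argument essentially step for step. It uses the faithfulness invariant from $\Phi(G,G)$, the first failed removal test in Algorithm~\ref{alg:general_metric} under the assumption $\neg\Phi(\emptyset,G)$ (which is exactly what the paper assumes), and the no-change final pass of Algorithm~\ref{alg:exhaustive_metric_iterative}, to which you usefully add an explicit termination bound. One small correction: the ``vacuous'' alternative you mention would not handle the degenerate case, because Def.~\ref{quasi_minimal_circuit} requires an actual witness element $i\in C$ (and if $\Phi$ is identically true, no circuit is quasi-minimal), so the assumption $\neg\Phi(\emptyset,G)$ is genuinely needed rather than a matter of convention.
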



\emph{Proof.} We must establish three points:
\begin{inparaenum}[(i)]
    \item \emph{Faithfulness:} The final circuit $C$ returned by Algorithms~\ref{alg:general_metric} and \ref{alg:exhaustive_metric_iterative} is indeed faithful, i.e., the predicate $\Phi(C,G)$ holds true.
    \item Algorithm~\ref{alg:general_metric} visits a quasi-minimal circuit during its execution, i.e., there exists an iteration $t$ where the corresponding circuit $C_t$ contains a component $i$ such that $\Phi(C_t \setminus \{i\}, G)$ is false.
    \item Algorithm~\ref{alg:exhaustive_metric_iterative} achieves local minimality: In the returned circuit $C$, removing any single component $i\in C$ breaks faithfulness; that is, for every $i \in C$, $\Phi(C\setminus \{i\},G)$ is false.

\end{inparaenum}


In the first part of the proof, both Algorithms ~\ref{alg:general_metric} and \ref{alg:exhaustive_metric_iterative} start with $C \gets G$. Since the full model graph $G$ is, by definition, faithful, $\Phi(G,G)$ holds true. At each step, the algorithms check $\Phi(C \setminus \{i\}, G)$ for the current component $i$ and update $C \gets C \setminus \{i\}$ only if this predicate remains true. Hence, the invariant is preserved throughout the execution, and the final circuit $C$ still satisfies $\Phi(C,G)$.

For the second part, we prove that Algorithm~\ref{alg:general_metric} visits a quasi-minimal circuit during its execution. We assume that the empty circuit is unfaithful, i.e., $\neg \Phi(\emptyset, G)$. Let $C_t$ denote the circuit maintained by the algorithm at iteration $t$. Since the algorithm maintains the invariant $\Phi(C_t, G)$ and removes components only when faithfulness is preserved, it cannot reach the empty circuit. Hence, not all components can be removed while maintaining $\Phi$. Therefore, there exists a first iteration $t$ at which the algorithm considers some component $i \in C_t$ and does not remove it because $\Phi(C_t \setminus \{i\}, G)$ is false. Thus, $C_t$ contains a component whose removal violates faithfulness and is, by definition, quasi-minimal with respect to $\Phi$. This establishes that Algorithm~\ref{alg:general_metric} visits a quasi-minimal circuit during its execution.

For the third part of the proof, note that Algorithm~\ref{alg:exhaustive_metric_iterative} continues to iterate over the circuit components until a full pass occurs with no further changes to the circuit $C$. This termination condition implies that for any remaining $i \in C$, the predicate $\Phi(C \setminus \{i\}, G)$ must have been false during the final iteration. Otherwise, if $\Phi(C \setminus \{i\}, G)$ were true for any $i \in C$, the component would have been removed, and the algorithm would have proceeded to another iteration. Therefore, upon termination, every $i \in C$ is essential for maintaining the faithfulness predicate, establishing that $C$ is locally minimal with respect to $\Phi$. This completes the proof.


$\qedsymbol{}$

\subsection{Proof of Proposition~\ref{quasi_minimal_prop_appendix}}\label{app:proof_prop_local_min}

\begin{proposition}\label{quasi_minimal_prop_appendix}While Algorithm~\ref{alg:qmsc} converges to a quasi-minimal circuit and performs $\mathcal{O}(\log|G|)$ evaluations of $\Phi(C,G)$~\citep{adolfi2024computational}, Algorithm~\ref{alg:general_metric} visits a quasi-minimal circuit and performs $\mathcal{O}(|G|)$ 
such evaluations, and Algorithm~\ref{alg:exhaustive_metric_iterative} converges to a locally-minimal circuit and performs $\mathcal{O}(|G|^2)$ evaluations.
\end{proposition}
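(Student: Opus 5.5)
The proof splits into three independent claims, one per algorithm. Each has a correctness part and a query-count part. For correctness I would reuse Proposition~\ref{basic_local_minima_appendix} wherever possible. It already shows that Algorithm~\ref{alg:general_metric} visits a quasi-minimal circuit and that Algorithm~\ref{alg:exhaustive_metric_iterative} converges to a locally-minimal one, under the standing assumptions $\Phi(G,G)$ and $\neg\Phi(\emptyset,G)$. So for these two algorithms only the evaluation counts need to be added. For Algorithm~\ref{alg:qmsc} I would take the correctness and logarithmic bound from \citep{adolfi2024computational} and sketch the argument so the statement is self-contained.

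For Algorithm~\ref{alg:general_metric}, the count is immediate. The loop ranges once over the initial ordering of $G$, and each element triggers exactly one evaluation of $\Phi(C\setminus\{i\},G)$. That gives exactly $|G|$ evaluations, hence $\mathcal{O}(|G|)$.

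For Algorithm~\ref{alg:exhaustive_metric_iterative}, I would bound the number of passes. Each pass costs at most $|C|\le|G|$ evaluations. Every pass except the last removes at least one element, since otherwise the algorithm would have terminated. Because $|C|$ starts at $|G|$ and never drops below $1$ (by $\neg\Phi(\emptyset,G)$ and the faithfulness invariant), there are at most $|G|$ passes. Multiplying gives $\mathcal{O}(|G|^2)$ evaluations.

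For Algorithm~\ref{alg:qmsc}, I would run a binary-search invariant argument over prefixes of a fixed ordering $e_1,\dots,e_n$ of $G$. Let $P_k=\{e_1,\dots,e_k\}$ and maintain indices $lo<hi$ such that $\Phi(P_{hi},G)$ holds and $\Phi(P_{lo},G)$ fails. The invariant holds initially with $lo=0$ and $hi=n$, because $\neg\Phi(\emptyset,G)$ and $\Phi(G,G)$. Each step evaluates $\Phi$ at the midpoint and replaces either $lo$ or $hi$, which preserves the invariant and halves the interval. After $\lceil\log_2 n\rceil$ evaluations we reach $hi=lo+1$. Then $C=P_{hi}$ is faithful and $C\setminus\{e_{hi}\}=P_{lo}$ is unfaithful, so $C$ is quasi-minimal by Definition~\ref{quasi_minimal_circuit}.

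The main obstacle I expect is matching this sketch to the exact pseudocode of Algorithm~\ref{alg:qmsc}, which may operate on suffixes or removals rather than prefixes. I would adapt the invariant to whatever nested chain of candidate sets the algorithm actually queries. The key requirement is that the two endpoints of the search differ in exactly one element, and I would verify this from the pseudocode. No monotonicity of $\Phi$ is needed, since the invariant only uses the two endpoint evaluations.
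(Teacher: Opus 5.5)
Your proposal is correct and follows the paper's proof. Like the paper, you reuse Proposition~\ref{basic_local_minima_appendix} for the correctness of Algorithms~\ref{alg:general_metric} and~\ref{alg:exhaustive_metric_iterative}, count one evaluation per element for the former, bound (number of passes)$\times$(cost per pass) for the latter, and use interval halving for Algorithm~\ref{alg:qmsc}. Your ``at most $|G|$ passes of at most $|G|$ queries'' is a slightly tidier form of the paper's worst-case sum $|G|+(|G|-1)+\dots+1$.

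The one addition is your self-contained invariant for Algorithm~\ref{alg:qmsc}, where the paper simply defers to \citet{adolfi2024computational}. It matches the pseudocode once you note that the queried sets are the suffixes $G\setminus G[1:\mathrm{mid}]$. The faithful endpoint is therefore indexed by $\mathrm{low}$, with $C=G\setminus G[1:\mathrm{low}]$ throughout. The returned circuit loses faithfulness when its first element $e_{\mathrm{low}+1}$ is removed.
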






\emph{Proof.} We provide the proof for each algorithm:

{\emph{(Algorithm~\ref{alg:general_metric})}. By Proposition~\ref{basic_local_minima_appendix}, Algorithm~\ref{alg:general_metric} visits a quasi-minimal circuit during its execution. Observe that the algorithm ~\ref{alg:general_metric} processes each component of $G$ individually and exactly once. Since it tests $\Phi$ upon the removal of each of these $|G|$ components, the procedure carries out $\mathcal{O}(|G|)$ predicate evaluations in total.

\emph{(Algorithm~\ref{alg:exhaustive_metric_iterative})}. By Proposition~\ref{basic_local_minima_appendix}, Algorithm~\ref{alg:exhaustive_metric_iterative} converges to a locally minimal circuit. 
For the runtime analysis, observe that Algorithm~\ref{alg:exhaustive_metric_iterative} processes each component of $G$ and continues to do so until convergence. In each pass, it tests the predicate $\Phi$ upon the removal of every remaining component. In the worst case, when only a single component is removed in each pass, the total number of predicate evaluations is
$ |G| + (|G|-1) + (|G|-2) + \dots + 1 = \mathcal{O}(|G|^2)$. Thus, the procedure performs $\mathcal{O}(|G|^2)$ evaluations of $\Phi(C,G)$ in total.

\emph{(Algorithm~\ref{alg:qmsc})}.The quasi-minimality algorithm~\ref{alg:qmsc} and its runtime were established in~\citet{adolfi2024computational}. For completeness, we restate the argument regarding the number of evaluations: the algorithm halves the candidate index range at each step, requiring at most $\lceil \log |G|\rceil$ predicate evaluations before termination. Thus, its complexity is $\mathcal{O}(\log|G|)$.

$\qedsymbol{}$

\subsection{Proof of Proposition~\ref{counter_example_proof_appendix}}
\begin{proposition}
\label{counter_example_proof_appendix}
There exist infinitely-many number of configurations of $f_G$, and $\Phi$, for which Algorithms~\ref{alg:general_metric},~\ref{alg:exhaustive_metric_iterative} and ~\ref{alg:qmsc} \emph{do not} converge to a subset-minimal circuit $C$ concerning $\Phi$.
\end{proposition}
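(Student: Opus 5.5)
The plan is to exhibit one explicit family of counterexamples parametrized by an integer $n$, which gives infinitely many configurations. The family is built from the XOR pattern of the running example in Fig.~\ref{fig:toy_minimality_wrap}. The key observation is that all three algorithms only ever test single-element removals, or contiguous-prefix removals in the case of Algorithm~\ref{alg:qmsc}. We therefore construct $f_G$ and $\Phi$ so that some circuit $C$ is locally minimal but not subset-minimal, and so that each algorithm, under a fixed element ordering, is forced to land on such a $C$.

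\textbf{Construction.} For $n\ge 3$, take $G_n=\{v_1,\dots,v_n\}$ with $f_{G_n}(x_1,x_2)=v_1\oplus\cdots\oplus v_n$. Set $v_1=x_2$, $v_2=v_3=x_1$, and pad with pairs of identical components, e.g.\ further copies of $x_1$, so that the total XOR still equals $x_2$. Let $\Phi$ be exact functional equality over $\{0,1\}^2$ with zero-patching; this is non-monotonic. Since the algorithms only ever touch $\Phi$ as a predicate on subsets, I could more simply specify $\Phi$ abstractly as the indicator of a chosen family $\mathcal{F}_n\subseteq 2^{G_n}$ of ``faithful'' sets. A convenient choice is
\[
\mathcal{F}_n=\{G_n\}\cup\{\{v_1\}\}\cup\{\{v_1,v_2,v_3\}\}\cup\{\text{all supersets of }\{v_1,v_2,v_3\}\},
\]
which is realizable by the XOR network above. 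Under it, $C^\ast=\{v_1,v_2,v_3\}$ is faithful and each single deletion gives an unfaithful set. Yet $C^\ast\setminus\{v_2,v_3\}=\{v_1\}$ is faithful, so $C^\ast$ is not subset-minimal.

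\textbf{Checking each algorithm.} For Algorithm~\ref{alg:general_metric} and Algorithm~\ref{alg:exhaustive_metric_iterative}, use an ordering that visits $v_4,\dots,v_n$ first. Each of these removals keeps a superset of $C^\ast$ and so is accepted. The algorithm then reaches $C^\ast$, where every single removal fails. Hence both the single pass and every further exhaustive pass stop at $C^\ast$, and this is consistent with Proposition~\ref{basic_local_minima_appendix}. For Algorithm~\ref{alg:qmsc}, I will use the ordering with $v_1,v_2,v_3$ placed last. The binary search for the largest removable prefix then finds the prefix $v_4,\dots,v_n$, since every shorter prefix leaves a superset of $C^\ast$, and it returns $C^\ast$ or a superset of it. Any superset of $C^\ast$ is also not subset-minimal, because deleting everything except $v_1$ yields a faithful set.

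\textbf{Main obstacle.} The hardest step is Algorithm~\ref{alg:qmsc}, whose precise pseudocode lives in the appendix. I must check that its prefix/bisection logic never tests a set like $\{v_1\}$ that would let it jump to the subset-minimal answer. To handle this, I will place $v_1$ last in the ordering, or make $\{v_1\}$ faithful only when reached by a non-prefix removal, so that every queried prefix complement is a superset of $C^\ast$. Finally, varying $n$ (and, if desired, the tolerance $\delta$ or continuous embeddings of the Boolean inputs) yields infinitely many distinct configurations.
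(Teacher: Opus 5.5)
There is a genuine gap in how you tie $\mathcal{F}_n$ to an actual network. Take the padding you specify (pairs of identical non-zero components, e.g.\ extra copies of $x_1$) with $\Phi$ equal to exact equality under zero-patching. Then $f_C=\bigoplus_{v\in C}v$, so $C$ is faithful iff $v_1\in C$ and $C$ contains an even number of $x_1$-copies; this also forces $n$ odd. Hence $\mathcal{F}_n$ is \emph{not} realized by that network. For instance, $\{v_1,v_2,v_3,v_4\}$ is unfaithful, so the very first step of your trace (deleting $v_4$ from $G$) is rejected, and Algorithms~\ref{alg:general_metric} and~\ref{alg:exhaustive_metric_iterative} never reach $C^\ast$.

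If you fall back on $\Phi$ as the bare indicator of $\mathcal{F}_n$, the argument is valid only under the reading that $\Phi$ is an arbitrary set function unrelated to $f_G$. Then $f_G$ plays no role, and the realizability claim must be dropped. There are two easy repairs:
\begin{itemize}
\item Pad with identically-zero components, like $v_6=x_2\oplus x_2$ in the running example. Deleting padding is then harmless, and your descending trace becomes correct.
\item Keep the cancelling pairs and note that every single deletion from $G$ then flips the parity or removes $x_2$. So $G$ is already locally minimal and is returned by Algorithms~\ref{alg:general_metric} and~\ref{alg:exhaustive_metric_iterative}, and by Algorithm~\ref{alg:qmsc} if $v_1$ is ordered first. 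Meanwhile $\{v_1\}\subsetneq G$ is faithful.
\end{itemize}
The second repair is exactly the paper's route. It uses $f_G(x)=\mathrm{ReLU}(x)-\mathrm{ReLU}(x)+x$ on $[0,1]$ with zero-patching, where $G$ is locally minimal but $G\setminus\{v_1,v_2\}$ is faithful. It obtains the infinite family by adding further cancelling pairs $(p_i,q_i)$.

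Your treatment of Algorithm~\ref{alg:qmsc} is also not correct as argued. Placing $v_1$ last is what \emph{creates} the danger: the complement of the length-$(n-1)$ prefix is then exactly $\{v_1\}$, which is faithful. So the prefix predicate is non-monotone: faithful for $m\le n-3$, unfaithful at $m=n-2$, faithful again at $m=n-1$.

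Your invariant that every queried prefix complement is a superset of $C^\ast$ cannot hold in any run that returns $C^\ast$. The loop ends only after some query has lowered $\text{high}$ below $n$. That query found an unfaithful set, and under $\mathcal{F}_n$ an unfaithful set is not a superset of $C^\ast$. The only alternative ending is reaching $\text{low}=n-1$, i.e.\ returning $\{v_1\}$. Making $\{v_1\}$ ``faithful only when reached by a non-prefix removal'' is not available either, since $\Phi$ depends only on the set.

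The conclusion can be rescued in either of two ways:
\begin{itemize}
\item Use the ordering $(v_4,\dots,v_n,v_1,v_2,v_3)$. Then $\{v_1\}$ is never a prefix complement, the prefix predicate is monotone, and bisection returns exactly $C^\ast$.
\item Keep $v_1$ last and check the indices. $\text{low}$ can only leave $\{0,\dots,n-3\}$ through a faithful query at $n-1$. While $\text{low}\le n-3$, one has $\text{mid}\le\lfloor(2n-3)/2\rfloor=n-2$, so that query never happens.
\end{itemize}
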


\emph{Proof.}
Consider a small nonlinear counterexample network $f_G:\mathbb{R}\to\mathbb{R}$, with underlying structure $G$ and node set $V_G:=\{v_1,v_2,v_3,v_4\}$. The network is defined over a one-dimensional input 
$x$, with three hidden nodes $v_1,v_2,v_3$ and an output node $v_4$.  

Consider a one-dimensional input $x$, three hidden units $v_1,v_2,v_3$, and an output neuron $v_4$:

\begin{equation}
v_1(x):=\mathrm{ReLU}(x),\qquad
v_2(x):=\mathrm{ReLU}(x),\qquad
v_3(x):=x,\qquad
v_4(u,v,w):=u-v+w
\end{equation}

Therefore, the following holds:

\begin{equation}
f_G(x)\;:=\;v_4\big(v_1(x),v_2(x),v_3(x)\big)\;=\;v_1(x)-v_2(x)+v_3(x)\;=\;x
\end{equation}

For any subset $C\subseteq\{v_1,v_2,v_3\}$, define $f_C$ by applying \emph{zero-patching} to all hidden units outside $C$. We take the faithfulness predicate to be \emph{strong equality}, i.e., corresponding to $\delta=0$ and $\epsilon_p$ given by the $\ell_{\infty}$ norm over the domain $[0,1]$. In particular:


\begin{equation}
\Phi(C,G)\quad\Longleftrightarrow\quad \forall\,x\in[0,1],\ \ f_C(x)=f_G(x)
\end{equation}


\textit{Iterative (Algorithms ~\ref{alg:general_metric} and ~\ref{alg:exhaustive_metric_iterative}}).
We consider the standard traversal order over these components from end to start (i.e., reverse topological order): $(v_4, v_3, v_2, v_1)$. We note that removing $v_4$ (i.e., zero-patching it) directly breaks faithfulness. Moreover, no other single-component removal preserves $\Phi$: removing any hidden unit breaks the equality.

\begin{equation}
\label{equation:not_subest_minimal}
\begin{aligned}
&f_{G\setminus\{v_3\}}(x)= \mathrm{ReLU}(x)-\mathrm{ReLU}(x)=0 \neq x \ \text{for } x > 0,\\
&f_{G\setminus\{v_2\}}(x)= \mathrm{ReLU}(x)+x \neq x \ \text{for } x>0, \\
&f_{G\setminus\{v_1\}}(x)= -\mathrm{ReLU}(x)+x \neq x \ \text{for } x>0
\end{aligned}
\end{equation}

Hence, Algorithm~\ref{alg:general_metric} and Algorithm~\ref{alg:exhaustive_metric_iterative} halt at $C=G$ (locally minimal).
Yet a strict subset is faithful: removing the pair $\{v_1,v_2\}$ yields
$f_{G\setminus\{v_1,v_2\}}(x)=v_3(x)=x$, so $\Phi(G\setminus\{v_1,v_2\},G)$ holds.  
Therefore, $G$ is not subset-minimal.


\textit{Quasi (Algorithm~\ref{alg:qmsc}).}
With a valid order $(v_1,v_3,v_2, v_4)$, Algorithm~\ref{alg:qmsc} first tests the prefix removal $\{v_1,v_3\}$, leaving $f_G(x)=-\mathrm{ReLU}(x)\neq x$, then tests $\{v_1\}$, leaving $f_G(x)=-\mathrm{ReLU}(x)+x\neq x$; it never considers the pair $\{v_1,v_2\}$ and so returns $C=G$, which is not subset-minimal, as we previously showed in Equation~\ref{equation:not_subest_minimal}.

Since this holds for every $m\in\mathbb N$, we obtain infinitely many configurations where both algorithms
fail to return a subset-minimal circuit.

\medskip\noindent\textit{Infinite family.}
For any $m\ge1$, add pairs $(p_i,q_i)$ with $p_i(x)=q_i(x)=\mathrm{ReLU}(x)$ and set
\[
g_v(x)\;=\;v_3(x)+\sum_{i=1}^m\big(p_i(x)-q_i(x)\big).
\]
Then $f_G(x)=x$ on $[0,1]$, any single removal (of $v_3$, some $p_j$, or some $q_j$) breaks equality, but removing a whole pair $\{p_j,q_j\}$ preserves it. Under the ordering $(p_1,\dots,p_m,v_3,q_1,\dots,q_m, v_4)$, the iterative greedy Algorithms ~\ref{alg:general_metric} and~\ref{alg:exhaustive_metric_iterative} and the quasi-minimal Algorithm~\ref{alg:qmsc} both return $G$. Thus, there are infinitely many such configurations.

$\qedsymbol{}$










\subsection{Proof of Proposition~\ref{monotonic_subset_minimal_proof}}\label{app:proof_prop_local_min}

\begin{proposition}
\label{monotonic_subset_minimal_proof}
    If $\Phi$ is monotonic, then for any model $f_G$, Algorithm~\ref{alg:general_metric} converges to a subset-minimal circuit $C$ concerning $\Phi$.
\end{proposition}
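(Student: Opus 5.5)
The proof has two parts. First, the returned circuit is faithful, which is already given by Proposition~\ref{basic_local_minima_appendix}. Second, under monotonicity no strict subset of it is faithful. Write the elements of $G$ in the processing order $i_1,\dots,i_N$ used by Algorithm~\ref{alg:general_metric}. Let $C_t$ denote the circuit held after step $t$, with $C_0=G$ and final output $C=C_N$. The only structural fact needed is that the sequence is non-increasing: $C_0\supseteq C_1\supseteq\cdots\supseteq C_N$. This holds because the algorithm only ever removes elements. Faithfulness $\Phi(C_N,G)$ follows from the invariant argument in Proposition~\ref{basic_local_minima_appendix}.

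The key step is to show that every element $i\in C$ is \emph{locally} essential with respect to the final circuit, even though the algorithm tested it only once, and against a possibly larger set. Take $i=i_t\in C$. Since $i$ was not removed at step $t$, the test $\Phi(C_{t-1}\setminus\{i\},G)$ must have failed. Because the sequence is non-increasing, $C\subseteq C_{t-1}$, and therefore $C\setminus\{i\}\subseteq C_{t-1}\setminus\{i\}$.

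Now apply the contrapositive of monotonicity. Since $C_{t-1}\setminus\{i\}$ is a superset of $C\setminus\{i\}$ and is unfaithful, $C\setminus\{i\}$ must also be unfaithful: if it were faithful, its superset would be too. So $\neg\Phi(C\setminus\{i\},G)$ holds for every $i\in C$, i.e., $C$ is locally minimal. This step is the main point of the argument: it is exactly where monotonicity removes the order-dependence that broke the counterexample in Proposition~\ref{counter_example_proof_appendix}.

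Finally, we upgrade local minimality to subset-minimality, again using monotonicity. Suppose some nonempty $S\subseteq C$ gave a faithful circuit $C\setminus S$. (I will read Definition~\ref{subset_minimal} as ranging over nonempty $S$; for $S=\emptyset$ the condition is vacuously at odds with $\Phi(C,G)$.) Pick any $i\in S$. Then $C\setminus S\subseteq C\setminus\{i\}$, so monotonicity makes $C\setminus\{i\}$ faithful, contradicting the previous step. Hence no nonempty subset of $C$ can be removed while preserving faithfulness, and $C$ is subset-minimal.

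Since the algorithm performs a single finite pass, it terminates, and "converges" here simply means it returns this $C$.
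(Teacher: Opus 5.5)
Your proof is correct and takes essentially the paper's route. The paper picks $c^\star\in C\setminus C'$ and applies monotonicity once along the chain $C'\subseteq C\setminus\{c^\star\}\subseteq C_t\setminus\{c^\star\}$, whereas you split that chain into two monotonicity steps: first local minimality of the output, then the upgrade from local to subset minimality. Your reading of Definition~\ref{subset_minimal} as ranging over nonempty $S$ matches the paper's intended condition that no proper subgraph $C'\subsetneq C$ is faithful.
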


\emph{Proof.} By definition, establishing subset-minimality of the circuit $C$ requires showing:
\begin{inparaenum}[(i)]
\item that $C$ is faithful, i.e., $\Phi(C,G)$ holds, and
\item that no proper subgraph $C' \subseteq C$ also satisfies $\Phi(C',G)$.
\end{inparaenum}

For the first part of the proof, we follow the same faithfulness-preservation argument used in the proof of Proposition~\ref{basic_local_minima_appendix}. Algorithm~\ref{alg:general_metric} begins with the full model $G$, which is initially faithful to itself ($\Phi(G, G)$ holds). It then iterates through components in a fixed order (e.g., reverse topological order). At each step, the algorithm maintains a current circuit $C_t$; a component $i$ is removed only if its removal preserves faithfulness, i.e., if $\Phi(C_t \setminus \{i\}, G)$ holds. Consequently, faithfulness is maintained as a loop invariant throughout the execution. In particular, upon termination, the resulting circuit $C$ satisfies $\Phi(C, G)$.

For the second part of the proof, assume towards contradiction that there exists some $C'\subsetneq C$ for which it holds that $\Phi(C',G)$ is true. Now, pick any $c^\star\in C\setminus C'$. Let $C_t$ be the circuit at iteration $t$ where this iteration marks the step over which the algorithm has evaluated whether to add $c^\star$ to the circuit. Because \(c^\star\) is present in the \emph{final} returned circuit \(C\), then by induction it was not removed at step $t$ when considered, and hence it must hold that $\Phi(C_t \setminus \{c^\star\},G)$ is false. On the other hand, we have the following inclusions:

\begin{equation}
C' \;\subseteq\; C\setminus\{c^\star\} \;\subseteq\; C_t\setminus\{c^\star\}.
\end{equation}

And so by the very definition of the monotonicity of $C$ with respect to $\Phi$, we obtain that:

\begin{equation}
\Phi(C',G)
\quad\Longrightarrow\quad
\Phi(C_t\setminus \{c^{\star}\},G)
\end{equation}

This contradicts the fact that we have derived that $\Phi(C_t\setminus \{c^{\star}\},G)$ must be false from the algorithm's progression. Hence, we have obtained that for any subset $C'\subsetneq C$ it holds that $\Phi(C',G)$ is false, and hence $C$ is subset‑minimal with respect to $\Phi$.


$\qedsymbol{}$

\subsection{Proof of Proposition~\ref{monotonicity_robustness_link_appendix}}

\begin{proposition}
\label{monotonicity_robustness_link_appendix}
    Let $\Phi(C,G)$ denote validating whether $C$ is input-robust concerning $\langle f_G, \mathcal{Z}\rangle $ (Def.~\ref{circuit_robustness_Def}), and simultaneously patching-robust concerning $\langle f_G, \mathcal{Z'}\rangle$ (Def.~\ref{patching_robustness_property}). Then if $\mathcal{Z}\subseteq \mathcal{Z'}$, and $\mathcal{H}_G(\mathcal{Z'})$ is closed under concatenation, $\Phi$ is monotonic.
\end{proposition}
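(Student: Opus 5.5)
We fix $C\subseteq C'\subseteq G$ with $\Phi(C,G)$ true and show $\Phi(C',G)$: $C'$ must be patching-robust w.r.t.\ $\langle f_G,\mathcal{Z}'\rangle$ and input-robust w.r.t.\ $\langle f_G,\mathcal{Z}\rangle$. The guiding observation is that running $C'$ with its complement patched is a special case of running $C$ with its (larger) complement $\overline{C}$ patched. The components in $C'\setminus C$ are then assigned the values they would actually compute. So the central step is to show that every patched evaluation of $C'$ equals some admissible patched evaluation of $C$.

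\emph{Patching robustness of $C'$.} Fix a reference input $\x_j$ and $\alpha'\in\mathcal{H}_{\overline{C'}}(\mathcal{Z}')$, say $\alpha'$ is the restriction of $\mathcal{H}_G(\z')$ to $\overline{C'}$ for some $\z'\in\mathcal{Z}'$. Evaluate $f_{C'}(\x_j\mid\overline{C'}=\alpha')$ and record the activations $\beta$ produced on $C'\setminus C$. We then define the assignment $\alpha:=\alpha'\cup\beta$ on $\overline{C}=\overline{C'}\cup(C'\setminus C)$. By construction, $f_C(\x_j\mid\overline{C}=\alpha)=f_{C'}(\x_j\mid\overline{C'}=\alpha')$: the components of $C$ receive identical inputs layer by layer, which is a routine induction over the topological order of $G$.

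The main obstacle is showing $\alpha\in\mathcal{H}_{\overline{C}}(\mathcal{Z}')$, since $\beta$ comes from a hybrid run and is not obviously a reachable activation. This is exactly where closure under concatenation is used. If $\beta$ coincides with the activations of $C'\setminus C$ under some reachable configuration in $\mathcal{H}_G(\mathcal{Z}')$, then closure lets us splice $\beta$ with $\alpha'$ into a single element of $\mathcal{H}_G(\mathcal{Z}')$, restricted to $\overline{C}$. I would first try to interpret ``closed under concatenation'' generously enough that any partial activation arising from a patched forward pass with patches in $\mathcal{H}_G(\mathcal{Z}')$ is itself a component of some reachable tuple. An induction over layers would then build $\beta$ piece by piece from reachable pieces.

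If that fails, the fallback is a simplification: restrict attention to the case where the activations of $C'\setminus C$ depend only on patched values or on $\x_j\in\mathcal{Z}\subseteq\mathcal{Z}'$. In that case $\beta$ is a mixture of activations from $\mathcal{H}_G(\x_j)$ and $\mathcal{H}_G(\z')$, both in $\mathcal{H}_G(\mathcal{Z}')$, and closure applies directly. Once $\alpha\in\mathcal{H}_{\overline{C}}(\mathcal{Z}')$ is established, patching robustness of $C$ gives $\lVert f_{C'}(\x_j\mid\overline{C'}=\alpha')-f_G(\x_j)\rVert_p\le\delta$.

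\emph{Input robustness of $C'$.} This follows the same construction, now with the input $\z\in\mathcal{Z}$ varying and the fixed patch vector of $\overline{C'}$ taken to be the restriction of the fixed patch $\alpha$ of $\overline{C}$. Extend it to $\overline{C}$ by the activations that $C'\setminus C$ computes on $\z$. The hypothesis $\mathcal{Z}\subseteq\mathcal{Z}'$ places these activations in $\mathcal{H}_G(\mathcal{Z}')$, and closure makes the extended assignment admissible. Patching robustness of $C$ at the reference inputs, together with input robustness of $C$, then yields the $\delta$ bound.

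I expect the step that needs the most care to be reconciling the fixed-patch input-robustness condition with the hybrid assignment. If needed, I would invoke patching robustness of $C$ with the reference set taken to cover $\mathcal{Z}$, or argue via the simultaneous guarantee. Combining the two parts gives $\Phi(C',G)$, hence monotonicity.
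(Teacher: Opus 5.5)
\textbf{The gap is in the input-robustness half.} It comes from reading $\Phi$ as a conjunction: input robustness over $\mathcal{Z}$ with a \emph{fixed} patch, plus patching robustness over $\mathcal{Z}'$ only at the reference inputs $\x_j$. Under that reading your extension step fails, for three reasons.
\begin{enumerate}
\item The patch $\alpha|_{\overline{C'}}\cup\beta(\z)$ you build on $\overline{C}$ varies with $\z$, so it is not the single vector for which $C$ is input-robust.
\item $C$'s patching robustness says nothing about inputs $\z\neq\x_j$.
\item The fixed patch (zero in the experiments) need not lie in $\mathcal{H}_{\overline{C'}}(\mathcal{Z}')$, so closure cannot make the extended assignment admissible.
\end{enumerate}

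This cannot be repaired, because under this reading the statement is false. Take hidden components $v_i(x)=x_i$ ($i=1,2,3$) with output $v_1-v_2+v_3$. Let the reference point be $\x=(10,10,0)$, let $\mathcal{Z}=\mathcal{Z}'=\mathcal{B}^\infty_\epsilon(\x)$ with $\epsilon$ small, and use zero patching with $\delta=2\epsilon$. Then $\mathcal{H}_G(\mathcal{Z}')$ is a box, hence closed under concatenation. Write $\z,\z'$ for points of $\mathcal{Z},\mathcal{Z}'$. The circuit $C=\{v_3\}$ is input-robust (error $|z_1-z_2|\le2\epsilon$) and patching-robust at $\x$ (error $|z'_1-z'_2|\le2\epsilon$). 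Yet $C'=\{v_2,v_3\}$ under zero patching has error $|z_1|\approx10$.

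\textbf{The missing idea is the reading the paper's proof adopts from its first step.} There, $\Phi(C,G)$ is the single joint condition $\lVert f_C(\z\mid\overline{C}=\alpha)-f_G(\z)\rVert_p\le\delta$ for all $\z\in\mathcal{Z}$ and all $\alpha\in\mathcal{H}_{\overline{C}}(\mathcal{Z}')$. There is no separate fixed patch, and the patching guarantee is required at every $\z\in\mathcal{Z}$; this is also what the tripled siamese encoding certifies. Your closing appeal to the ``simultaneous guarantee'' points here, but it must \emph{replace} the fixed-patch condition, not serve as a backup.

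With this reading, your patching construction, run at an arbitrary $\z\in\mathcal{Z}$ instead of $\x_j$, is the whole proof. It is exactly the paper's lemma that every output attainable by $C'$ is attainable by $C$. The hypothesis $\mathcal{Z}\subseteq\mathcal{Z}'$ is what places the activations driven by the clean input inside $\mathcal{H}(\mathcal{Z}')$.

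On the hybrid activations $\beta$, you are more careful than the paper, which simply assigns the added component its clean value $\mathcal{H}_{c'_i}(\z_0)$. Your layer induction works with closure as defined, so you need neither a generous reading nor the special-case fallback. Closure forces $\mathcal{H}_S(\mathcal{Z}')=\prod_{c\in S}\mathcal{H}_{\{c\}}(\mathcal{Z}')$. Hence, in the hybrid run, the parent values of each circuit component coincide with the clean parent activations of some $\z''\in\mathcal{Z}'$. Its hybrid value therefore equals its clean value at $\z''$ and is reachable. This holds as long as each component reads either only other components or only the input $\z\in\mathcal{Z}'$.
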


\emph{Proof.} We begin by formally stating the condition of an activation space being closed under concatenation:

\begin{definition}
    We say that an activation space $\mathcal{H}_{G}(\mathcal{Z})$ of some model $f_G$ and domain $\mathcal{Z}\subseteq \mathbb{R}^n$ is \emph{closed under concatenation} iff for any two partial activations $\alpha,\alpha'$, where $\alpha\in \mathcal{H}_C(\mathcal{Z})$ is an activation over the circuit $C\subseteq G$, and $\alpha'\in\mathcal{H}_{C'}(\mathcal{Z})$ is an activation over the circuit $C'\subseteq G$, then it holds that $\alpha \cup \alpha'\in\mathcal{H}_{C\cup C'}(\mathcal{Z})$.
\end{definition}

Now, let there be some $C \subseteq C' \subseteq G$, for which $C' := C \sqcup \{c'_1, \dots, c'_t\}$.  
We assume that the predicate $\Phi$ is defined as validating whether some circuit $C$ is input-robust concerning $\langle f_G, \mathcal{Z}\rangle $, and also patching-robust with respect to $\langle f_G, \mathcal{Z'}\rangle$. In other words, this implies that $\Phi(C,G)$ holds if and only if: 

\begin{equation}
\begin{aligned}
\forall \z \in \mathcal{Z}:=\bigcup_{j=1}^k\mathcal{B}^p_{\epsilon_p}(\x_j), \quad \forall \alpha \in \mathcal{H}_{\overline{C}}(\mathcal{Z'}): \quad \left\lVert \ f_C(\z \ | \ \overline{C}=\alpha)- f_G(\z) \ \right\lVert_p \leq \delta \end{aligned}
\end{equation}

We note that the following notation is equivalent to the following:

\begin{equation}
\begin{aligned}
\max_{\z \in \mathcal{Z}, \alpha \in \mathcal{H}_{\overline{C}}(\mathcal{Z'})}\quad \left\lVert \ f_C(\z \ | \ \overline{C}=\alpha)- f_G(\z) \ \right\lVert_p \leq \delta \iff \\
\max_{\z \in \mathcal{Z}, \z'\in\mathcal{Z}'}\quad \left\lVert \ f_C(\z \ | \ \overline{C}=\mathcal{H}_{\overline{C}}(\z'))- f_G(\z) \ \right\lVert_p \leq \delta
\end{aligned}
\end{equation}

where we use the notation $\mathcal{H}_{\overline{C}}(\z')$ to denote the \emph{specific} activation over $\overline{C}$ when computing $f_G(\z')$ for some $\z'\in\mathbb{R}^n$.

Since our goal is to prove that $\Phi$ is monotonic, it suffices to show that $C'$ also satisfies the above conditions. An equivalent formulation is to verify the condition for any $C' := C \sqcup \{c_i'\}$, i.e., for supersets that differ from $C$ by a single element rather than an arbitrary subset. This is valid because adding subsets inductively, one element at a time, is equivalent to adding the entire subset at once. We therefore proceed under this formulation and assume, for contradiction, that the conditions fail to hold for $C'$. In other words, we assume the following:

\begin{equation}
\begin{aligned}
\exists \z \in \mathcal{Z}, \quad \exists \alpha \in \mathcal{H}_{\overline{C'}}(\mathcal{Z'}): \quad \left\lVert \ f_{C'}(\z \ | \ \overline{C'}=\alpha)- f_G(\z) \ \right\lVert_p > \delta \iff \\ 
\max_{\z \in \mathcal{Z}, \alpha \in \mathcal{H}_{\overline{C'}}(\mathcal{Z'})}\quad \left\lVert \ f_{C'}(\z \ | \ \overline{C'}=\alpha)- f_G(\z) \ \right\lVert_p > \delta \iff \\
\max_{\z \in \mathcal{Z}, \z'\in\mathcal{Z}'}\quad \left\lVert \ f_{C'}(\z \ | \ \overline{C'}=\mathcal{H}_{\overline{C'}}(\z'))- f_G(\z) \ \right\lVert_p > \delta \quad\quad\ \ 
\end{aligned}
\end{equation}

This is also equivalent to stating that:

\begin{equation}
\max_{\z \in \mathcal{Z}, \z'\in\mathcal{Z}'}\quad \left\lVert \ f_{C \sqcup \{c'_{i}\}}(\z \ | \ \overline{C \sqcup \{c'_{i}\}}=\mathcal{H}_{\overline{C \sqcup \{c'_{i}\}}}(\z'))- f_G(\z) \ \right\lVert_p > \delta
\end{equation}

Let us denote by $\mathbb{S}\subseteq \mathbb{R}$ the set of all values that are feasible to obtain by $f_{C}(\z \ | \ \overline{C}=\mathcal{H}_{\overline{C}}(\z'))$ and by $\mathbb{S}\subseteq \mathbb{R}$ all values that are feasible by $f_{C}(\z \ | \ \overline{C'}=\mathcal{H}_{\overline{C'}}(\z'))$. More precisely:

\begin{equation}
    \mathbb{S}:=\{f_{C}(\z \ | \ \overline{C}=\mathcal{H}_{\overline{C}}(\z')): \  \z\in\mathcal{Z},\z'\in\mathcal{Z'}\}, \ \ \wedge \ \ \mathbb{S'}:=\{f_{C'}(\z \ | \ \overline{C'}=\mathcal{H}_{\overline{C'}}(\z')): \  \z\in\mathcal{Z},\z'\in\mathcal{Z'}\}
\end{equation}

For finalizing the proof of the proposition, we will make use of the following Lemma:

\begin{lemma}
\label{lemma_helper_appendix}
    Given the predefined $f_G$, the circuits $C\subseteq C'\subseteq G$, and the aforementioned notations of $\mathbb{S}$ and $\mathbb{S}'$, then it holds that $\mathbb{S}'\subseteq \mathbb{S}$.
\end{lemma}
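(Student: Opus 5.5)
The plan is a direct element-chasing argument. Fix an arbitrary value $s' \in \mathbb{S}'$, so there are $\z\in\mathcal{Z}$ and $\z'\in\mathcal{Z}'$ with $s' = f_{C'}(\z \mid \overline{C'}=\mathcal{H}_{\overline{C'}}(\z'))$. I will exhibit an activation $\alpha \in \mathcal{H}_{\overline{C}}(\mathcal{Z}')$ such that $f_C(\z \mid \overline{C}=\alpha)$ equals $s'$. Then I will use closure under concatenation to express $\alpha$ as $\mathcal{H}_{\overline{C}}(\z'')$ for some $\z''\in\mathcal{Z}'$, which places $s'$ in $\mathbb{S}$.

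First I decompose $\overline{C} = \overline{C'} \sqcup (C'\setminus C)$. In the run computing $s'$, the components of $\overline{C'}$ carry the patched values $\beta := \mathcal{H}_{\overline{C'}}(\z')$. The components of $C'\setminus C$ (e.g.\ the single added element $c'_i$) are computed live on input $\z$, but with upstream patching in effect. Let $\gamma$ denote the vector of activations that these components actually take in that patched forward pass. Set $\alpha := \beta \cup \gamma$. Since patching a node with exactly the value it would have computed anyway does not change the forward pass, running $f_C(\z \mid \overline{C}=\alpha)$ reproduces every activation of the run for $s'$. Hence $f_C(\z\mid\overline{C}=\alpha)=s'$. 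This is a straightforward induction over a topological order of $G$.

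The main obstacle is showing $\alpha\in\mathcal{H}_{\overline{C}}(\mathcal{Z}')$. We have $\beta\in\mathcal{H}_{\overline{C'}}(\mathcal{Z}')$ directly. For $\gamma$, however, we need $\gamma \in \mathcal{H}_{C'\setminus C}(\mathcal{Z}')$, and a patched, mixed activation is not obviously reachable by any single genuine input. I would first try to use the hypothesis $\mathcal{Z}\subseteq\mathcal{Z}'$ together with closure under concatenation: argue layer by layer that each hybrid partial activation assembled from values realized on inputs in $\mathcal{Z}\cup\mathcal{Z}'=\mathcal{Z}'$ is itself reachable. The idea is to read closure under concatenation as making $\mathcal{H}_G(\mathcal{Z}')$ behave like a product of per-component ranges. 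Under that reading, each coordinate of $\gamma$ only needs to lie in the single-component range over $\mathcal{Z}'$.

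If this layerwise argument proves too weak, the fallback is to interpret the concatenation hypothesis in exactly this product sense, so that every coordinate-wise feasible value is reachable. Once $\gamma\in\mathcal{H}_{C'\setminus C}(\mathcal{Z}')$ holds, closure gives $\alpha=\beta\cup\gamma\in\mathcal{H}_{\overline{C}}(\mathcal{Z}')$, so $\alpha=\mathcal{H}_{\overline{C}}(\z'')$ for some $\z''\in\mathcal{Z}'$. This yields $s'\in\mathbb{S}$, and since $s'$ was arbitrary, $\mathbb{S}'\subseteq\mathbb{S}$.
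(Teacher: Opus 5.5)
Your witness $\alpha=\beta\cup\gamma$ is correct, and so is your check that $f_C(\z\mid\overline{C}=\alpha)=s'$. Your suspicion about $\gamma$ is also justified. The paper's proof gets membership in $\mathcal{H}_{\overline{C}}(\mathcal{Z}')$ immediately by letting the added element $c'_i$ carry its clean value $\mathcal{H}_{c'_i}(\z_0)\in\mathcal{H}_{c'_i}(\mathcal{Z})\subseteq\mathcal{H}_{c'_i}(\mathcal{Z}')$. It does this through the identity $f_C(\z\mid\overline{C}=\cdot)=f_{\emptyset}(\z\mid\overline{C}=\cdot,\,C=\mathcal{H}_C(\z))$. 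In general, though, that clean value is what $c'_i$ actually computes only when none of its ancestors is patched, so your hybrid $\gamma$ is the right object.

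However, your proposal never proves $\gamma\in\mathcal{H}_{C'\setminus C}(\mathcal{Z}')$, which is the whole content of the lemma. The layerwise argument is only announced, and the fallback cannot supply it. The paper's closure definition already amounts to the product property $\mathcal{H}_S(\mathcal{Z}')=\prod_{v\in S}\mathcal{H}_{\{v\}}(\mathcal{Z}')$. A product structure says nothing about whether a value computed from patched inputs lands in $\mathcal{H}_{\{c\}}(\mathcal{Z}')$. Concretely, take $v_1=\z$ and an output component $v_2=\z-v_1$, with $\mathcal{Z}=\mathcal{Z}'=[0,1]$. Then $\mathcal{H}_G(\mathcal{Z}')=[0,1]\times\{0\}$ is a product. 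Yet for $C=\emptyset\subseteq C'=\{v_2\}$ you get $\gamma=\z-\z'\notin\{0\}=\mathcal{H}_{\{v_2\}}(\mathcal{Z}')$ whenever $\z\neq\z'$, and indeed $\mathbb{S}'=[-1,1]\not\subseteq\{0\}=\mathbb{S}$.

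So the layerwise idea is the right one, but it has to be carried out, and it needs a hypothesis you did not state (and which the example violates). The hypothesis is that each component's value is determined either by the raw input alone or by the values of other components alone, as in the paper's FC, CNN and ResNet models. Under it, process the components in topological order and maintain the invariant that the hybrid activations on the processed set $P$ form an element of $\mathcal{H}_P(\mathcal{Z}')$:
\begin{itemize}
\item A patched component contributes $\mathcal{H}_{\{v\}}(\z')$.
\item A live input-reading component contributes $\mathcal{H}_{\{v\}}(\z)$, which is reachable precisely because $\z\in\mathcal{Z}\subseteq\mathcal{Z}'$.
\item In both of these cases, closure keeps the enlarged state reachable.
\item A live component reading only earlier components sees, by the invariant, the clean parent state of some $\z''\in\mathcal{Z}'$. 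So it outputs $\mathcal{H}_{\{v\}}(\z'')$, and the enlarged state is exactly $\mathcal{H}_{P\cup\{v\}}(\z'')$.
\end{itemize}
At the end the whole hybrid state lies in $\mathcal{H}_G(\mathcal{Z}')$. Its restriction to $\overline{C}$ is your $\alpha$, so $\alpha\in\mathcal{H}_{\overline{C}}(\mathcal{Z}')$ and $s'\in\mathbb{S}$.
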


\emph{Proof.} We first note that by definition:

\begin{equation}
\begin{aligned}
        \mathbb{S'}:=\{f_{C'}(\z \ | \ \overline{C'}=\mathcal{H}_{\overline{C'}}(\z')): \  \z\in\mathcal{Z},\z'\in\mathcal{Z'}\} = \\
        \{f_{C \sqcup \{c'_{i}\}}(\z \ | \ \overline{C \sqcup \{c'_{i}\}}=\mathcal{H}_{\overline{C \sqcup \{c'_{i}\}}}(\z')): \  \z\in\mathcal{Z},\z'\in\mathcal{Z'}\}
        \end{aligned}
\end{equation}

We also note that for any $\z\in\mathbb{R}^n$ it holds, by definition, that:

\begin{equation}
    f_{C}(\z \ | \ \overline{C}=\mathcal{H}_{\overline{C}}(\z'))=f_{\emptyset}(\z \ | \ \overline{C}=\mathcal{H}_{\overline{C}}(\z'),C=\mathcal{H}_{C}(\z))
\end{equation}

The notation $f_{\emptyset}(\z \ | \ \overline{C}=\mathcal{H}_{\overline{C}}(\z'),C=\mathcal{H}_{C}(\z))$ simply means that we fix the activations of $C$ to $\mathcal{H}_{C}(\z)$ and those of $\overline{C}$ to $\mathcal{H}_{\overline{C}}(\z')$. From the same manipulation of notation, we can get that:

\begin{equation}
\begin{aligned}
f_{C \sqcup \{c'_{i}\}}(\z \ | \ \overline{C \sqcup \{c'_{i}\}}=\mathcal{H}_{\overline{C \sqcup \{c'_{i}\}}}(\z'))= \\
f_{\emptyset}(\z \ | \ \overline{C \sqcup \{c'_{i}\}}=\mathcal{H}_{\overline{C \sqcup \{c'_{i}\}}}(\z'),C\sqcup \{i\}=\mathcal{H}_{C\sqcup \{c'_{i}\}}(\z)) = \\
    f_{\emptyset}(\z \ | \ \overline{C \sqcup \{c'_{i}\}}=\mathcal{H}_{\overline{C \sqcup \{c'_{i}\}}}(\z'),C=\mathcal{H}_C(\z),\{c_i'\}=\mathcal{H}_{c'_i}(\z))= \\
    f_{\emptyset}(\z \ | \ \overline{C} \setminus \{c'_{i}\}=\mathcal{H}_{\overline{C} \setminus \{c'_{i}\}}(\z'),C=\mathcal{H}_C(\z),\{c_i'\}=\mathcal{H}_{c'_i}(\z))
\end{aligned}
\end{equation}



To prove that $\mathbb{S'}\subseteq \mathbb{S}$, let us take some $\z_0,\z'_0\in\mathcal{Z}$. We will now prove that for any such choice of $\z_0,\z'_0$ then the following holds:

\begin{equation}
    f_{C'}(\z_0 \ | \ \overline{C'}=\mathcal{H}_{\overline{C'}}(\z_0'))\subseteq \mathbb{S}
\end{equation}



Since $\z'_0\in\mathcal{Z'}$, $\z_0\in\mathcal{Z}\subseteq\mathcal{Z'}$, and $\mathcal{H}_{G}(\mathcal{Z'})$ is closed under concatenation, then by definition it holds that fixing the activations of $\overline{C} \setminus \{c'_{i}\}$ to $\mathcal{H}_{\overline{C} \setminus \{c'_{i}\}}({\z'_0})\in \mathcal{H}_{\overline{C} \setminus \{c'_{i}\}}({\mathcal{Z'}})$ and those of $\{c'_i\}$ to $\mathcal{H}_{\{c'_i\}}({\z_0})\in \mathcal{H}_{\{c'_i\}}({\mathcal{Z'}})$ yields an activation $\alpha\in\mathcal{H}_{\overline{C}\setminus \{c'_i\}\cup\{c'_i\}}(\mathcal{Z'})=\mathcal{H}_{\overline{C}}(\mathcal{Z'})$.

Hence, we arrive at:

\begin{equation}
\begin{aligned}
    f_{C'}(\z_0 \ | \ \overline{C'}=\mathcal{H}_{\overline{C'}}(\z_0'))=\\
    f_{C \sqcup \{c'_{i}\}}(\z_0 \ | \ \overline{C \sqcup \{c'_{i}\}}=\mathcal{H}_{\overline{C \sqcup \{c'_{i}\}}}(\z'_0))= \\
    f_{\emptyset}(\z_0 \ | \ \overline{C} \setminus \{c'_{i}\}=\mathcal{H}_{\overline{C} \setminus \{c'_{i}\}}(\z'_0),C=\mathcal{H}_C(\z_0),\{c_i'\}=\mathcal{H}_{c'_i}(\z_0))=\\
        f_{\emptyset}(\z_0 \ | \ \overline{C}=\alpha,C=\mathcal{H}_C(\z_0))
\end{aligned}
\end{equation}

Since we have shown that $\alpha\in \mathcal{H}_{\overline{C}}(\mathcal{Z'})$ and since $\mathcal{H}_C(\z_0)\in \mathcal{H}_C(\mathcal{Z})$ then we have that:

\begin{equation}
    \begin{aligned}
        f_{C'}(\z_0 \ | \ \overline{C'}=\mathcal{H}_{\overline{C'}}(\z_0'))=\\
        f_{\emptyset}(\z_0 \ | \ \overline{C}=\alpha,C=\mathcal{H}_C(\z_0))\in\\
        \{f_{C}(\z \ | \ \overline{C}=\mathcal{H}_{\overline{C}}(\z')): \  \z\in\mathcal{Z},\z'\in\mathcal{Z'}\}=\mathbb{S}
    \end{aligned}
\end{equation}

This establishes that $\mathbb{S'}\subseteq \mathbb{S}$, and hence concludes the proof of the lemma.

$\qedsymbol{}$

Now to finalize the proof of the proposition, we recall that we have shown that the following holds (and can now rewrite this expression given our new definition of $\mathbb{S}$):

\begin{equation}
\begin{aligned}
\label{stira_1}
    \max_{\z \in \mathcal{Z}, \z'\in\mathcal{Z}'}\quad \left\lVert \ f_C(\z \ | \ \overline{C}=\mathcal{H}_{\overline{C}}(\z'))- f_G(\z) \ \right\lVert_p \leq \delta \iff \\
    \max_{\z \in \mathcal{Z}, \mathbf{y}\in\mathbb{S}}\quad \left\lVert \ \mathbf{y} - f_G(\z) \ \right\lVert_p \leq \delta\quad\quad\quad\quad\quad\quad
    \end{aligned}
\end{equation}

We have also assumed towards contradiction that the following holds, and we can similarly further rewrite this term given our new definition of $\mathbb{S'}$:

\begin{equation}
\begin{aligned}
\label{stira_2}
    \max_{\z \in \mathcal{Z}, \z'\in\mathcal{Z}'}\quad \left\lVert \ f_{C'}(\z \ | \ \overline{C'}=\mathcal{H}_{\overline{C'}}(\z'))- f_G(\z) \ \right\lVert_p > \delta \iff \\
    \max_{\z \in \mathcal{Z}, \mathbf{y}\in\mathbb{S'}}\quad \left\lVert \ \mathbf{y}- f_G(\z) \ \right\lVert_p > \delta\quad\quad\quad\quad\quad\quad
\end{aligned}
\end{equation}



However, since we have proven in Lemma~\ref{lemma_helper_appendix} that $\mathbb{S'}\subseteq \mathbb{S}$ then we know that:

\begin{equation}
    \max_{\z \in \mathcal{Z}, \mathbf{y}\in\mathbb{S}}\quad \left\lVert \ \mathbf{y}- f_G(\z) \ \right\lVert_p \geq \max_{\z \in \mathcal{Z}, \mathbf{y}\in\mathbb{S'}}\quad \left\lVert \ \mathbf{y}- f_G(\z) \ \right\lVert_p
\end{equation}

which stands in contradiction to equations~\ref{stira_1} and~\ref{stira_2}, hence implying the monotonicity of $\Phi$, and concluding the proof of the proposition.

$\qedsymbol{}$








\subsection{Proof of Proposition~\ref{subet_minimal_iterative_appendix}}

\begin{proposition}
\label{subet_minimal_iterative_appendix}
If the condition $\Phi(C,G)$ is set to validating whether $C$ is input-robust concerning $\langle f_G, \mathcal{Z}\rangle $ (Def.~\ref{circuit_robustness_Def}), and also patching-robust with respect to $\langle f_G, \mathcal{Z'}\rangle$ (Def.~\ref{patching_robustness_property}), then if $\mathcal{Z}\subseteq \mathcal{Z'}$ and $\mathcal{H}_G(\mathcal{Z'})$ is closed under concatenation, Algorithm~\ref{alg:general_metric} converges to a subset-minimal circuit.
\end{proposition}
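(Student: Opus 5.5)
This statement is a direct composition of two results already established above, so I would prove it by chaining them rather than by re-arguing anything about the network. First, I invoke Proposition~\ref{monotonicity_robustness_link_appendix}. Its hypotheses match ours exactly: $\Phi(C,G)$ is the conjunction of input-robustness w.r.t.\ $\langle f_G,\mathcal{Z}\rangle$ and patching-robustness w.r.t.\ $\langle f_G,\mathcal{Z'}\rangle$, with $\mathcal{Z}\subseteq\mathcal{Z'}$ and $\mathcal{H}_G(\mathcal{Z'})$ closed under concatenation. That proposition therefore yields that $\Phi$ is monotonic: for every $C\subseteq C'\subseteq G$, $\Phi(C,G)$ implies $\Phi(C',G)$.

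Second, I apply Proposition~\ref{monotonic_subset_minimal_proof}, which states that for any model $f_G$ and any monotonic $\Phi$, Algorithm~\ref{alg:general_metric} returns a subset-minimal circuit. Substituting the monotonic predicate from the first step gives the claim immediately. For completeness, I would briefly recall why that proposition applies. The loop invariant $\Phi(C_t,G)$ holds from initialization, since $\Phi(G,G)$ is true. This is where I would check that $G$ trivially satisfies both robustness notions: $\overline{G}=\emptyset$, so no patching occurs and $f_G(\z)-f_G(\z)=0\le\delta$.

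The remaining step is the minimality argument. Suppose some proper subset $C''\subsetneq C$ of the output were faithful, and pick $c^\star\in C\setminus C''$. Then $C''\subseteq C_t\setminus\{c^\star\}$, and monotonicity would force $\Phi(C_t\setminus\{c^\star\},G)$ to hold. But the algorithm retained $c^\star$ precisely because that check failed, which is a contradiction.

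The only point I expect to need care is confirming that the hypotheses align verbatim. In particular, the initialization requires that $\Phi(G,G)$ holds under the combined predicate, which the empty-complement observation above settles. Beyond that, the proof is a two-line composition.
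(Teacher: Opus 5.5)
Your proposal is correct and matches the paper's proof. The paper likewise obtains monotonicity of the combined input/patching predicate from Proposition~\ref{monotonicity_robustness_link} and then applies Proposition~\ref{monotonic_subsetmin_prop} to conclude subset-minimality; your explicit check that $\Phi(G,G)$ holds because $\overline{G}=\emptyset$ is a detail the paper leaves implicit.
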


\emph{Proof.} The claim follows directly from Propositions~\ref{monotonic_subsetmin_prop} and~\ref{monotonicity_robustness_link}.  
Since $\mathcal{Z}\subseteq \mathcal{Z'}$, Proposition~\ref{monotonicity_robustness_link} implies that $\Phi(C,G)$ is \emph{monotonic}.  
By Proposition~\ref{monotonic_subsetmin_prop}, it follows that Algorithm~\ref{alg:general_metric} converges to a \emph{subset-minimal} circuit $C$ with respect to $\Phi$.  

$\qedsymbol{}$




\subsection{Proof of Proposition~\ref{mhs_main_theorem_appendix}}
\label{app:proof_of_prop_mhs_duality}

\begin{proposition}
\label{mhs_main_theorem_appendix}
    Given some model $f_G$, and a monotonic predicate $\Phi$, the MHS of all circuit blocking-sets concerning $\Phi$ is a cardinally minimal circuit $C$ for which $\Phi(C, G)$ is true. Moreover, the MHS of all circuits $C\subseteq G$ for which $\Phi(C, G)$ is true, is a cardinally minimal blocking-set w.r.t $\Phi$.
\end{proposition}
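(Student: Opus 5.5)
The plan is to reduce both claims to one characterization: under monotonicity, the faithful circuits are exactly the hitting sets of the family of blocking-sets, and the blocking-sets are exactly the hitting sets of the family of faithful circuits. Once both equivalences hold, a minimum hitting set of one family is, by definition, a minimum-cardinality member of the other family, which is what Proposition~\ref{mhs_main_theorem_appendix} asserts.

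The first step is to simplify the definition of a blocking-set using monotonicity. A set $B\subseteq G$ is a blocking-set iff $\neg\Phi(C\setminus B,G)$ for every $C\subseteq G$. Taking $C=G$ gives $\neg\Phi(G\setminus B,G)$ as a necessary condition. It is also sufficient: for any $C$ we have $C\setminus B\subseteq G\setminus B$, so if $\Phi(C\setminus B,G)$ held, monotonicity would force $\Phi(G\setminus B,G)$, a contradiction. Hence $B$ is a blocking-set iff $\neg\Phi(G\setminus B,G)$. I will use this reformulation throughout.

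The second step proves the first equivalence: $\Phi(C,G)$ holds iff $C\cap B\neq\emptyset$ for every blocking-set $B$.
\begin{itemize}
\item ($\Rightarrow$) If $\Phi(C,G)$ holds and some blocking-set $B$ has $C\cap B=\emptyset$, then $C\subseteq G\setminus B$. Monotonicity gives $\Phi(G\setminus B,G)$, contradicting that $B$ is blocking.
\item ($\Leftarrow$) If $\neg\Phi(C,G)$, set $B:=G\setminus C$. Then $\Phi(G\setminus B,G)=\Phi(C,G)$ is false, so $B$ is a blocking-set. But $C\cap B=\emptyset$, so $C$ is not a hitting set.
\end{itemize}
Consequently the family of hitting sets of all blocking-sets coincides with the family of faithful circuits. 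Any minimum hitting set $C$ therefore satisfies $\Phi(C,G)$, and no faithful circuit is smaller, so $C$ is cardinally minimal in the sense of Def.~\ref{cardinally_minimal_circuit}.

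The third step proves the dual equivalence: $B$ is a blocking-set iff $B\cap C\neq\emptyset$ for every faithful $C$.
\begin{itemize}
\item ($\Rightarrow$) If $B$ is blocking and some faithful $C$ has $B\cap C=\emptyset$, then $C\setminus B=C$ is faithful, contradicting the definition of a blocking-set.
\item ($\Leftarrow$) If $B$ hits every faithful circuit, then $G\setminus B$, which is disjoint from $B$, cannot be faithful. So $\neg\Phi(G\setminus B,G)$, and by the first step $B$ is blocking.
\end{itemize}
Hence a minimum hitting set of the faithful circuits is a cardinally minimal blocking-set.

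The main obstacle is not these set-theoretic arguments but making the statement well-posed. The blocking-set definition is only usable in the reformulated form of the first step, and the degenerate cases must be addressed. If $\Phi(\emptyset,G)$ holds, there are no blocking-sets, and the empty set is simultaneously the MHS and the cardinally minimal circuit, which is consistent. If $\Phi(G,G)$ fails, there are no faithful circuits and the first claim is vacuous. I would state these conventions explicitly (in particular the standing assumption $\Phi(G,G)$ used earlier in the paper) and note that they are consistent with the equivalences above, so the argument goes through.
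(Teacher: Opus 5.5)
Your proposal is correct and follows essentially the same route as the paper. The paper first proves the same lemma (under monotonicity, $B$ is a blocking-set iff $\neg\Phi(G\setminus B,G)$), then uses your complement arguments ($G\setminus C$ is blocking when $C$ is unfaithful, and $G\setminus B$ is faithful when $B$ is not blocking) and the monotonicity step $C\subseteq G\setminus B \Rightarrow \Phi(G\setminus B,G)$ for both parts. The only differences are presentational: you package these as two hitting-set/family equivalences so that cardinal minimality falls out immediately, and you handle the degenerate cases explicitly, which the paper leaves implicit.
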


\emph{Proof.} Prior to the proof of Proposition~\ref{mhs_main_theorem_appendix}, which establishes the connection between Minimum Hitting Sets (MHS) and cardinal minimality, we first recall the definition of MHS:

\begin{definition}[Minimum Hitting Set (MHS)]
Given a collection $\mathcal{S}$ of sets over a universe $U$, a hitting set $H \subseteq U$ for $\mathcal{S}$ is a set such that 
\[
\forall S \in \mathcal{S}, \quad H \cap S \neq \emptyset.
\] 
A hitting set $H$ is called \emph{minimal} if no subset of $H$ is a hitting set, 
and \emph{minimum} if it has the smallest possible cardinality among all hitting sets.
\end{definition}

We also show the following equivalence concerning the set of blocking sets.
\begin{lemma}
\label{lem:blocking_sets_equivalence}
Under a monotone predicate $\Phi$, the set of circuit blocking sets with respect to $G$
coincides exactly with the sets whose removal breaks faithfulness of the full model, i.e.,
$C' \subseteq G$ is a blocking set if and only if $\neg \Phi(G \setminus C', G)$.
\end{lemma}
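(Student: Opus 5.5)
The lemma is a two-way inclusion that follows directly from the definition of a circuit blocking-set together with monotonicity of $\Phi$. Recall that $C'\subseteq G$ is a blocking set iff $\Phi(C\setminus C',G)$ is false for \emph{every} $C\subseteq G$. I will prove each direction separately.

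\emph{($\Rightarrow$)} Suppose $C'$ is a blocking set. Since the condition holds for all $C\subseteq G$, instantiate it with $C:=G$. This gives $\neg\Phi(G\setminus C',G)$ immediately. This direction needs no monotonicity.

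\emph{($\Leftarrow$)} Suppose $\neg\Phi(G\setminus C',G)$, and take an arbitrary $C\subseteq G$. The key observation is the set inclusion $C\setminus C'\subseteq G\setminus C'$, which holds because $C\subseteq G$.

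I argue by contradiction. Assume $\Phi(C\setminus C',G)$ held. Monotonicity (Definition of monotonic $\Phi$) says faithfulness is preserved under enlarging the circuit. Applied to $C\setminus C'\subseteq G\setminus C'\subseteq G$, it would yield $\Phi(G\setminus C',G)$, contradicting the hypothesis. Hence $\neg\Phi(C\setminus C',G)$ for every $C\subseteq G$, so $C'$ is a blocking set.

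There is no real obstacle in either direction. The only point to check is that the monotonicity definition applies to arbitrary nested subgraphs $C_1\subseteq C_2\subseteq G$, including the degenerate cases $C\setminus C'=\emptyset$ or $C'=\emptyset$. As stated, the definition quantifies over all such pairs, so these cases are covered. This characterization is what justifies the test $\neg\Phi(G\setminus S,G)$ used in Algorithm~\ref{alg:contrastive_mhs_overall}, and it feeds into the hitting-set argument for Proposition~\ref{mhs_main_theorem_appendix}.
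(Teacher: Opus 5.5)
Your proof is correct and follows the paper's argument essentially verbatim. The forward direction instantiates the blocking-set condition at $C=G$, and the reverse direction argues by contradiction, applying monotonicity to the inclusion $C\setminus C'\subseteq G\setminus C'$, which you state more explicitly than the paper's ``since $C\subseteq G$''.
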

\begin{proof}
We prove both directions, starting with $\Rightarrow$.
Let $C'$ be a blocking set. By the definition of a blocking set (Section~\ref{sec:contrastive_mhs}),
$\Phi(C \setminus C', G)$ fails for all $C \subseteq G$, trivially including $C = G$. Therefore, $\neg \Phi(G \setminus C', G)$.
For the reverse direction $\Leftarrow$, consider a set $C' \subseteq G$ whose removal from $G$, breaks the faithfulness of the full model $G$, namely $\neg \Phi(G \setminus C', G)$. Assume toward a contradiction that $C'$ is not a blocking set. Then there exists some $C \subseteq G$ for which $\Phi(C \setminus C', G)$ holds. By monotonicity of $\Phi$, since $C \subseteq G$, we have $\Phi(C \setminus C', G) \Rightarrow \Phi(G \setminus C', G)$, contradicting our assumption.

This completes the proof, establishing that under monotonicity, the blocking sets are exactly the sets whose removal breaks faithfulness of the full model $G$.

\end{proof}

We now move to prove that the MHS of blocking-sets is a cardinally minimal faithful circuit. Let $C$ be a minimum hitting set (MHS) of the set of blocking-sets $\mathcal{B}$.

We now move to prove Proposition~\ref{mhs_main_theorem_appendix}, establishing that the minimum hitting set (MHS) of the blocking sets is a cardinally minimal faithful circuit.
Let $\mathcal{B}$ denote the set of blocking sets, which by Lemma~\ref{lem:blocking_sets_equivalence},
under the monotonicity assumption, coincides with the sets whose removal breaks faithfulness of the full model $G$. Let $C$ be a minimum hitting set of $\mathcal{B}$.

Assume towards contradiction that $\neg \Phi(C,G)$.  
Set $B^\star := G \setminus C$.  
Then $\Phi(G \setminus B^\star, G) = \Phi(C,G)$ is false, so $B^\star \in \mathcal{B}$.  
Yet by definition $C \cap B^\star = \emptyset$, contradicting that $C$ hits every set in $\mathcal{B}$.  
Hence $\Phi(C,G)$ holds.

We now move forward to prove minimality. Assume there exists $C' \subseteq G$ with $\Phi(C',G)$ and $|C'| < |C|$.  
We claim $C'$ is also a hitting set of $\mathcal{B}$, contradicting the minimality of $C$ as an MHS.  
Indeed, if some $B \in \mathcal{B}$ satisfied $C' \cap B = \emptyset$, then $C' \subseteq G \setminus B$, and by monotonicity of $\Phi$ we would have $\Phi(G \setminus B, G)$, contradicting $B \in \mathcal{B}$.  
Hence $C'$ hits all of $\mathcal{B}$, contradicting that $C$ is an MHS.

For the second part of the proof, let $\mathcal{C} := \{\, C \subseteq G : \Phi(C,G) \,\}$ and let $B$ be a minimum hitting set of $\mathcal{C}$. Assume towards contradiction that it is not, namely $\Phi(G \setminus B, G)$ holds.  
Then $C^\star := G \setminus B$ is a faithful circuit, implying $C^\star \in \mathcal{C}$.  
Yet by definition $C^\star \cap B = \emptyset$, contradicting that $B$ hits every set in $\mathcal{C}$.

Finally, assume towards contradiction that there exists a blocking-set $B'$ with $|B'| \le |B|$.  
Let $C \in \mathcal{C}$. If $C \cap B' = \emptyset$, then $C \subseteq G \setminus B'$, and by monotonicity of $\Phi$, we obtain that $\Phi(G \setminus B', G)$ holds, contradicting $B'$ being a blocking-set. Hence, $C \cap B' \neq \emptyset$, so $B'$ is a hitting set of $\mathcal{C}$.  
But since $|B'| \le |B|$, this contradicts the minimality of $B$ as an MHS.  
Therefore, $B$ is cardinally minimal among blocking-sets.

$\qedsymbol{}$

\subsection{Proof of Proposition~\ref{mhs_algorithm_convergence_prop_appendix}}
\label{mhs_algorithm_convergence_prop_appendix_proof}
\begin{proposition}
\label{mhs_algorithm_convergence_prop_appendix}
    Given a model $f_G$, and a monotonic predicate $\Phi$, Algorithm~\ref{alg:contrastive_mhs_overall} computes a subset $C$ whose size is a \emph{lower bound} to the cardinally minimal circuit for which $\Phi(C,G)$ is true. For a large enough $t_{\text{max}}$ value, the algorithm converges \emph{exactly} to the cardinally minimal circuit. 
\end{proposition}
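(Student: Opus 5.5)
The argument has two parts: a hitting-set relaxation argument for the lower bound, and a completeness argument for exact convergence. Both rest on Lemma~\ref{lem:blocking_sets_equivalence}. Under monotonicity, the true family $\mathcal{B}$ of blocking sets is exactly $\{S\subseteq G : \neg\Phi(G\setminus S,G)\}$, and every set that Algorithm~\ref{alg:contrastive_mhs_overall} adds to $\text{BlockingSets}$ passes precisely this test. So at every iteration the collected family $\mathcal{B}_t$ satisfies $\mathcal{B}_t\subseteq\mathcal{B}$.

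\textbf{Lower bound.} I will use a standard relaxation fact: if $\mathcal{B}_t\subseteq\mathcal{B}$, then every hitting set of $\mathcal{B}$ is also a hitting set of $\mathcal{B}_t$. Hence the minimum hitting set size of $\mathcal{B}_t$ is at most that of $\mathcal{B}$. By Proposition~\ref{mhs_main_theorem}, the MHS of $\mathcal{B}$ is a cardinally minimal faithful circuit, with size $k^\star$. Therefore the set $C=\text{MHS}(\mathcal{B}_t)$ computed at each iteration satisfies $|C|\le k^\star$.

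To avoid relying on the duality statement directly, I can also argue this step elementarily. Let $C^\star$ be any faithful circuit and $B\in\mathcal{B}_t$. If $C^\star\cap B=\emptyset$, then $C^\star\subseteq G\setminus B$, so monotonicity gives $\Phi(G\setminus B,G)$, contradicting $B\in\mathcal{B}$. Hence $C^\star$ hits $\mathcal{B}_t$, and $|\text{MHS}(\mathcal{B}_t)|\le|C^\star|$.

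It follows that whenever the algorithm returns (because $\Phi(C,G)$ holds), $C$ is faithful and $|C|\le k^\star$. Combined with the minimality of $k^\star$, this forces $|C|=k^\star$. So early termination already yields an exact cardinally minimal circuit, and non-terminating iterations yield valid lower bounds.

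\textbf{Convergence.} Take $t_{\max}=|G|$. I will argue that by round $t=|G|$ every $S\in\mathcal{B}$ has either been enumerated or is a superset of an already collected blocking set. The filter $U\not\subseteq S$ in the definition of $\mathcal{C}_t$ only skips such supersets, and any hitting set of $\mathcal{B}_t$ hits them automatically. Therefore the hitting sets of $\mathcal{B}_{|G|}$ coincide with those of $\mathcal{B}$. Proposition~\ref{mhs_main_theorem} then says the MHS is faithful, so the check $\Phi(C,G)$ succeeds and the algorithm returns a cardinally minimal circuit.

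The main obstacle is the filtering step. The pseudocode's condition ``$\forall U\subseteq\text{BlockingSets}$'' and its update ``$\text{BlockingSets}\cup S$'' are written loosely, so I will interpret $\text{BlockingSets}$ as a family of sets and the filter as skipping supersets of previously found members. I then need to show that the skipped supersets are redundant as hitting-set constraints. This is immediate, since $U\subseteq S$ and $H\cap U\neq\emptyset$ imply $H\cap S\neq\emptyset$. Skipped sets that are not blocking sets are simply never added, so $\mathcal{B}_t\subseteq\mathcal{B}$ is preserved.
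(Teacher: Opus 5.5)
Your proof is correct and follows the paper's route. For the lower bound you use the same monotonicity argument, namely that every faithful circuit must hit every collected blocking set; you argue it directly rather than by contradiction, which also avoids the paper's slip of assuming $|C'|\le|C|$ where $|C'|<|C|$ is needed. For convergence at $t_{\max}=|G|$ you reduce to the MHS duality proposition as the paper does, but you are more careful in two places the paper glosses over: you justify that the superset filter in $\mathcal{C}_t$ only discards redundant hitting-set constraints, and you note that an early return already yields a cardinally minimal circuit.
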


\emph{Proof.} We begin with the proof for the part on the lower bound to cardinally minimal circuit size. Let $C$ be the output of Algorithm~\ref{alg:contrastive_mhs_overall} for some $t_{\text{max}}$.  
By definition, $C$ is the MHS of the set of blocking-sets accumulated by the algorithm, denoted $\mathcal{B}_{t_{\text{max}}}$.

Assume towards contradiction that $|C|$ is not a lower bound for the size of a cardinally minimal circuit.  
This would imply the existence of a faithful circuit $C'$ with $\Phi(C',G)$ and $|C'| \le |C|$.  
From the minimality of $C$ as a hitting set, it follows that $C'$ is not a hitting set.  
Hence, there exists some $B \in \mathcal{B}_{t_{\text{max}}}$ such that $C' \cap B = \emptyset$.  
This implies $C' \subseteq G \setminus B$, and by monotonicity of $\Phi$ we obtain $\Phi(G \setminus B, G)$, contradicting $B$ being a blocking-set.  

We now continue to the second part of the proof regarding the convergence to cardinally minimal for large enough $t_{\text{max}}$. For $t_{\text{max}} = |G|$, the algorithm iterates over all possible blocking-sets.  
Hence, the resulting output $C$ is the MHS of all circuit blocking-sets, and by Proposition~\ref{mhs_main_theorem_appendix} we conclude that $C$ is a cardinally minimal circuit.  

$\qedsymbol{}$

\section{Minimality Guarantees: Algorithms and Illustrations}
\label{app:quasi_and_illustration}

\subsection{Greedy Exhaustive Iterative Circuit Discovery}
We present here an algorithm for exhaustively and repeatedly iterating over the circuit components, removing any component whose deletion does not violate faithfulness, until no further removals are possible (Algorithm~\ref{alg:exhaustive_metric_iterative}).

\begin{algorithm}[H]
\caption{Greedy Exhaustive Circuit Discovery}
\label{alg:exhaustive_metric_iterative}
\begin{algorithmic}[1]
\State \textbf{Input} Model $f_G$, circuit faithfulness predicate $\Phi$
\State $C \gets G \text{ under some given element ordering (e.g., reverse topological sort)}$
\State $\textit{changed} \gets \textbf{true}$
\While{$\textit{changed}$}
  \State $\textit{changed} \gets \textbf{false}$
  \ForAll{$i \in C$}
    \If{$\Phi(C\setminus\{i\}, G)$}
      \State $C \gets C \setminus \{i\}$
      \State $\textit{changed} \gets \textbf{true}$
    \EndIf
  \EndFor
\EndWhile
\State \Return $C$
\end{algorithmic}
\end{algorithm}

\subsection{Greedy Circuit Discovery Binary Search for Quasi-Minimal circuits}

We formalize the binary search procedure introduced in ~\citep{adolfi2024computational} in Algorithm~\ref{alg:qmsc}.

\begin{algorithm}[H]
\caption{Greedy Circuit Discovery Binary Search}
\label{alg:qmsc}
\begin{algorithmic}[1]

\State \textbf{Input:} Model $f_G$, circuit faithfulness predicate $\Phi$ 
\textbf{with} $\Phi(G,G)\ \land\ \neg\Phi(\emptyset,G)$
\State $C \gets G$, $\text{low}\gets 0$, $\text{high}\gets |G|$ 
\While{$\text{high} - \text{low} > 1$}
  \State $\text{mid} \gets \lfloor (\text{low} + \text{high})/2 \rfloor$
  \State $C_{\text{mid}} \gets G \setminus
  G[1:\text{mid}]$

  \If{$\Phi(C_{\text{mid}},G)$}
     \State $\text{low} \gets \text{mid}$;\ $C \gets C_{\text{mid}}$
  \Else
     \State $\text{high} \gets \text{mid}$
  \EndIf
\EndWhile

\State \textbf{return} $C$
\end{algorithmic}
\end{algorithm}

\subsection{Toy Example: Minimality Notions}
\label{app:minimality_example}

To illustrate the distinctions between the four minimality notions introduced in 
Definitions~\ref{quasi_minimal_circuit},\ref{local_minimal_circuit},\ref{subset_minimal},\ref{cardinally_minimal_circuit} 
(Section~\ref{sec:circuits_to_min_circuits}), 
we construct a simple Boolean toy network.  

For simplicity, we illustrate the different minimality notions using a Boolean circuit with XOR gates. While this abstraction makes the example easier to follow, it is without loss of generality, since Boolean gates can be equivalently expressed with ReLU activations.

(Specifically, the XOR gate satisfies
\[
x_1 \oplus x_2 \;=\; \mathrm{ReLU}(x_1 - x_2) + \mathrm{ReLU}(x_2 - x_1),
\]
as for \(x_1,x_2 \in \{0,1\}\) both terms vanish when \(x_1=x_2\), and exactly one equals~1 when \(x_1\ne x_2\)).


We emphasize that this encoding is not part of the computation graph, which can be defined independently.
Accordingly, our toy boolean circuit (Fig.~\ref{fig:xor_toy_minimality}) can be viewed as a small feed-forward ReLU network.
Despite its small size, this network cleanly separates the notions of \emph{cardinal}, \emph{subset}, \emph{local}, and \emph{quasi}-minimal circuits.

\begin{figure}[h!]
\centering





\begin{tikzpicture}[
  scale=0.7,
  every node/.style={transform shape, font=\large},
  input/.style={circle, draw, fill=orange!20, minimum size=11mm, inner sep=0pt},
  nodebox/.style={rectangle, draw, minimum width=2.1cm, minimum height=1cm, inner sep=2pt},
  opnode/.style={rectangle, draw, fill=gray!20, minimum width=1.4cm, minimum height=1.4cm, inner sep=0pt, font=\Large},
  every edge/.style={-Latex}
]

\node[nodebox] (v1) at (4.5,  1.8) {$v_1 = x_2$};
\node[nodebox, below=0.5cm of v1] (v2) {$v_2 = x_1$};
\node[nodebox, below=0.5cm of v2] (v3) {$v_3 = x_1$};
\node[nodebox, below=0.5cm of v3] (v4) {$v_4 = x_1 \oplus x_2$};
\node[nodebox, below=0.5cm of v4] (v5) {$v_5 = x_1 \oplus x_2$};

\node[nodebox, below=0.5cm of v5] (v6) {$v_6 = x_2 \oplus x_2  \: (=0)$};


\path let \p1 = (v1), \p2 = (v6) in
  coordinate (mid) at (0, {(\y1+\y2)/2});

\node[input] (x1) at ($(mid)+(0,0.9)$) {$x_1$};
\node[input, below=0.5cm of x1] (x2) {$x_2$};


\node[opnode] (xor) at ($ (v1)!0.5!(v6) + (4.5cm,0) $) {$\oplus$};



\node[right=0.7cm of xor] (y) {$y$};

\foreach \n in {v1,v2,v3,v4,v5, v6} {\draw (\n) -- (xor);}
\draw (xor) -- (y);

\draw (x2) -- (v1);
\draw (x1) -- (v2);
\draw (x1) -- (v3);
\draw (x1) -- (v4);
\draw (x2) -- (v4);
\draw (x1) -- (v5);
\draw (x2) -- (v5);
\draw (x2) -- (v6);

\end{tikzpicture}
\caption{Boolean toy network with XOR aggregation: 
$y = v_1 \oplus v_2 \oplus v_3 \oplus v_4 \oplus v_5$. 
With $v_1=x_2$, $v_2=v_3=x_1$, $v_4=v_5=x_1\oplus x_2$, and $v_6 = x_2 \oplus x_2 = 0$ the full model computes $f_G(x_1,x_2)=x_2$.}
\label{fig:xor_toy_minimality}
\end{figure}
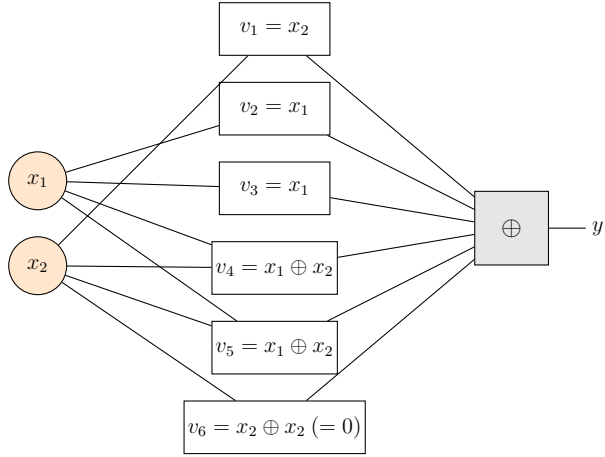

\paragraph{Explanation.}
For clarity and simplicity, we define a Boolean network $G$ which takes inputs in  $(x_1, x_2) \in \{0,1\}^2$. The network $G$ is 
composed of six components, whose vertex set is denoted $V_G=\{v_1,v_2,v_3,v_4,v_5, v_6\}$, aggregated by XOR (Fig.~\ref{fig:xor_toy_minimality}):
\[
y \;=\; v_1 \oplus v_2 \oplus v_3 \oplus v_4 \oplus v_5 \oplus v_6,
\quad\text{with}\quad
v_1 = x_2,\; v_2=v_3=x_1,\; v_4=v_5=x_1\oplus x_2, v_6=x_2\oplus x_2
\]

The full network computes $f_G(x_1,x_2)=x_2$,
since 
\[
\begin{aligned}
f_G(x_1,x_2) 
&= x_2 \oplus x_1 \oplus x_1 
   \oplus (x_1\!\oplus\!x_2) 
   \oplus (x_1\!\oplus\!x_2) 
   \oplus (x_2\!\oplus\!x_2) \\
&= x_2 
   \oplus \underbrace{(x_1\!\oplus\!x_1)}_{=0}
   \oplus \underbrace{[(x_1\!\oplus\!x_2)\oplus(x_1\!\oplus\!x_2)]}_{=0}
   \oplus \underbrace{(x_2\!\oplus\!x_2)}_{=0}
   = x_2.
\end{aligned}
\]

We pick the faithfulness predicate to be
\[
\Phi(C,G) := f_C(x_1,x_2) = f_G(x_1,x_2) = x_2 \quad \forall (x_1,x_2),
\]
i.e., $C$ is faithful if it computes the same output as $G$ on all inputs.

This single construction cleanly separates the four minimality notions:

For each circuit $C$, we verify that it satisfies $f_C(x_1,x_2)=x_2$ and state why it meets (or fails) the corresponding minimality condition.

\begin{itemize}

  \item \textbf{Cardinal-minimal:}  
    \(C_{\mathrm{card}}=\{v_1\}\).  
    It computes \(f_{C_{\mathrm{card}}}=v_1=x_2=f_G\).  
    No circuit with fewer components can be faithful.

  \item \textbf{Subset-minimal:}  
    \(C_{\mathrm{sub}}=\{v_2,v_4\}\).  
    The computation is
    \[
    v_2 \oplus v_4
       = x_1 \oplus (x_1\oplus x_2)
       = x_2.
    \]
    Removing any component breaks correctness:
    \(\{v_2\}=x_1\neq x_2\) and \(\{v_4\}=x_1\oplus x_2\neq x_2\).

  \item \textbf{Local-minimal:}  
    \(C_{\mathrm{loc}}=\{v_1,v_2,v_3\}\).  
    It computes
    \[
    v_1 \oplus v_2 \oplus v_3
       = x_2 \oplus x_1 \oplus x_1
       = x_2.
    \]
    Removing any single component breaks correctness:  
    \(\{v_1,v_2\}=x_2\oplus x_1\neq x_2\),  
    \(\{v_1,v_3\}=x_2\oplus x_1\neq x_2\),  
    \(\{v_2,v_3\}=x_1\oplus x_1=0\neq x_2\).  
    However, removing two components may still leave a correct singleton (e.g.\ $\{v_1\}$), so it is not subset-minimal.

  \item \textbf{Quasi-minimal:}  
    \(C_{\mathrm{quasi}}=\{v_1,v_2,v_3,v_4,v_5\}\).  
    It computes
    \[
    v_1\oplus v_2\oplus v_3 \oplus v_4 \oplus v_5 = x_2.
    \]
    The circuit contains a single essential component (\(v_1\)), while the remaining ones can be removed in various combinations without changing the output (e.g., \(v_2\oplus v_3=0\) and \(v_4\oplus v_5=0\)).  
    Hence it is faithful but not minimal under any stricter notion.

\end{itemize}

\section{Benchmarks, Models and Architectural Specifications}
\label{app:exps_setup}
    
We evaluate our methods on four standard benchmarks in neural network verification: three classification benchmarks (CIFAR-10~\citep{krizhevsky2009learning}, GTSRB~\citep{6033395}, and MNIST~\citep{726791}) and one regression benchmark, TaxiNet~\citep{julian2020validation}.

For each benchmark, we perform circuit discovery at the natural level of granularity for the model: convolutional filters (or \emph{channels}) in CNNs and neurons in the fully connected network. The number of components at each granularity is summarized in Table~\ref{tab:granularity_summary}.

\begin{table}[H]
\caption{Granularity and number of components considered for circuit discovery across the benchmark models.}
\centering
\small
\begin{tabular}{l c c c}
\toprule
\textbf{Dataset} & \textbf{Model} & \textbf{Examined Granularity} & \textbf{\# Components} \\
\midrule
MNIST    & FC & Neurons     & 31 \\
GTSRB    & CNN & Filters     & 48 \\
CIFAR-10 & ResNet & Filters    & 72 \\
TaxiNet & CNN & Filters & 8 \\
\bottomrule
\end{tabular}
\label{tab:granularity_summary}
\end{table}

\paragraph{Data Selection.}  

For the input and patching robustness experiments (Appendices~\ref{app:input_robusness_exp}, \ref{app:patching_robusness_exp}), we constructed at least 100 batches per benchmark, sampled from the \textbf{test} set using only correctly predicted inputs (or low-error inputs in the regression case). In classification tasks, each batch contained \(k=3\) samples from a single class, evenly distributed across classes.  
In the regression task, batches of \(k=3\) were drawn from inputs with absolute error below~0.2, excluding large deviations relative to the model’s performance. Specifically, we sampled 100 batches for \textbf{CIFAR-10} and \textbf{MNIST} (10 per class), 129 batches for \textbf{GTSRB} (3 per class across 43 classes), and 100 batches for the regression benchmark \textbf{TaxiNet}.

For the minimality guarantees experiment (Subsection~\ref{subsec:minimality_guarantees}, Appendix~\ref{app:exploring_minimality_exp}), we used 50 singleton batches (\(k=1\)), obtained by selecting one sample from each MNIST batch above, thereby preserving the even class distribution.

\subsection{CIFAR-10}
For the CIFAR-10 benchmark~\cite{krizhevsky2009learning}, we use the ResNet2b model, originating from the VNN-COMP neural network verification competition~\citep{bak2021second}.  
This residual network consists of an initial convolutional layer, two residual blocks, and a dense classification head producing 10 output classes.  
In total, it comprises 72 filters (also referred to as \emph{channels}).


\begin{table}[H]
\centering
\small
\begin{tabular}{l c c}
\toprule
\textbf{Layer / Block} & \textbf{Output Dim.} & \textbf{Details} \\
\midrule
Input & $32 \times 32 \times 3$ & CIFAR-10 image \\
Conv1 & $16 \times 16 \times 8$ & $3 \times 3$, stride 2 \\
\midrule
\multicolumn{3}{l}{\textit{Residual Block 1}} \\
\quad Conv1 & $8 \times 8 \times 16$ & $3 \times 3$, stride 2 \\
\quad ReLU & $8 \times 8 \times 16$ & non-linearity \\
\quad Conv2 & $8 \times 8 \times 16$ & $3 \times 3$, stride 1 \\
\quad Skip connection & $8 \times 8 \times 16$ & identity/projection \\
\quad Output & $8 \times 8 \times 16$ & addition + ReLU \\
\midrule
\multicolumn{3}{l}{\textit{Residual Block 2}} \\
\quad Conv1 & $8 \times 8 \times 16$ & $3 \times 3$, stride 1 \\
\quad ReLU & $8 \times 8 \times 16$ & non-linearity \\
\quad Conv2 & $8 \times 8 \times 16$ & $3 \times 3$, stride 1 \\
\quad Skip connection & $8 \times 8 \times 16$ & identity \\
\quad Output & $8 \times 8 \times 16$ & addition + ReLU \\
\midrule
Flatten & $1 \times 2048$ & -- \\
Linear1 & $2048 \rightarrow 100$ & ReLU \\
Linear2 & $100 \rightarrow 10$ & Output logits \\
\bottomrule
\end{tabular}
\caption{Full architecture of the ResNet2b model used in \cite{bak2021second} for the CIFAR-10 benchmark}
\end{table}

\subsection{GTSRB}
The German Traffic Sign Recognition Benchmark (GTSRB)~\cite{6033395} is a large-scale image classification dataset containing more than 50,000 images of traffic signs across 43 classes, captured under varying lighting and weather conditions.  

We adopt the GTSRB-CNN model used in recent explainability studies~\citep{bassan2025explaining}. This architecture is a convolutional network with two convolutional layers using ReLU activations and average pooling, followed by two fully connected layers. It outputs logits over 43 traffic sign classes.  

In total, the GTSRB-CNN comprises 48 filters across its two convolutional layers (16 + 32).  

\begin{table}[H]
\centering
\small
\begin{tabular}{l c c}
\toprule
\textbf{Layer} & \textbf{Output Dim.} & \textbf{Details} \\
\midrule
Input & $32 \times 32 \times 3$ & GTSRB image \\
Conv1 & $32 \times 32 \times 16$ & $3 \times 3$, padding 1, ReLU \\
AvgPool1 & $16 \times 16 \times 16$ & $2 \times 2$ \\
Conv2 & $16 \times 16 \times 32$ & $3 \times 3$, padding 1, ReLU \\
AvgPool2 & $8 \times 8 \times 32$ & $2 \times 2$ \\
Flatten & $1 \times 2048$ & -- \\
FC1 & $2048 \rightarrow 128$ & ReLU \\
FC2 & $128 \rightarrow 43$ & Output logits \\
\bottomrule
\end{tabular}
\caption{Architecture of the GTSRB-CNN model used in~\citep{bassan2025explaining}.}
\end{table}

\subsection{MNIST}
We use a simple, classic fully connected feedforward network for MNIST classification, which we trained given the simplicity of the task. 
The model achieves 95.20\% accuracy on the test set. 
It consists of two hidden layers with ReLU activations of sizes 13 and 11, followed by a linear output layer, comprising \textbf{31} non-input neurons in total and 10{,}479 trainable parameters.

\begin{table}[H]
\centering
\small
\begin{tabular}{l c c}
\toprule
\textbf{Layer} & \textbf{Dimensions} & \textbf{Activation} \\
\midrule
Input  & $28 \times 28 = 784$ & -- \\
Fully Connected (fc1) & $784 \rightarrow 13$ & ReLU \\
Fully Connected (fc2) & $13 \rightarrow 11$ & ReLU \\
Fully Connected (fc3) & $11 \rightarrow 10$ & -- \\
\bottomrule
\end{tabular}
\caption{Architecture of the fully connected MNIST network. 
The model has 31 hidden neurons in total, which we treat as the granularity for circuit discovery.}

\end{table}

\subsection{TaxiNet}
The TaxiNet dataset~\citep{julian2020validation} was developed by NASA for vision-based aircraft taxiing, and consists of synthetic runway images paired with continuous control targets. Unlike the classification benchmarks, TaxiNet is a \emph{regression} task: the model predicts real-valued outputs corresponding to flight control variables.  

For our experiments, we adopt the TaxiNet CNN regression model introduced in the VeriX framework~\citep{wu2023verix} and subsequently used in other explainability studies~\citep{bassan2025explaining}. This convolutional network, comprising 8 filters, achieves a mean squared error (MSE) of 0.848244, and a root mean squared error (RMSE) of 0.921.

\begin{table}[H]
\centering
\small
\begin{tabular}{l c c}
\toprule
\textbf{Layer} & \textbf{Output Dim.} & \textbf{Details} \\
\midrule
Input  & $27 \times 54 \times 1$ & TaxiNet image \\
Conv1  & $27 \times 54 \times 4$ & $3 \times 3$, padding 1, ReLU \\
Conv2  & $27 \times 54 \times 4$ & $3 \times 3$, padding 1, ReLU \\
Flatten & $1 \times 5832$ & -- \\
FC1    & $5832 \rightarrow 20$ & ReLU \\
FC2    & $20 \rightarrow 10$ & ReLU \\
FC3    & $10 \rightarrow 1$ & Regression output \\
\bottomrule
\end{tabular}
\caption{Architecture of the CNN regression model used for TaxiNet, following the VeriX framework~\citep{wu2023verix}.}
\end{table}

\section*{Experimental details}
\addcontentsline{toc}{section}{Experimental details}

\section{Input Robustness Certification}
\label{app:input_robusness_exp}

In this experiment, we evaluate the robustness of discovered circuits over a continuous input neighborhood $\mathcal{B}^{p}_{\epsilon}(\x)$, as established in Section~\ref{subsec:input_domain_guarantees}. We compare two variants of the iterative circuit discovery procedure in Algorithm~\ref{alg:general_metric}, which differ in their elimination criterion:  
\begin{enumerate}
\item \textbf{Sampling-based Circuit Discovery}: directly evaluates the metric on the input batch at each step.  
\item \textbf{Provably Input-Robust Circuit Discovery}: certifies that the metric holds across the entire input neighborhood (Def.~\ref{circuit_robustness_Def}).
\end{enumerate}

The procedure traverses network components sequentially, deciding at each step whether to retain or remove a component. As noted in~\cite{conmy2023towards}, the traversal order influences the resulting circuit. Following their approach, we proceed from later fully-connected or convolutional layers toward earlier ones, ordering neurons or filters within each layer lexicographically. For consistency, we fix the patching scheme for all non-circuit components to \emph{zero-patching}.

\subsection{Methodology}
\label{section:input_methodology}
\subsubsection{Siamese Network for Verification}


To integrate circuit discovery with formal verification, we construct a \emph{Siamese Network}, which pairs the full model with a candidate circuit, and outputs the \textbf{concatenation} of the two networks' logits.




This Siamese formulation provides the interface for the neural network verification used in two settings:
\begin{itemize}
    \item \textbf{Provably Input-Robust Circuit Discovery:} certify at each elimination step that the candidate circuit satisfies the metric across the continuous input neighborhood, ensuring robustness is preserved.  
    \item \textbf{Evaluation:} verify after discovery (via either sampling- or provable-based methods) that the resulting circuit is robust over the same neighborhood.  
\end{itemize}

\paragraph{Output Metric.}
\label{parag:verification_on_output}
For consistency, all of our experiments employ the same output metric for both the sampling-based and provably input-robust discovery methods. 
We measure the difference in logits, requiring this difference to remain within a tolerance $\delta$.  

In classification tasks over an input $z$, we focus on the logit of the gold-class, indexed by $k$, and require that the predictions of the circuit $C$ and the full model $G$ differ by at most $\delta$:  
\[
\lvert f_G(z)[k] - f_C(z)[k] \rvert \leq \delta,
\]

where the absolute value denotes the $\ell_p$-norm on the one-dimensional vector corresponding to the $k$-th entry.  
In the sampling-based method, it is evaluated on the sampled batch, whereas in the provable (siamese) setting, this criterion is certified over the concatenated logits of the siamese encoding. For instance, in a 10-class classification task, the verification constraint on the siamese network’s output is:   
\[
\lvert \, \text{logits}_{[:10]}[k] - \text{logits}_{[10:]}[k] \, \rvert \leq \delta.
\]
where the first 10 entries correspond to the logits of the full model $G$ and the second to those of the circuit $C$.

In regression settings (e.g., TaxiNet), the same principle applies to the full output (a scalar-valued prediction), measuring the absolute difference between the model and the circuit. In both cases, this metric directly instantiates the norm metric used in the robustness definitions (Definitions~\ref{circuit_robustness_Def},~\ref{patching_robustness_property}).

\paragraph{Input Neighborhoods.} 
In our setup, the neighborhood $\mathcal{B}^{\infty}_{\epsilon_p}(\x)$ is defined in the input space with respect to the $L^{\infty}$ norm. For fully connected models (e.g., MNIST), inputs are flattened into vectors and the perturbation ball is defined over this representation. For convolutional models (e.g., CIFAR-10, GTSRB, TaxiNet), inputs are multi-channel tensors, and the neighborhood is applied independently to each channel and spatial location.

\subsubsection{Verification and Experimental Setup (Input Robustness)}
\label{sec:input_experimental_setup}

Since sound-and-complete verification of piecewise linear activation networks against linear properties is NP-hard~\cite{katz2017reluplex},
Some queries may not complete within the allotted time; in such cases, the outcome is reported as \emph{unknown}. In practice, with the $\alpha,\beta$-CROWN verifier, we limit each query to \textbf{45} seconds of Branch-and-bound time. 


For fairness, we report robustness statistics only on batches where the robustness check of the sampling-based method was determined (robust or non-robust, excluding timeouts)
In the main paper results, the rate of timed-out instances was 1\% on MNIST, 1.6\% on GTSRB, 1\% on CIFAR-10, and 5\% on TaxiNet. Comparable rates were observed in the neighborhood size variations ~\ref{app:input_robustness_epsilon_ablations} studies (on average, 0.5\% for MNIST, 3\% for CIFAR-10, and 2.7\% for TaxiNet), and in the tolerance level variations $\delta$ ~\ref{app:input_robustness_delta_ablations} (on average 8.6\% for TaxiNet, 3.6\% for MNIST). 

Experiments on MNIST, GTSRB, and CIFAR-10 were conducted on a unified hardware setup with a 48\,GB NVIDIA L40S GPU paired with a 2-core, 16\,GB CPU. For the TaxiNet model, we used only a 2-core, 36\,GB CPU machine.

\subsection{Robustness Evaluation and Parameter Variations}
We evaluate the robustness of circuits discovered by both methods over the neighborhood $\mathcal{B}^{\infty}_{\epsilon_p}(\mathbf{x})$, using formal verification via the \emph{Siamese Network} as described above. 
The parameter $\epsilon_p$ controls the neighborhood size: if too small, the resulting circuits are trivial; if too large, the perturbations become unrealistic and off-distribution. We choose $\epsilon_p$ values within ranges commonly used in prior verification work or empirically selected to balance circuit size and robustness. In addition, we vary both parameters, $\epsilon_p$ and $\delta$, to analyze their effect.

\subsubsection{Variation of Input Neighborhood Size $\epsilon_p$}
\label{app:input_robustness_epsilon_ablations}
We fix the tolerance $\delta$ and vary the input neighborhood size $\epsilon_p$.  
For CIFAR-10 and MNIST, we use $\delta=2.0$; for GTSRB, $\delta=5.0$.  
For the TaxiNet regression model, we set $\delta=0.92$ (the model’s root mean squared error, RMSE), reflecting its typical prediction scale (larger deviations would let the circuit drift more than the full model from the ground truth).  
Results are reported in Table~\ref{tab:epsilon_ablation}. Rows highlighted in gray correspond to the results selected in the main paper.

\begin{table}[H]
\centering
\footnotesize
\resizebox{\textwidth}{!}{%
\begin{tabular}{l
| S[table-format=1.3]
| S[table-format=1.3] S[table-format=2.3] l
| S[table-format=5.2] S[table-format=2.3] l |}
\toprule
{Dataset} & {$\epsilon_p$}
& \multicolumn{3}{c|}{\textbf{Sampling-based Circuit Discovery}} 
& \multicolumn{3}{c}{\textbf{Provably Input-Robust Circuit Discovery}} \\
\cmidrule(lr){3-5}\cmidrule(lr){6-8}
&
& {Time (s)} & {Size ($|C|$)} & {Robustness (\%)} 
& {Time (s)} & {Size ($|C|$)} & {Robustness (\%)} \\
\midrule
MNIST ($\delta{=}2.0$) \\
& 0.005 & {0.015 $\pm 0.003$} & {12.57 $\pm 2.29$} & {\textbf{46.0 $\pm 5.0$}}
         & {677.36 $\pm 183.65$} & {14.51 $\pm 2.41$} & {\textbf{100.0 $\pm 0.0$}} \\
& 0.009 & {0.016 $\pm 0.013$} & {12.56 $\pm 2.30$} & {\textbf{25.3 $\pm 4.4$}}
         & {663.60 $\pm 181.13$} & {15.76 $\pm 2.25$} & {\textbf{100.0 $\pm 0.0$}} \\
\rowcolor{gray!10}
& 0.010 & {0.309 $\pm 0.889$} & {12.56 $\pm 2.30$} & {\textbf{19.2 $\pm 4.0$}}
         & {611.93 $\pm 97.14$} & {15.84 $\pm 2.33$} & {\textbf{100.0 $\pm 0.0$}} \\
& 0.050 & {0.027 $\pm 0.065$} & {12.57 $\pm 2.29$} & {\textbf{0.0 $\pm 0.0$}}
         & {1700.05 $\pm 562.36$} & {28.75 $\pm 6.67$} & {\textbf{100.0 $\pm 0.0$}} \\
\midrule
TaxiNet ($\delta{=}0.92$) \\
& 0.001 & {0.040 $\pm 0.146$} & {5.76 $\pm 0.77$} & {\textbf{62.2 $\pm 4.9$}}
        & {201.81 $\pm 34.16$} & {6.07 $\pm 0.71$} & {\textbf{100.0 $\pm 0.0$}} \\
\rowcolor{gray!10}
& 0.005 & {0.010 $\pm 0.002$} & {5.77 $\pm 0.80$} & {\textbf{9.5 $\pm 3.0$}}
        & {180.00 $\pm 40.39$} & {6.82 $\pm 0.46$} & {\textbf{100.0 $\pm 0.0$}} \\
& 0.010 & {0.010 $\pm 0.002$} & {5.78 $\pm 0.79$} & {\textbf{2.0 $\pm 1.4$}}
        & {271.03 $\pm 54.23$} & {7.91 $\pm 0.32$} & {\textbf{100.0 $\pm 0.0$}} \\
\midrule
CIFAR-10 ($\delta{=}2.0$) \\
& 0.007 & {0.035 $\pm 0.001$} & {16.70 $\pm 9.48$} & {\textbf{73.7 $\pm 4.4$}}
        & {2104.13 $\pm 118.95$} & {17.96 $\pm 9.90$} & {\textbf{100.0 $\pm 0.0$}} \\
& 0.012 & {0.116 $\pm 0.367$} & {16.91 $\pm 9.12$} & {\textbf{58.1 $\pm 5.1$}}
        & {2226.34 $\pm 103.71$} & {18.88 $\pm 9.21$} & {\textbf{100.0 $\pm 0.0$}} \\
\rowcolor{gray!10}
& 0.015 & {0.228 $\pm 0.517$} & {16.47 $\pm 9.08$} & {\textbf{46.5 $\pm 5.0$}}
        & {2970.85 $\pm 874.23$} & {19.18 $\pm 10.16$} & {\textbf{100.0 $\pm 0.0$}} \\
\midrule
GTSRB ($\delta{=}5.0$) \\
\rowcolor{gray!10}
& 0.001 & {0.111 $\pm 0.329$} & {28.91 $\pm 4.69$} & {\textbf{27.6 $\pm 4.0$}}
        & {991.08 $\pm 162.91$} & {29.59 $\pm 4.45$} & {\textbf{100.0 $\pm 0.0$}} \\
\bottomrule
\end{tabular}
}
\caption{Effect of varying the input neighborhood size $\epsilon_p$ under a fixed tolerance $\delta$. 
Reported values are means with standard deviations. 
For robustness (a binary variable), we report the standard error (SE). 
Bold values indicate robustness percentages. 
Rows highlighted in gray correspond to the results selected in the main paper.}
\label{tab:epsilon_ablation}
\end{table}


\subsubsection{Variation of tolerance level $\delta$}
\label{app:input_robustness_delta_ablations}
We vary the tolerance $\delta$ while fixing the input neighborhood size $\epsilon_p$ to the dataset-specific values used in the main paper (MNIST: $\epsilon_p{=}0.01$, TaxiNet: $\epsilon_p{=}0.005$). Results are reported in Table~\ref{tab:delta_ablation}. Rows corresponding to the main paper results are highlighted in gray.

\begin{table}[H]
\centering
\footnotesize
\resizebox{\textwidth}{!}{%
\begin{tabular}{l
| S[table-format=1.2]
| S[table-format=1.3] S[table-format=2.2] l
| S[table-format=5.2] S[table-format=2.2] l}
\toprule
{Dataset} & {$\delta$}
& \multicolumn{3}{c|}{\textbf{Sampling-based Circuit Discovery}} 
& \multicolumn{3}{c}{\textbf{Provably Input-Robust Circuit Discovery}} \\
\cmidrule(lr){3-5}\cmidrule(lr){6-8}
& 
& {Time (s)} & {Size ($|C|$)} & {Robustness (\%)} 
& {Time (s)} & {Size ($|C|$)} & {Robustness (\%)} \\
\midrule

MNIST ($\epsilon_p{=}0.01$) \\
& 0.50 & {0.083 $\pm$ 0.329} & {19.72 $\pm$ 1.50} & {\textbf{15.6} $\pm$ 3.8}
       & {460.14 $\pm$ 36.59} & {22.02 $\pm$ 2.52} & {\textbf{100.0} $\pm$ 0.0} \\
\rowcolor{gray!10}
& 2.00 & {0.309 $\pm$ 0.889} & {12.56 $\pm$ 2.29} & {\textbf{19.2} $\pm$ 4.0}
       & {611.93 $\pm$ 97.14} & {15.84 $\pm$ 2.33} & {\textbf{100.0} $\pm$ 0.0} \\
& 3.00 & {0.013 $\pm$ 0.001} & {9.44 $\pm$ 1.92} & {\textbf{34.0} $\pm$ 4.7}
       & {577.41 $\pm$ 35.84} & {12.27 $\pm$ 2.52} & {\textbf{100.0} $\pm$ 0.0} \\
\midrule
TaxiNet ($\epsilon_p{=}0.005$) \\
& 0.50 & {0.038 $\pm$ 0.115} & {6.84 $\pm$ 0.86} & {\textbf{32.1} $\pm$ 5.2}
        & {93.08 $\pm$ 22.75} & {7.75 $\pm$ 0.46} & {\textbf{100.0} $\pm$ 0.0} \\
& 0.70 & {0.008 $\pm$ 0.001} & {6.15 $\pm$ 0.81} & {\textbf{14.8} $\pm$ 3.8}
        & {114.51 $\pm$ 23.95} & {7.31 $\pm$ 0.53} & {\textbf{100.0} $\pm$ 0.0} \\
\rowcolor{gray!10}
& 0.92 & {0.010 $\pm$ 0.002} & {5.77 $\pm$ 0.80} & {\textbf{9.5} $\pm$ 3.0}
        & {180.00 $\pm$ 40.39} & {6.82 $\pm$ 0.46} & {\textbf{100.0} $\pm$ 0.0} \\
& 1.00 & {0.009 $\pm$ 0.002} & {5.57 $\pm$ 0.82} & {\textbf{9.5} $\pm$ 3.0}
        & {142.62 $\pm$ 24.77} & {6.66 $\pm$ 0.52} & {\textbf{100.0} $\pm$ 0.0} \\
& 1.20 & {0.009 $\pm$ 0.001} & {5.43 $\pm$ 0.96} & {\textbf{6.1} $\pm$ 2.4}
        & {155.03 $\pm$ 26.34} & {6.32 $\pm$ 0.59} & {\textbf{100.0} $\pm$ 0.0} \\

\bottomrule
\end{tabular}
}
\caption{Variation on tolerance level $\delta$, with input neighborhood size $\epsilon$ fixed to the dataset-specific values used in the main experiments. 
Reported values are means with standard deviations. 
For robustness (a binary variable), we report the standard error (SE). 
Rows highlighted in gray correspond to the results selected in the main paper.}
\label{tab:delta_ablation}
\end{table}

\subsubsection{Evaluation under Alternative Output Metrics}
\label{app:alternative_metrics}

To further assess the generality of our framework, we repeat the robustness evaluation using alternative output metrics beyond the default \emph{logit-difference} criterion.  
Specifically, we consider two additional formulations:

\begin{itemize}[leftmargin=1.5em]
    \item \textbf{Consistent winner class:} Given some target class $t\in [d]$, enforces that the winner class remains consistent over a specified region. This metric directly targets preservation of the predicted class across the input domain and is widely used in robustness verification studies. 
    The criterion then enforces that:
    \[ \forall \z \in \mathcal{Z}, \quad
    \argmax_j (f_G(\z))_{(j)} = \argmax_j(f_C(\x \mid \overline{C}=\alpha)(\z))_{(j)}=t.
    \]
To allow greater flexibility, we relax the requirement by permitting the predicted class to remain consistent under any change that stays within a tolerance $\delta \in \mathbb{R}^+$ above the runner-up class. When $\delta = 0$, this reduces back to the original definition. To make this threshold more meanignful and interpretable, we set $\delta$ as a configurable fraction $\alpha \in (0,1]$ of the model’s original winner–runner gap on the unperturbed input. In our experiments, for simplicity, we enforce the consistency condition only between the winner and runner-up classes.



    \item \textbf{Abs-Max:} Bounds the maximum absolute deviation across all output dimensions by a specified threshold. This criterion does not guarantee class invariance but constrains the overall output drift. Formally, we require:
    \[ \forall \z \in \mathcal{Z}, \quad
    \lVert f_G(\z) - f_C(\x \mid \overline{C}=\alpha) \rVert_{\infty} \leq \delta,
    \]
    ensuring that no individual logit differs by more than $\delta$.

\end{itemize}

We evaluate both metrics, using the same discovery configurations and $\epsilon_p$ as in the main input-robustness experiments. For the logits-difference metric, we used the same $\delta$ as in our main experiment (Table~\ref{tab:input_robustnesss}). In the winner-runner setting, we set $\alpha = 0.5$ (preserving half the original margin), and for the abs-max criterion we used $\delta = 4.0$. Results are reported in Table~\ref{tab:alternative_metrics}, which compares the \emph{sampling-based} and \emph{provable} discovery methods under each metric.

\begin{table}[H]
\centering
\footnotesize
\resizebox{\textwidth}{!}{%
\begin{tabular}{l
| l
| S[table-format=2.2] S[table-format=2.2] l
| S[table-format=5.2] S[table-format=2.2] l}
\toprule
{Dataset} & {Metric} 
& \multicolumn{3}{c|}{\textbf{Sampling-based Circuit Discovery}} 
& \multicolumn{3}{c}{\textbf{Provably Input-Robust Circuit Discovery}} \\
\cmidrule(lr){3-5}\cmidrule(lr){6-8}
&
& {Time (s)} & {Size ($|C|$)} & {Robustness (\%)} 
& {Time (s)} & {Size ($|C|$)} & {Robustness (\%)} \\
\midrule
MNIST & Logit-diff
& {0.31 $\pm 0.89$} & {12.56 $\pm 2.30$} & \textbf{19.2 $\pm 4.0$}
         & {611.93 $\pm 97.14$} & {15.84 $\pm 2.33$} & \textbf{100.0 $\pm 0.0$} \\
                 
& Winner--Runner
    & {0.13 $\pm$ 0.60}
    & {5.18 $\pm$ 1.05}
    & {\textbf{73.0 $\pm 4.4$}}
    & {638.57 $\pm$ 20.94}
    & {12.04 $\pm$ 5.79}
    & {\textbf{100.0 $\pm 0.0$}} \\
& Abs-Max
    & {0.014 $\pm$ 0.012} 
    & {25.55 $\pm$ 2.90} 
    & {\textbf{6.0 $\pm 2.4$}} 
    & {362.61 $\pm$ 55.73} 
    & {28.11 $\pm$ 2.37} 
    & {\textbf{100.0 $\pm 0.0$}} \\
\bottomrule
\end{tabular}
}
\caption{Comparison of circuit discovery methods under alternative output metrics. 
Reported values are means with standard deviations. 
For robustness (a binary variable), we report the standard error (SE). 
All methods use the same configurations as in the main experiments.}
\label{tab:alternative_metrics}
\end{table}

Across all metrics, the same overall trend is observed: the provable method consistently approaches 100\% robustness while maintaining circuit sizes comparable to those of the sampling-based baseline, which attains substantially lower robustness. This consistency across different metrics suggests that the robustness of the provable approach is not tied to a particular output metric, but reflects a stable characteristic of the method.

\subsubsection{Coverage Analysis of Provably-Robust vs.\ Sampling-Based Circuits}
\label{appendix:coverage}

To better understand the relationship between the circuits identified by our provably-robust procedure and those produced by the sampling-based method, we conduct an explicit coverage analysis over several robustness radii
$\epsilon \in \{0.005, 0.007, 0.01, 0.02, 0.03, 0.04, 0.05\}$ on the MNIST benchmark. All other settings, including tolerance and metric definitions, follow those used in the main experiment (as discussed in section~\ref{section:input_methodology}). We conduct these experiments on a 2-core CPU machine with 16 GB of RAM.



For each perturbation radius~$\epsilon_p$, we examine the provably input-robust circuit~$C_p$ derived for that radius and the sampling-based circuit~$C_s$, each obtained over 100 different inputs (as in our experimental setup), resulting in 100 circuit pairs for every~$\epsilon_p$.

For these two circuits, we compute:
(i) the size of the intersection, $|C_p \cap C_s|$,
(ii) the components unique to the provable circuit (\emph{provable-only}), $|C_p \setminus C_s|$, and
(iii) the components unique to the sampling-based circuit (\emph{sampling-only}), $|C_s \setminus C_p|$.

We average these quantities over the 100 circuit pairs and report their means and standard deviations. To summarize the overall similarity/discrepancy between $C_p$ and $C_s$ across~$\epsilon_p$, 
we additionally compute standard set-similarity measures: Intersection over Union (IoU), Dice coefficient, 
and two asymmetric coverage metrics (provable-over-sampling and sampling-over-provable).
These aggregate trends are visualized in Fig.~\ref{fig:coverage_curve}. 
To further highlight the non-overlapping components, 
Table~\ref{tab:coverage_diff} reports their counts and their percentages relative to the full network size.

Because the sampling-based method does not enforce a robustness condition, its circuit size remains constant across $\epsilon_p$, while the provable-based circuits naturally expand as $\epsilon_p$ increases in order to
guarantee certified robustness. We indeed view that as the required robustness grows, the provably-robust circuits include additional components essential for certification. 

\begin{figure}[h!]
\centering
\includegraphics[width=0.88\linewidth]{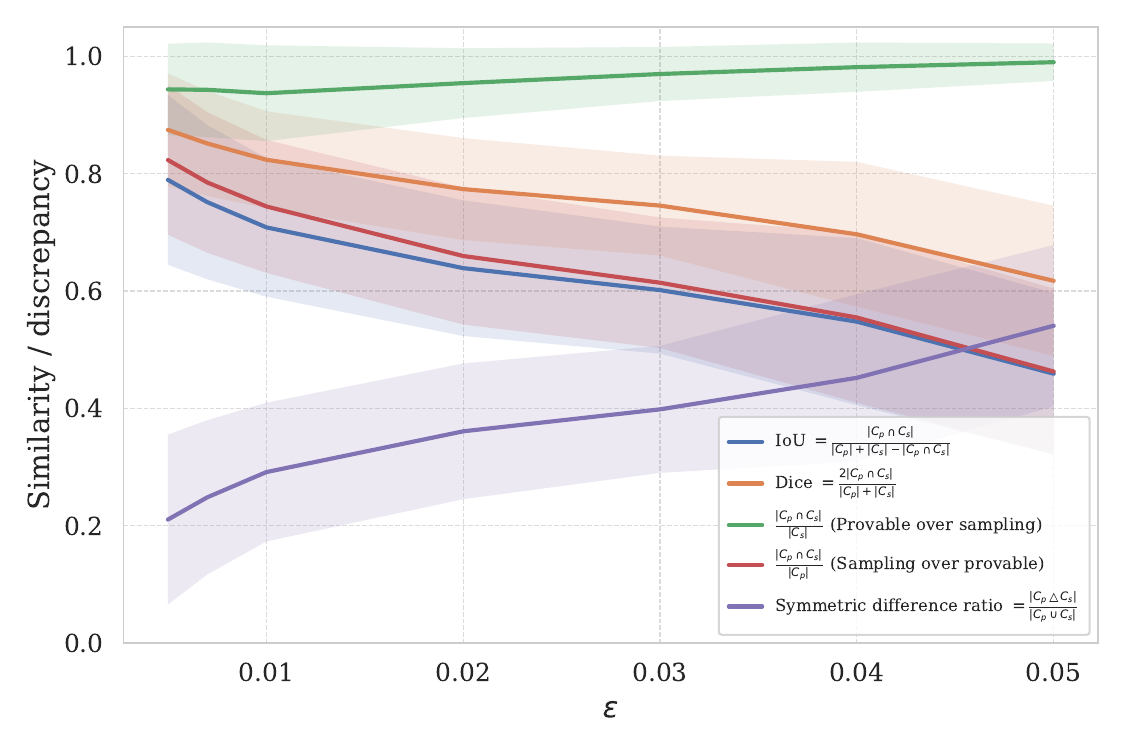}
\caption{Comparison of similarity and coverage metrics between provably-robust and sampling-based circuits. We report symmetric measures (IoU, Dice), a symmetric difference ratio, and asymmetric coverage ratios (\emph{provable over sampling}, \emph{sampling over provable}) to illustrate both overlap and directional differences.}
\label{fig:coverage_curve}
\end{figure}


As shown in Fig.~\ref{fig:coverage_curve}, the provably robust circuits consistently recover the vast majority of units identified by the sampling-based method across all $\epsilon_p$ values, with especially high agreement for small perturbation radii (e.g., IoU and Dice $\approx 0.9$ at $\epsilon_p=0.005$). As $\epsilon_p$ increases, the overlap between the two circuits gradually decreases (IoU drops toward 0.5), indicating that the sampling-based circuits capture a smaller fraction of the provable-based ones under larger perturbations.

\begin{table}[H]
\centering
\vspace{0.2cm}
\begin{tabular}{c|c|cc|cc}
\toprule
\textbf{Dataset} 
& $\boldsymbol{\epsilon_p}$ 
& $\;\boldsymbol{|C_p \setminus C_s|}\;$ 
& \textbf{\% of full net} 
& $\;\boldsymbol{|C_s \setminus C_p|}\;$ 
& \textbf{\% of full net} \\
\midrule
\multirow{7}{*}{MNIST}
& 0.005 & 2.64 & 7.76\% & 0.70 & 2.06\% \\
& 0.007 & 3.32 & 9.76\% & 0.74 & 2.18\% \\
& 0.010 & 4.10 & 12.06\% & 0.81 & 2.38\% \\
& 0.020 & 6.24 & 18.35\% & 0.58 & 1.71\% \\
& 0.030 & 7.73 & 22.74\% & 0.39 & 1.15\% \\
& 0.040 & 10.94 & 32.18\% & 0.24 & 0.71\% \\
& 0.050 & 16.11 & 47.38\% & 0.13 & 0.38\% \\
\bottomrule
\end{tabular}
\caption{Set differences between the provably-robust circuit $C_p$ and the sampling-based circuit $C_s$.
For each $\epsilon_p$, we report (i) the number of units appearing only in the provably-robust circuit 
$|C_p \setminus C_s|$, (ii) the number appearing only in the sampling-based circuit $|C_s \setminus C_p|$, 
and (iii) the corresponding percentages relative to the full network size for that dataset.}
\label{tab:coverage_diff}
\end{table}

This reflects the fact that the provably-robust circuits expand to satisfy stronger robustness requirements. This trend is also evident in Table~\ref{tab:coverage_diff}: the difference $C_p \setminus C_s$ grows steadily with $\epsilon_p$, while $C_s \setminus C_p$ remains small across all settings, and decreases further for larger perturbation radii - indicating that the sampling-based method contributes few components that are not required by the provable, certified solution.


\subsubsection{Runtime Trade-Off Across Input Neighborhood Sizes}

We aim to further analyze the runtime trade-off of the provably-robust method. For each input-robustness radius~$\epsilon_p$, we run the method to obtain a corresponding provably robust circuit and report the mean circuit-size-over-time curves (with standard deviation shown as shaded regions) across these circuits for different input neighborhoods induced by increasing~$\epsilon_p$. We perform this analysis on the MNIST benchmark, using the same perturbation radii and experimental settings as in the coverage analysis in Appendix~\ref{appendix:coverage}.

\begin{figure}[h!]
\centering
\includegraphics[width=0.9\textwidth]{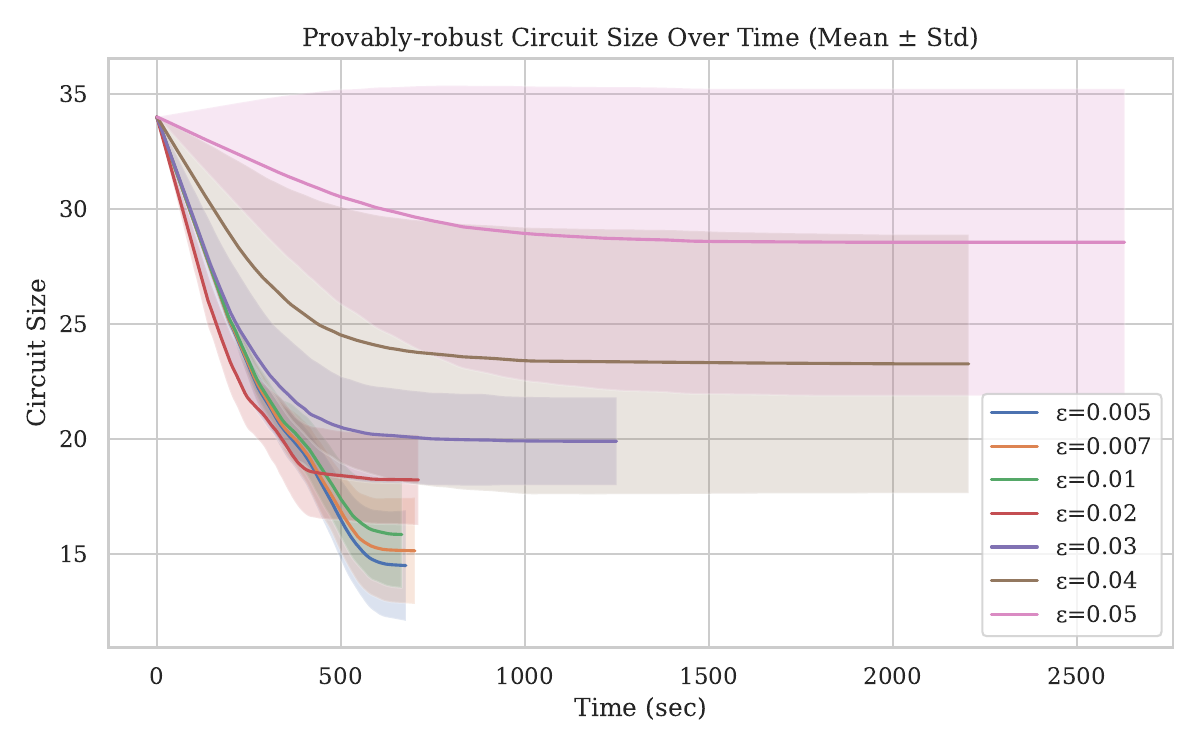}
\caption{Provable-robust circuit size over time on MNIST for different
neighborhood radii~$\epsilon_p$. Shaded regions denote the standard deviation.}
\label{fig:mnist_size_over_time}
\end{figure}

In all cases, the curves start at the full network size at time 0 and then decrease monotonically as components are pruned, until the procedure terminates. Across the smaller and closely spaced $\epsilon_p$ values, the curves exhibit a very similar
trajectory: an initial almost-linear decrease during the first
$\sim 300$ seconds, followed by a stabilization phase around
$\sim 500$ seconds.
The standard deviation bands for these curves are also
comparable. As expected, larger neighborhoods lead to larger final circuit sizes.

For substantially larger neighborhoods (e.g., $\epsilon_p = 0.03$-$0.05$), the behavior changes: the decrease is slower, stabilization occurs later, and the variability (standard deviation) is considerably higher. Moreover, for these larger and more widely spaced $\epsilon_p$ values, we observe a clear increase in overall runtime (as indicated by where the curves terminate), reflecting the added complexity of discovering robust certified circuits under broader perturbation regions.

\subsubsection{Qualitative observations of the discovered circuits}

While our method is centered on formal guarantees, and our evaluation therefore focuses on robustness and minimality, we also include a brief exploratory look at the circuits discovered by our provably robust procedure and by the sampling-based baseline. This examination is qualitative in nature and is intended only to provide an informal visual sense of how the two circuits behave.

For this analysis, we consider several channel-level GTSRB circuits produced in the input-robustness experiments. Recall that for each batch (composed of samples from the same class) we executed both our provably robust discovery (under a given $\epsilon=0.001$ neighborhood) and the sampling-based discovery, producing two circuits. In the examples below, we select pairs of circuits with comparable sizes, where the sampling-based circuit is empirically non-robust while the provably robust circuit is certified robust. We then analyze their behavior on a representative clean input from the batch and on its corresponding adversarial example (an $\epsilon=0.001$-bounded adversarial perturbation).


To obtain a coarse semantic signal, we apply Grad-CAM~\citep{selvaraju2017grad} to the last convolutional layer of (i) the full model, (ii) the provably robust circuit, and (iii) the sampling-based circuit. 
Grad-CAM produces a class-specific importance map by weighting spatial activations according to the globally averaged gradients of the target class logit. Formally, for class $c$,
\[
\alpha_k^{(c)}:
=
\frac{1}{HW}
\sum_{i,j}
\frac{\partial y_c}{\partial A^k_{ij}},
\qquad
\mathrm{CAM}_c(i,j):
=
\mathrm{ReLU}\!\left(
\sum_{k} \alpha_k^{(c)} A^k_{ij}
\right).
\]

Here, $y_c$ denotes the logit of the target class $c$ (the true label in our case), 
$A^k$ is the $k$-th activation map (i.e., the output of filter $k$) of spatial size $H \times W$, and 
$\alpha_k^{(c)}$ is the Grad-CAM weight obtained by spatially averaging the gradients 
$\partial y_c / \partial A^k_{ij}$. 
Multiplying these weights by the corresponding activation maps and summing over channels, as in $\mathrm{CAM}_c$, highlights the spatial regions that the model relies on most for predicting class~$c$. 

We use this mechanism to compare the behavior of the discovered circuits with that of the full model. 
Following common practice in vision models, we apply Grad-CAM to the last convolutional layer of the GTSRB networks. For visualization, we compute, normalize, and upsample the resulting Grad-CAM maps. 
Figure~\ref{fig:gtsrb_gradcam_revised} illustrates these maps for an illustrative GTSRB sample depicting a roundabout sign.





\begin{figure}[t]
    \centering

    \begin{subfigure}{0.5\linewidth}
        \centering
        \includegraphics[width=\linewidth]{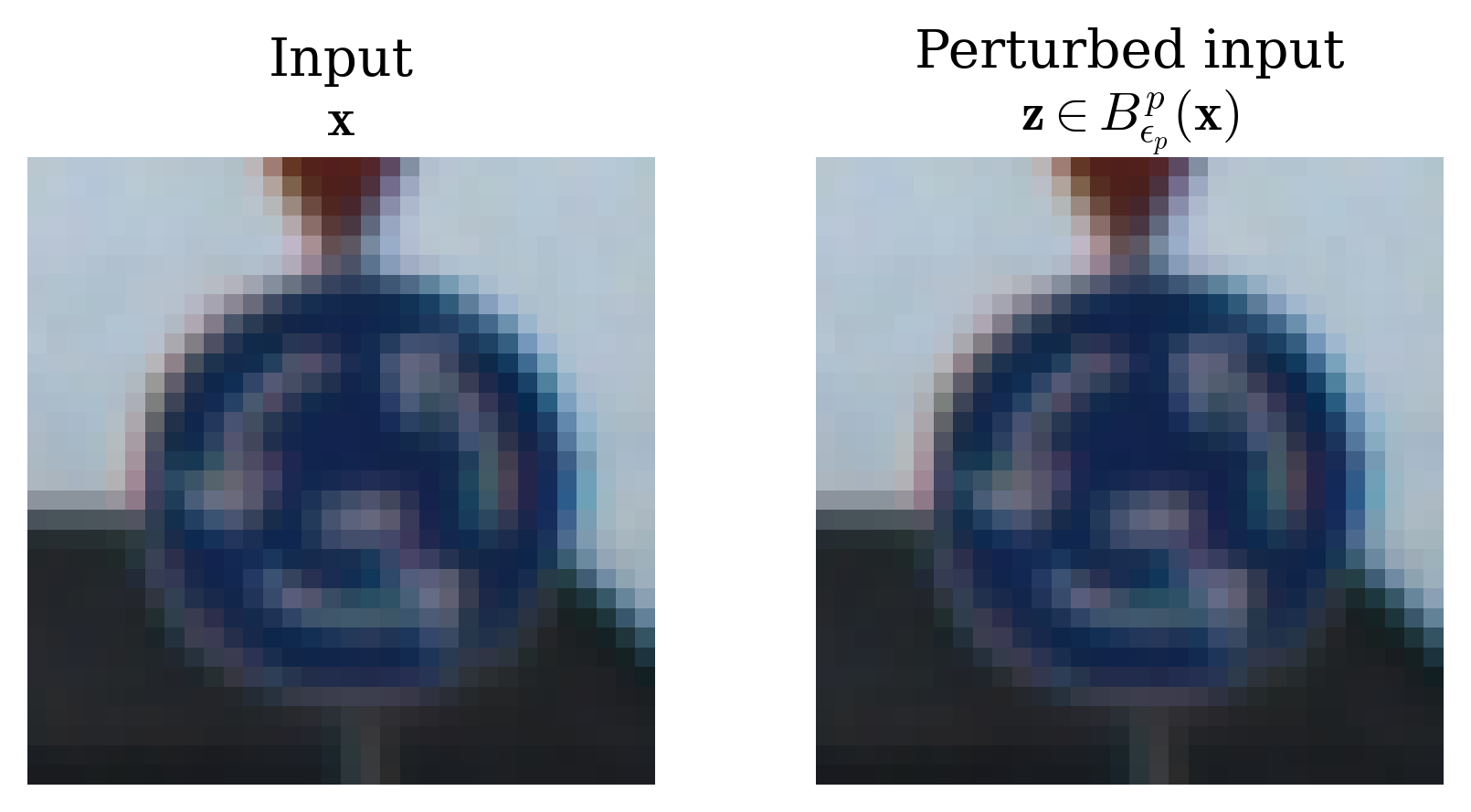}
        \caption{Original input, and a perturbed input in the GTSRB dataset.}
        \label{fig:gtsrb_clean_adv}
    \end{subfigure}

    \begin{subfigure}{0.8\linewidth}
        \centering
    \includegraphics[width=\linewidth]{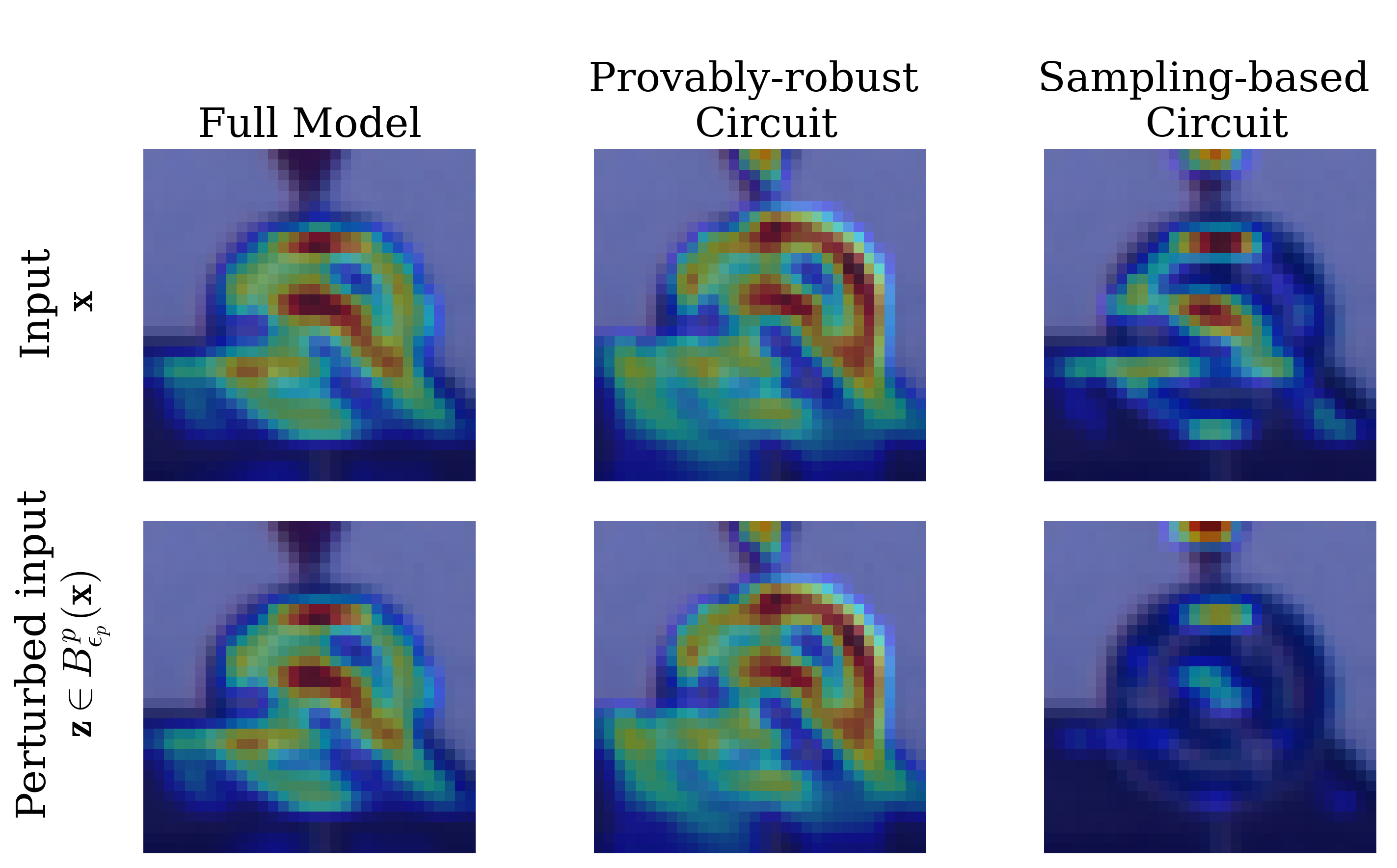}
        \caption{GradCAM computations at the last convolutional layer}
        \label{fig:gtsrb_conv2}
    \end{subfigure}

    \caption{
        Grad-CAM heatmaps comparison for a GTSRB input and its adversarial counterpart, shown at the last convolutional layer across three models (the full model, the provably robust circuit, and the sampling-based circuit).
    }
    \label{fig:gtsrb_gradcam_revised}
\end{figure}

While the sampling-based circuit is larger than the provably robust one (35 convolutional channels compared to 26), the latter exhibits a closer match to the full model in the final convolutional layer. As shown in Fig.~\ref{fig:gtsrb_conv2}, the heatmaps of the full model on this sample align well with those of the provably robust circuit, whereas the sampling-based circuit shows a less aligned activation pattern, with some loss of emphasis on regions in the sign interior.

In addition, despite the very small perturbation radius (which makes the clean and perturbed images visually almost indistinguishable; Fig.~\ref{fig:gtsrb_clean_adv}), Fig.~\ref{fig:gtsrb_conv2} shows that the sampling-based circuit shifts its attention between the two inputs, while the provably robust circuit exhibits essentially no variation. This may suggest that the provably robust circuit better maintains its focus under perturbations.

These observations, though not central to our evaluation, provide an additional qualitative lens on how the discovered circuits operate.

\subsubsection{Additional Qualitative Interpretation of Component-Level Behavior}

Another possible direction for qualitative analysis is to assign semantic interpretations to inner components and subgraphs. This may include examining their behaviour and inferring the causal pathways in which they participate.


In the example shown in Figure~\ref{fig:resnet2b_filters_example}, we consider a CIFAR-10 bird sample together with its adversarial perturbation (also displayed in Figure~\ref{fig:clean-adv-diff}), and compare the two circuit variants extracted from the ResNet model: the sampling-based circuit and the provably robust one. As illustrated, several filter-level components are preserved in the provably robust circuit, enabling it to satisfy the robustness criterion under perturbations.

To analyze the additional components, we focus on the first convolutional layer, as shown in Figure~\ref{fig:resnet2b_filters_example}. While later-layer interactions could also be insightful, for simplicity and clarity, we restrict our attention to the first-layer filters applied to the perturbed bird image. As the figure illustrates, this layer contains three filters shared by both circuits and one additional filter present only in the provably-robust circuit.

\begin{figure}[h!]
    \centering
\includegraphics[width=0.8\linewidth]{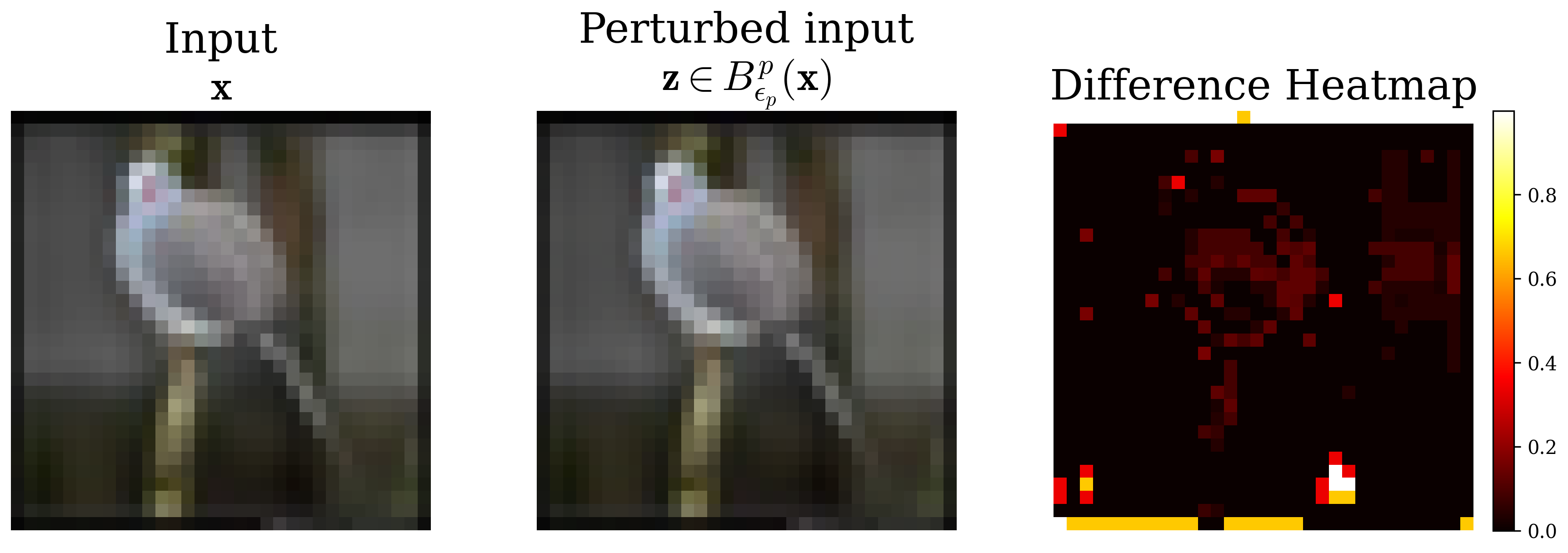}
    \caption{Clean input $\mathbf{x}$, its perturbed version $\mathbf{z} \in B^{p}_{\epsilon_p}(\mathbf{x})$ with $\epsilon_p=0.015$, 
    and the corresponding difference heatmap.}
    \label{fig:clean-adv-diff}
\end{figure}

We next examine the clean and adversarial images and their corresponding normalized difference heatmap, presented in Figure~\ref{fig:clean-adv-diff} within the main paper.
Although the perturbation at $\epsilon = 0.015$ is visually almost indistinguishable from the clean input, visualizing their difference reveals that substantial portions of the perturbation concentrate in the lower part of the image, beneath the bird’s contour.

\begin{figure}[h!]
    \centering
    \begin{subfigure}[t]{0.65\linewidth}
        \centering
        \includegraphics[width=0.32\linewidth]{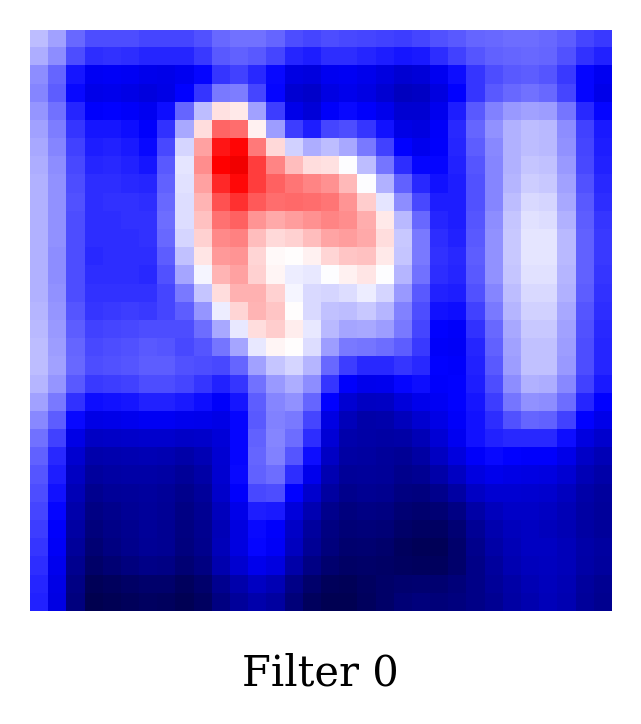}
        \includegraphics[width=0.32\linewidth]{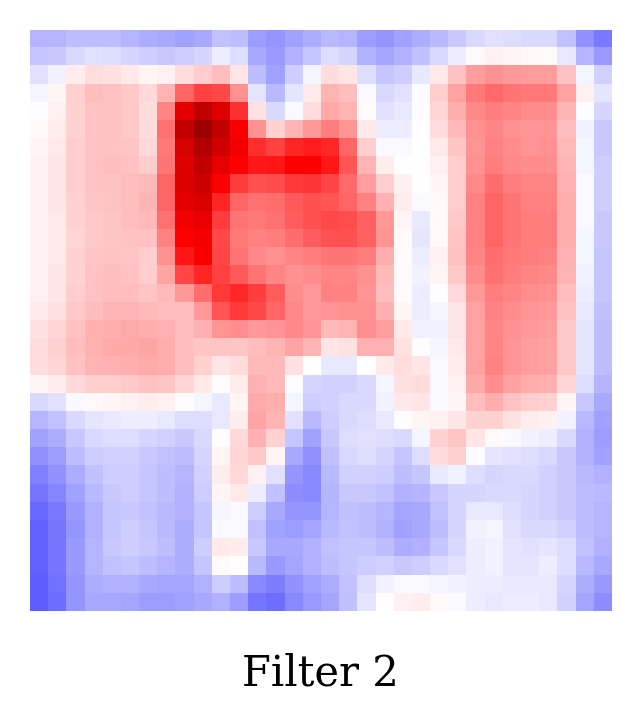}
        \includegraphics[width=0.32\linewidth]{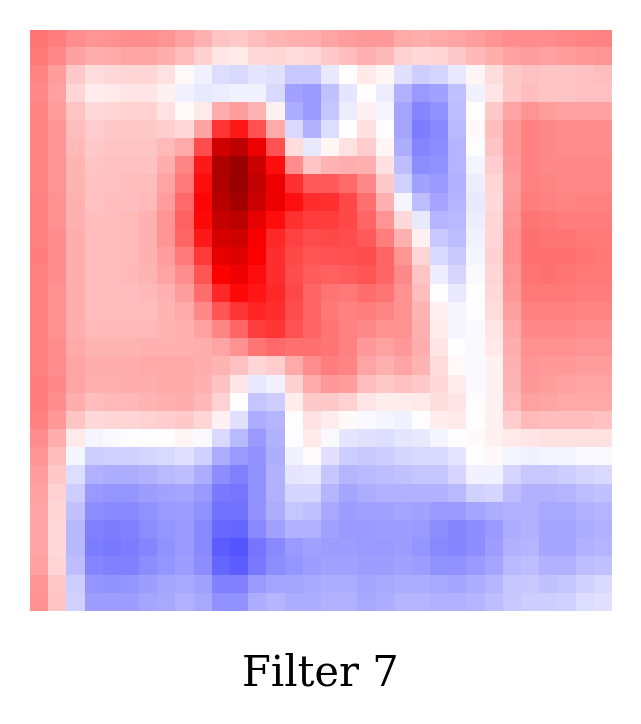}
        \caption{Shared filters across both circuits}
        \label{fig:firstlayer-shared}
    \end{subfigure}
    \quad
    \begin{subfigure}[t]{0.21\linewidth}
        \centering
        \includegraphics[width=\linewidth]{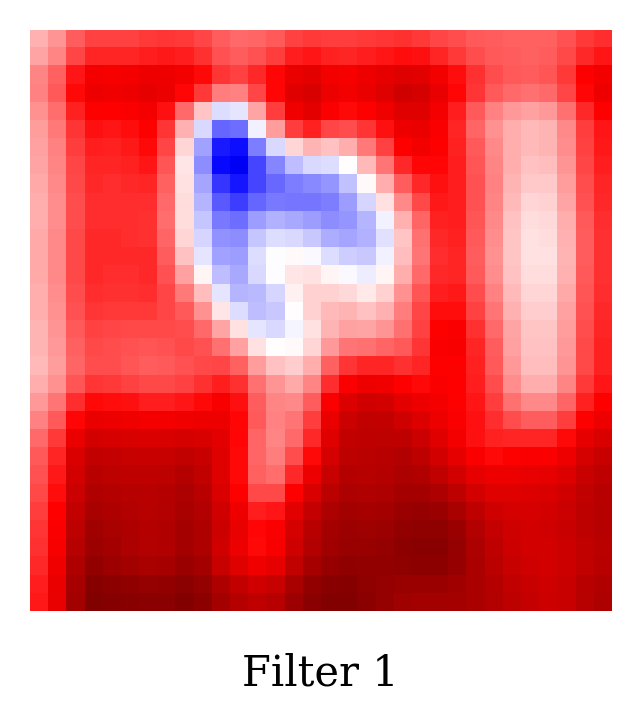}
        \caption{Filter retained only in the provably-robust circuit}
        \label{fig:firstlayer-extra}
    \end{subfigure}

    \caption{
    (a) Activation maps of the shared first-layer filters present in both the provably robust and sampling-based circuits. (b) Activation map of the additional first-layer filter included only in the provably robust circuit. Blue indicates negative activations; red indicates positive activations. White denotes neutral or near-zero values.
}

    \label{fig:conv1-filters}
\end{figure}

To gain further insight, we inspect the activation maps of these filters on the perturbed input (Figure~\ref{fig:conv1-filters}). We examine the three filters shared by both circuits as well as the additional filter unique to the provably robust circuit. For each filter, we extract the signed activations (since this layer does not apply a ReLU), normalize them, and upsample them for visibility. The resulting sign-normalized maps display negative activations in blue and positive activations in red.

Across the three shared filters, we observe in Figure~\ref{fig:firstlayer-shared} that, although they react to and capture aspects of the bird’s contour, all of them assign negative values to the lower region where the noise is concentrated. Filter~0 outputs strongly negative values in this area, while filters~2 and~7 produce values between negative and neutral, passing only a weak signal over that region.

In contrast, the additional filter included appears only in the provably-robust circuit (Figure~\ref{fig:firstlayer-extra}) produces strong positive activations precisely over the lower, noise-affected region. It is the only filter in this layer to do so. This suggests that its inclusion, together with the other filters, may help enrich and stabilize the signal over the perturbed region of the input, potentially contributing to the circuit’s certified robustness under this perturbation. Such an illustrative view suggests a possible connection between the retained components and the circuit’s robustness.

\section{Patching Robustness Certification}
\label{app:patching_robusness_exp}
In this experiment, we evaluate the robustness of discovered circuits when non-circuit components are patched with feasible activations drawn from a \emph{continuous} input range, rather than fixed constants, as defined in Section~\ref{subsec:patching_domaing_guarantees}.
Operationally, for any circuit $C$, we test whether perturbing non-circuit components within the range of activations induced by inputs $\z \in \mathcal{B}^{\infty}_{\epsilon_p}(\x)$ can cause a violation of some metric $\lVert \cdot \rVert_p$ with tolerance $\delta$. 
If no such violation exists, we declare $C$ to be \emph{patching-robust}~\ref{patching_robustness_property}.

We compare three patching schemes within the iterative discovery framework (Algorithm~\ref{alg:general_metric}):  
\begin{enumerate}
\item \textbf{Zero patching:} sets all non-circuit components to zero.  
\item \textbf{Mean patching:} replaces non-circuit components with their empirical means, estimated from 100 randomly selected training samples.  
\item \textbf{Provably robust patching:} verifies robustness across the full range of feasible activations induced by a continuous input domain (Def.~\ref{patching_robustness_property}).  
\end{enumerate}

\subsection{Methodology}

\subsubsection{Patching Siamese Network for Verification}
As an interface for verifying patching robustness, we employ a \emph{Patching Siamese} network with two branches:  
(i) a full-network \emph{patching branch}, used to capture non-circuit activations for patching; and  
(ii) a \emph{circuit branch} restricted to $C$, where every non-circuit component is replaced by the activation of its counterpart in the patching branch. This replacement is implemented through dedicated wiring that copies the required activations from the first branch to the second.  

We examine activations induced by inputs from a neighborhood around $\x$: $\mathcal{Z} = \mathcal{B}^{\infty}_{\epsilon_p}(\x)$.  
As outlined in Section~\ref{subsec:patching_domaing_guarantees}, the siamese network is fed a concatenated input $(\x,\z)$ (along the feature axis for MNIST and the channel axis for CNNs).  
Here, $\x$ is routed to the circuit branch and $\z$ to the patching branch.  
The verification domain is applied only to $\z$, while $\x$ is held fixed.  
This setup simulates the circuit running on $\x$, with its non-circuit activations replaced by those induced from $\z \in \mathcal{Z}$.  
We note that, for numerical stability on the verifier’s side, the circuit-branch input is enclosed in a negligible $10^{-5}$ $L^\infty$ ball.  

For the output criterion, the resulting circuit logits are verified against the logits of the full model, $f_G(\x)$ (precomputed independently of the Siamese construction) under the logit-difference metric with tolerance~$\delta$. This guarantees that the patching-robustness property (Def.~\ref{patching_robustness_property}) holds.

\paragraph{Input Neighborhoods.} 
As in Section~\ref{sec:input_experimental_setup}, we define the neighborhood $\mathcal{Z} = \mathcal{B}^{\infty}_{\epsilon_p}(\mathbf{x})$ using the $\ell_\infty$ norm.

\subsection{Robustness Evaluation and Parameter Variations}
After discovery (using zero, mean, or provably robust patching), we verify the resulting circuits with the Patching Siamese Network over the same $\mathcal{B}^{\infty}_{\epsilon_p}(\mathbf{x})$, reporting circuit size, runtime, and patching robustness. Since typical $\epsilon_p$ values in the literature target \emph{input} perturbations, we use larger values for the patching domain to reflect the broader variability of internal activations while avoiding off-distribution regimes. We report results below for varying $(\epsilon_p,\delta)$ to assess their effects on robustness and size.

\subsubsection{Verification and Experimental Setup (Patching Robustness)}
\label{sec:patching_experimental_setup}

We use the same hardware configuration as in the input-robustness study ~\ref{sec:input_experimental_setup}. We set a Branch-and-bound timeout of \textbf{45} seconds for MNIST, GTSRB, and TaxiNet as in the input experiment. For CIFAR-10, iterative discovery queries in the provably robust method are limited to \textbf{45} seconds, while discovered circuit-robustness evaluations are allowed up to \textbf{120} seconds. Queries that do not complete within these limits are reported as \emph{unknown}. 

As in the input robustness experiment ~\ref{app:input_robusness_exp}, for fairness, we exclude cases where the robustness check for zero or mean patching timed out from the reported robustness statistics. In our main results, the timeout rates were 12\% for MNIST, 2\% for TaxiNet, 6.2\% for GTSRB, and 31\% for CIFAR-10, while in the variations over $\epsilon_p$ (~\ref{app:patching_robustness_epsilon_ablation}) they averaged 0.5\% for TaxiNet and 8.8\% for MNIST. Over the $\delta$ variation (~\ref{app:patching_robustness_delta_ablation}), the average timeout rate was 1.5\% on TaxiNet and 14\% on MNIST.

\subsubsection{Variation of Patching Neighborhood Size $\epsilon_p$}
\label{app:patching_robustness_epsilon_ablation}
We fix the tolerance level $\delta$ and vary the patching neighborhood size $\epsilon_p$. 
For CIFAR-10 we use $\delta{=}0.1$, for MNIST $\delta{=}0.5$, for TaxiNet $\delta{=}0.92$, and for GTSRB $\delta{=}2.0$.  
Table~\ref{tab:patching_epsilon_ablation} extends the main results with additional $\epsilon_p$ variations on the MNIST and TaxiNet benchmarks.

\begin{table}[H]
\centering
\footnotesize
\resizebox{\textwidth}{!}{%
\begin{tabular}{l
| S[table-format=1.3]
| S[table-format=1.3] S[table-format=2.3] l
| S[table-format=1.3] S[table-format=2.3] l
| S[table-format=5.2] S[table-format=2.3] l |}
\toprule
{Dataset} & {$\epsilon_p$}
& \multicolumn{3}{c|}{\textbf{Zero Patching}}
& \multicolumn{3}{c|}{\textbf{Mean Patching}}
& \multicolumn{3}{c}{\textbf{Provably Patching-Robust Patching}} \\
\cmidrule(lr){3-5}\cmidrule(lr){6-8}\cmidrule(lr){9-11}
&
& {Time (s)} & {Size ($|C|$)} & {Rob. (\%)}
& {Time (s)} & {Size ($|C|$)} & {Rob. (\%)}
& {Time (s)} & {Size ($|C|$)} & {Rob. (\%)} \\
\midrule
MNIST ($\delta{=}0.5$) \\
& 0.005 & {0.054 $\pm$ 0.182} & {19.87 $\pm$ 1.55} & {\textbf{87.0 $\pm 3.4$}}
         & {0.013 $\pm$ 0.001} & {19.19 $\pm$ 1.87} & {\textbf{93.0 $\pm 2.6$}}
         & {671.08 $\pm 36.86$} & {11.32 $\pm$ 2.56} & {\textbf{100.0 $\pm 0.0$}} \\
& 0.009 & {0.013 $\pm 0.000$} & {19.92 $\pm 1.51$} & {\textbf{65.9 $\pm 5.0$}}
         & {0.013 $\pm 0.001$} & {19.13 $\pm$ 1.88} & {\textbf{64.8 $\pm 5.0$}}
         & {583.59 $\pm 44.99$} & {16.75 $\pm$ 2.26} & {\textbf{100.0 $\pm 0.0$}} \\
\rowcolor{gray!10}
& \textbf{0.010} & {0.060 $\pm 0.322$} & {19.96 $\pm 1.50$} & {\textbf{58.0 $\pm 5.3$}}
         & {0.016 $\pm 0.003$} & {19.16 $\pm 1.84$} & {\textbf{55.7 $\pm 5.3$}}
         & {714.87 $\pm 207.08$} & {17.03 $\pm 2.30$} & {\textbf{100.0 $\pm 0.0$}} \\
& 0.050 & {0.015 $\pm 0.005$} & {19.76 $\pm 1.50$} & {\textbf{1.2 $\pm 1.2$}}
         & {0.015 $\pm 0.003$} & {19.09 $\pm 1.81$} & {\textbf{1.2 $\pm 1.2$}}
         & {598.41 $\pm 156.96$} & {23.20 $\pm 1.17$} & {\textbf{100.0 $\pm 0.0$}} \\
\midrule
TaxiNet ($\delta{=}0.92$) \\
& 0.005 & {0.061 $\pm 0.183$} & {5.78 $\pm 0.79$} & {\textbf{93.0 $\pm 2.6$}}
         & {0.022 $\pm 0.061$} & {5.38 $\pm 0.65$} & {\textbf{100.0 $\pm 0.0$}}
         & {220.67 $\pm 57.23$} & {4.60 $\pm 0.74$} & {\textbf{100.0 $\pm 0.0$}} \\
& 0.008 & {0.026 $\pm 0.097$} & {5.78 $\pm 0.79$} & {\textbf{62.0 $\pm 4.9$}}
         & {0.008 $\pm 0.001$} & {5.38 $\pm 0.65$} & {\textbf{77.0 $\pm 4.2$}}
         & {168.99 $\pm 44.83$} & {5.26 $\pm 0.54$} & {\textbf{100.0 $\pm 0.0$}} \\
\rowcolor{gray!10}
& \textbf{0.010} & {0.024 $\pm 0.059$} & {5.78 $\pm 0.78$} & {\textbf{57.1 $\pm 5.0$}}
         & {0.025 $\pm 0.068$} & {5.39 $\pm 0.65$} & {\textbf{63.3 $\pm 4.9$}}
         & {175.73 $\pm 52.71$} & {5.41 $\pm 0.59$} & {\textbf{100.0 $\pm 0.0$}} \\
& 0.030 & {0.009 $\pm 0.002$} & {5.78 $\pm 0.79$} & {\textbf{27.0 $\pm 4.4$}}
         & {0.008 $\pm 0.001$} & {5.38 $\pm 0.65$} & {\textbf{40.0 $\pm 4.9$}}
         & {95.31 $\pm 15.54$} & {6.04 $\pm 0.20$} & {\textbf{100.0 $\pm 0.0$}} \\
& 0.050 & {0.024 $\pm 0.058$} & {5.77 $\pm 0.78$} & {\textbf{0.0 $\pm 0.0$}}
         & {0.012 $\pm 0.032$} & {5.37 $\pm 0.65$} & {\textbf{0.0 $\pm 0.0$}}
         & {89.37 $\pm 17.58$} & {7.07 $\pm 0.26$} & {\textbf{100.0 $\pm 0.0$}} \\

\midrule
CIFAR-10 ($\delta{=}0.1$) \\
\rowcolor{gray!10}
& \textbf{0.030} & {0.109 $\pm$ 0.321} & {65.07 $\pm$ 3.00} & {\textbf{46.4} $\pm$ 6.0}
         & {0.046 $\pm$ 0.003} & {64.07 $\pm$ 3.60} & {\textbf{33.3} $\pm$ 5.7}
         & {5408.51 $\pm$ 1091.05} & {65.55 $\pm$ 1.64} & {\textbf{100.0} $\pm$ 0.0} \\
\midrule
GTSRB ($\delta{=}2.0$) \\
\rowcolor{gray!10}
& \textbf{0.005} & {0.284 $\pm$ 0.951} & {32.65 $\pm$ 4.24} & {\textbf{38.0} $\pm$ 4.4}
         & {0.041 $\pm$ 0.009} & {33.40 $\pm$ 4.16} & {\textbf{40.5} $\pm$ 4.5}
         & {2907.17 $\pm$ 721.67} & {34.34 $\pm$ 4.07} & {\textbf{100.0} $\pm$ 0.0} \\

\bottomrule
\end{tabular}
}
\caption{Variations on patching neighborhood size $\epsilon_p$ with fixed tolerance $\delta$.
Reported values are means with standard deviations (formatted as \{mean $\pm$ std\}). For robustness (a binary outcome), we report the mean robustness with its standard error (SE), and display robustness values in \textbf{bold}. Rows highlighted in gray correspond to the results selected in the main paper.}
\label{tab:patching_epsilon_ablation}
\end{table}

\subsubsection{Variation of tolerance tolerance $\delta$}
\label{app:patching_robustness_delta_ablation}
We fix $\epsilon_p$ and examine various tolerance values. As in the main paper results, we use $\epsilon_p{=}0.01$ for MNIST and Taxinet and vary the tolerance $\delta$. Results are reported in Table~\ref{tab:patching_delta_ablation_mnist}

\begin{table}[H]
\centering
\footnotesize
\resizebox{\textwidth}{!}{%
\begin{tabular}{l
| S[table-format=1.3]
| S[table-format=1.3] S[table-format=2.2] l
| S[table-format=1.3] S[table-format=2.2] l
| S[table-format=5.2] S[table-format=2.3] l |}
\toprule
{Dataset} & {$\delta$}
& \multicolumn{3}{c|}{\textbf{Zero Patching}}
& \multicolumn{3}{c|}{\textbf{Mean Patching}}
& \multicolumn{3}{c}{\textbf{Provably Robust Patching}} \\
\cmidrule(lr){3-5}\cmidrule(lr){6-8}\cmidrule(lr){9-11}
&
& {Time (s)} & {Size ($|C|$)} & {Rob. (\%)}
& {Time (s)} & {Size ($|C|$)} & {Rob. (\%)}
& {Time (s)} & {Size ($|C|$)} & {Rob. (\%)} \\
\midrule
MNIST ($\epsilon_p{=}0.01$) \\
& 0.25 & {0.142 $\pm$ 0.476} & {21.36 $\pm$ 1.87} & \textbf{48.6} $\pm$ 5.9
       & {0.013 $\pm$ 0.000} & {21.22 $\pm$ 1.74} & \textbf{52.8} $\pm$ 5.9
       & {523.67 $\pm$ 41.31} & {19.42 $\pm$ 2.01} & \textbf{100.0} $\pm$ 0.0 \\[0.25em]
\rowcolor{gray!10}
& \textbf{0.50} & {0.060 $\pm$ 0.322} & {19.96 $\pm$ 1.50} & \textbf{58.0} $\pm$ 5.3
       & {0.016 $\pm$ 0.003} & {19.16 $\pm$ 1.84} & \textbf{55.7} $\pm$ 5.3
       & {714.87 $\pm$ 207.08} & {17.03 $\pm$ 2.30} & \textbf{100.0} $\pm$ 0.0 \\[0.25em]
& 1.00 & {0.013 $\pm$ 0.000} & {16.88 $\pm$ 1.85} & \textbf{61.2} $\pm$ 4.9
       & {0.013 $\pm$ 0.001} & {16.11 $\pm$ 1.93} & \textbf{66.3} $\pm$ 4.8
       & {690.42 $\pm$ 44.38} & {11.37 $\pm$ 2.64} & \textbf{100.0} $\pm$ 0.0 \\
\midrule
TaxiNet ($\epsilon_p{=}0.01$) \\
& 0.50 & {0.011 $\pm$ 0.003} & {6.86 $\pm$ 0.78} & \textbf{74.0} $\pm$ 4.4
       & {0.010 $\pm$ 0.001} & {5.83 $\pm$ 0.40} & \textbf{80.0} $\pm$ 4.0
       & {157.67 $\pm$ 33.68} & {6.00 $\pm$ 0.00} & \textbf{100.0} $\pm$ 0.0 \\[0.25em]
& 0.80 & {0.080 $\pm$ 0.210} & {6.04 $\pm$ 0.77} & \textbf{57.1} $\pm$ 5.0
       & {0.010 $\pm$ 0.001} & {5.49 $\pm$ 0.54} & \textbf{61.2} $\pm$ 4.9
       & {204.38 $\pm$ 40.04} & {5.68 $\pm$ 0.55} & \textbf{100.0} $\pm$ 0.0 \\[0.25em]
\rowcolor{gray!10}

& \textbf{0.92} & {0.024 $\pm 0.059$} & {5.78 $\pm 0.78$} & {\textbf{57.1 $\pm 5.0$}}
         & {0.025 $\pm 0.068$} & {5.39 $\pm 0.65$} & {\textbf{63.3 $\pm 4.9$}}
         & {175.73 $\pm 52.71$} & {5.41 $\pm 0.59$} & {\textbf{100.0 $\pm 0.0$}} \\

& 1.20 & {0.011 $\pm$ 0.001} & {5.44 $\pm$ 0.96} & \textbf{46.9} $\pm$ 5.0
       & {0.010 $\pm$ 0.003} & {5.20 $\pm$ 0.79} & \textbf{58.2} $\pm$ 5.0
       & {246.14 $\pm$ 50.82} & {5.15 $\pm$ 0.51} & \textbf{100.0} $\pm$ 0.0 \\
\bottomrule
\end{tabular}
}
\caption{Variation of tolerance level $\delta$ with fixed patching neighborhood size $\epsilon_p=0.01$. 
Reported values are means with standard deviations (formatted as mean $\pm$ std). 
For robustness (a binary outcome), we report the mean robustness with its standard error (SE), and display robustness means in \textbf{bold}. 
Rows highlighted in gray correspond to the results selected in the main paper.}
\label{tab:patching_delta_ablation_mnist}
\end{table}

\section{Exploring Circuit Minimality Guarantees}
\label{app:exploring_minimality_exp}

In this experiment, we examine circuits that must simultaneously satisfy both input-robustness (Def.~\ref{circuit_robustness_Def}) and patching-robustness (Def.~\ref{patching_robustness_property}), as introduced in Section~\ref{subsec:patching_domaing_guarantees}. Specifically, we define $\Phi$ to require that circuits remain robust within the input neighborhood $\mathcal{Z} = \mathcal{B}^\infty_{\epsilon_{\mathrm{in}}}(\mathbf{x})$, when non-circuit components are patched with values drawn from the patching neighborhood $\mathcal{Z}' = \mathcal{B}^\infty_{\epsilon_{\mathrm{p}}}(\mathbf{x})$.

Thus, two domains are involved: one for inputs and one for obtaining activations used in patching. Here, $\epsilon_{\mathrm{in}}$ and $\epsilon_{\mathrm{p}}$ denote the radii of the respective $L^\infty$-balls. In our setup, we use $\epsilon_{\mathrm{in}}=0.01$, $\epsilon_{\mathrm{p}}=0.012$, and $\delta=2.0$.



\subsection{Verifying Simultaneous Input- and Patching-Robustness with Tripled Siamese}
We certify simultaneous input- and patching-robustness using a \emph{tripled Siamese network} with three branches, each evaluated on its designated domain:  
(i) a full-network \emph{patching branch}, which processes inputs $\z' \in \mathcal{B}^\infty_{\epsilon_{\mathrm{p}}}(\x)$ to capture activations for use as patching values;  
(ii) the \emph{full network}, which processes inputs $\z \in \mathcal{B}^\infty_{\epsilon_{\mathrm{in}}}(\x)$ to provide reference logits $f_G(\z)$; and  
(iii) the \emph{circuit branch}, also evaluated on $\z \in \mathcal{B}^\infty_{\epsilon_{\mathrm{in}}}(\x)$ but restricted to $C$, where non-circuit components are masked and instead receive transplanted activations from the patching branch.  

This special wiring enables the direct transfer of non-circuit activations, allowing the verifier to certify that the circuit logits $f_C(\z)$ remain faithful to $f_G(\z)$ across the input neighborhood $\mathcal{B}^\infty_{\epsilon_{\mathrm{in}}}(\x)$ under patching values induced by $\mathcal{B}^\infty_{\epsilon_{\mathrm{p}}}(\x)$, thereby establishing the simultaneous input- and patching-robustness property.


We evaluate three discovery strategies under this setting, all applied with the combined robustness predicate $\Phi$ defined above:
\begin{enumerate}
    \item \textbf{Iterative discovery:} Algorithm~\ref{alg:general_metric}. 
    \item \textbf{Quasi-minimal search:} Algorithm~\ref{alg:qmsc}. 
    \item \textbf{Blocking-sets MHS method:} Algorithm~\ref{alg:contrastive_mhs_overall}, leveraging circuit blocking set duality to approximate cardinally minimal circuits.  
\end{enumerate}

\label{fig:tripled_net}

\subsection{Blocking-sets MHS method: Analysis and Experimental Details}
\label{app:contrastive_mhs}



\textbf{Experimental setup.}  
We conduct experiments on the MNIST network (34 hidden neurons). Since contrastive subsets are enumerated in increasing order of size, the number of verification queries grows combinatorially. Even when restricted to subset sizes $t \in \{1,2,3\}$, each batch requires thousands of verification calls. To keep computations tractable, we evaluate singletons ($k=1$ per batch) and enforce a 30-second timeout per query. In one rare case, the procedure produced an \emph{empty circuit} (size~$0$), as $\Phi$ held vacuously under a particular choice of environments and metric, eliminating all components. This case was excluded from the reported results.

\textbf{Parallelism.}  
Unlike iterative discovery, where elimination steps are sequentially dependent, the verification of contrastive subsets is independent. This independence allows full parallelization: we distribute verification queries across 14 workers, with runtime scaling nearly linearly with the number of workers. 

\textbf{Properties.}  
Under monotonic $\Phi$, the MHS method yields either (i) a lower bound on the size of any cardinally minimal circuit, or (ii) when the hitting set itself satisfies $\Phi$, a certified cardinally minimal circuit. Although more computationally expensive than iterative discovery, MHS provides strictly stronger guarantees: if the hitting set is valid, the result is provably cardinally minimal; otherwise, its size gives a tight lower bound that exposes whether iterative discovery reached cardinal minimality and quantifies any gap.

\section{Disclosure: Usage of LLms}
\label{app:llm-disclosure}
An LLM was used solely as a writing assistant to correct grammar, fix typos, and enhance clarity. It played no role in generating research ideas, designing the study, analyzing data, or interpreting results; all of these tasks were carried out exclusively by the authors.

\end{document}